\documentclass[11pt]{article}
\makeatletter
\@ifundefined{ifinsightrevision}{\newif\ifinsightrevision\insightrevisionfalse}{}
\makeatother
\usepackage{booktabs}
\usepackage[final]{acl}
\usepackage{amsmath}
\usepackage{amssymb}
\usepackage{amsthm}
\usepackage{booktabs}
\usepackage{graphicx}
\usepackage{svg}
\usepackage{times}
\usepackage{latexsym}
\usepackage{enumitem}

\usepackage[T1]{fontenc}

\usepackage[utf8]{inputenc}

\usepackage{microtype}

\usepackage[most]{tcolorbox}
\tcbset{
  promptcard/.style={
    enhanced,
    breakable,
    width=\linewidth,
    colback=black!3,
    colframe=black!70,
    colbacktitle=black!70,
    coltitle=white,
    fonttitle=\bfseries\footnotesize,
    boxrule=0.6pt,
    arc=2pt,
    outer arc=2pt,
    left=2mm,
    right=2mm,
    top=1.5mm,
    bottom=1.5mm,
    toptitle=0.8mm,
    bottomtitle=0.8mm,
    before skip=5pt,
    after skip=7pt
  }
}
\newtcolorbox{SystemPromptBox}{
  promptcard,
  title={System Prompt},
  fontupper=\rmfamily\footnotesize
}
\NewTCBListing{UserPromptBox}{!O{}}{
  promptcard,
  title={User Prompt Template},
  listing only,
  listing options={
    basicstyle=\rmfamily\footnotesize,
    breaklines=true,
    breakatwhitespace=false,
    columns=fullflexible,
    keepspaces=true,
    showstringspaces=false
  },
  #1
}

\newtcblisting{AlgorithmBox}[2][]{
  enhanced,
  breakable,
  width=\columnwidth,
  colback=black!2,
  colframe=black!65,
  colbacktitle=black!65,
  coltitle=white,
  fonttitle=\bfseries\footnotesize,
  title={#2},
  boxrule=0.6pt,
  arc=2pt,
  outer arc=2pt,
  left=1.5mm,
  right=1.5mm,
  top=1mm,
  bottom=1mm,
  toptitle=0.7mm,
  bottomtitle=0.7mm,
  before skip=5pt,
  after skip=7pt,
  listing only,
  listing options={
    basicstyle=\ttfamily\scriptsize,
    breaklines=true,
    breakatwhitespace=false,
    columns=fullflexible,
    keepspaces=true,
    showstringspaces=false
  },
  #1
}

\usepackage{inconsolata}

\usepackage{graphicx}

\newcommand{\tool}{InsightRAG~}
\newcommand{\toole}{InsightRAG}

\ifinsightrevision
  \newcommand{\lp}[1]{#1}
  \newcommand{\cltabref}[1]{\hyperref[#1]{Table~\ref*{#1}}}
  \def\clrefsetprefix#1:#2\clrefstop{%
    \def\clreftype{#1}\def\clrefapptype{app}%
    \ifx\clreftype\clrefapptype\def\clrefprefix{Appendix}\fi
  }
  \newcommand{\clsecref}[2]{%
    \begingroup
    \def\clrefprefix{Section}%
    \clrefsetprefix#1\clrefstop
    \hyperref[#1]{\clrefprefix~\ref*{#1} (#2)}%
    \endgroup
  }
\fi

\theoremstyle{plain}
\newtheorem{theorem}{Theorem}
\newtheorem{proposition}{Proposition}

\theoremstyle{remark}

\DeclareRobustCommand{\yuquan}[1]{%
  \if\relax\detokenize{#1}\relax
    \color[rgb]{0, 0, 0}%
  \else
    \textcolor[rgb]{0, 0, 0}{#1}%
  \fi
}
\tcbset{
  yuquan prompt/.style={
    listing options={
      basicstyle=\rmfamily\footnotesize\yuquan{},
      breaklines=true,
      breakatwhitespace=false,
      columns=fullflexible,
      keepspaces=true,
      showstringspaces=false
    }
  },
  yuquan algorithm/.style={
    listing options={
      basicstyle=\ttfamily\scriptsize\yuquan{},
      breaklines=true,
      breakatwhitespace=false,
      columns=fullflexible,
      keepspaces=true,
      showstringspaces=false
    }
  },
  yuquan orange algorithm/.style={
    listing options={
      basicstyle={\ttfamily\scriptsize\yuquan{}\color[rgb]{0, 0, 0}},
      breaklines=true,
      breakatwhitespace=false,
      columns=fullflexible,
      keepspaces=true,
      showstringspaces=false
    }
  }
}
\newcommand{\yuquanbibliographyitem}[2][]{%
  \normalcolor
  \yuquanoriginalbibitem[#1]{#2}%
  \ifcsname yuquannewbib@#2\endcsname
    \yuquan{}%
  \fi
}

\title{InsightRAG: Aggregating Distributed Scholarly Judgments for Scientific Survey Generation}

\author{
  Yuquan Lin$^{1}$\thanks{First author: \texttt{ylinfp@connect.ust.hk}},
  Wenshuo Zhang$^{1}$\thanks{Corresponding author: \texttt{wzhangeb@connect.ust.hk}},
  Yilin Ye$^{1}$, Shuchang Xu$^{1}$ \\
  {\bf Zhao Chen$^{2}$, Weiqi Wang$^{1}$, Huamin Qu$^{1}$} \\
  {\normalfont $^{1}$The Hong Kong University of Science and Technology} \\
  {\normalfont $^{2}$The Hong Kong University of Science and Technology (Guangzhou)}
}

\begin{document}
\ifinsightrevision
  \onecolumn
  \nolinenumbers
  \pagenumbering{roman}
  \pdfbookmark[0]{Cover Letter -- Submission 2120}{revision-cover-letter}
  \begin{center}
{\Large \lp{Reviewer-Wise Cover Letter}}\\[4pt]
{\large \lp{Round-Two Revision of Submission 2120}}
\end{center}


\lp{We thank the reviewers for their careful assessment of Submission 2120 and for recognizing the value of author-written Related Work for literature discovery, organization, and citation grounding. Their concerns fall into four groups, and the revision addresses each in turn.}

\lp{\textbf{Positioning and intended use.} We expand and reorganize the positioning in prior work on citation intent and importance, Related Work exploration, citation-based survey generation, and scholarly taxonomy construction, and replace the blanket ``untyped'' characterization with a conditional distinction (TDxV-1). We clarify how the research request is structured into a topic scope, where the initial seed comes from, and how it is verified before expansion, and we report concrete use cases from the expert interviews (TDxV-2).}

\lp{\textbf{Evidence beyond self-consistent LLM judgments.} We remove method names and source-path information from judge-facing inputs and rerun the taxonomy, key-point, and survey-quality evaluations, now scoring final surveys with three LLM judges (69jN-2). We add a passage-based citation audit of 84 claim--citation pairs against cited full texts, with 24 pairs checked manually (69jN-1). We add a method-blinded comparison by six domain experts on two topics, followed by semi-structured interviews (NqDY-2). We also distinguish what each taxonomy measure supports: correspondence to a human-survey reference, alignment with author-written evidence, or expert preference (NqDY-1).}

\lp{\textbf{Fairness and robustness of the comparisons.} We report baseline configurations, including candidate sources, cutoffs, and totals, and separate native-system comparisons from controlled retrieval comparisons; we also report backbone models, principal budgets, and available runtime and token estimates, and list the ten main-study seeds with their selection rule (69jN-3). We run seed-sensitivity experiments with three seeds on each of two topics (69jN-6, NqDY-3).}

\lp{\textbf{Scope of the claims.} We remove the approximation-guarantee statement, give the complete formulation with an NP-hardness proof, and clarify its relationship to greedy construction and LLM refinement (69jN-5). RW Tree patterns remain auxiliary structural evidence during global refinement, without a claim that they independently drive the performance gains; the ablation now evaluates refinement as a whole (69jN-4). We revise the interpretation of the results to strong taxonomy organization and comparable final-survey quality, together with higher citation support and reference reliability, rather than uniform superiority (69jN-2). We qualify claims about field-wide consensus as aggregation of recurring author-written perspectives within the retrieved literature (TDxV-4, NqDY-4), and qualify claims about recent-work coverage by clarifying forward expansion and adding the corresponding limitation (TDxV-3).}

\lp{The responses below explain each change, give the relevant evidence, and identify its location in the revised manuscript.}

\section*{\lp{Response to Reviewer TDxV}}

\paragraph{\lp{TDxV-1: Positioning within prior work.}}
\begin{itemize}
  \item \lp{\textbf{Concern.} The reviewer found the positioning too concentrated on recent LLM survey systems, with many references serving only as evaluation sources. The review identified relevant precedents in citation intent and importance, Related Work exploration, citation-based survey generation, and scholarly taxonomy construction, and questioned the broad statement that prior work treats citations as ``methodologically untyped.'' Relatedly was particularly relevant because it also treats Related Work sections as a record of distributed expert perspectives.}
  \item \lp{\textbf{Response.} We expanded and reorganized Related Work around literature discovery and understanding, taxonomy construction, and survey generation. The first part now discusses SciCite's labels for background, method, and result comparison, Valenzuela et al.'s distinction between important and incidental citations, and Relatedly's support for exploring Related Work paragraphs and developing topic outlines. The taxonomy discussion includes CHIME, HiGTL, and Zhu et al.'s TaxoBench-CS, while the survey-generation discussion adds Mohammad et al.'s use of citation texts and multi-document summarization. These additions establish the relevant precedents rather than presenting citation semantics or reuse of Related Work as new in themselves. We also replace the blanket ``untyped'' statement with a conditional distinction: citation connectivity alone does not preserve how a citing author describes earlier work. Our contribution is to extract those descriptions as attributed relational evidence, use them to discover a topic-specific scope, aggregate concept support across papers to organize that scope, and carry the organization into evidence-bounded writing. Citation-importance modeling offers a complementary way to distinguish the roles of individual citations.}
  \item \lp{\textbf{Where changed.} \clsecref{sec:related_work}{Related Work} now presents all three strands in that order. \clsecref{app:full_rw_citation}{Citation-Aware Scholarly Retrieval and Relation Modeling} expands the citation-semantics and Relatedly comparisons; \clsecref{app:full_rw_taxonomy}{Scholarly Taxonomy Construction} distinguishes content and graph representations from citing-author descriptions; and \clsecref{app:full_rw_asg}{Automatic Survey Generation} connects citation-text summarization to recent pipelines and explains how attributed relations support our retrieval-to-generation workflow.}
\end{itemize}

\paragraph{\lp{TDxV-2: Use case, query, and initial seed.}}
\begin{itemize}
  \item \lp{\textbf{Concern.} The reviewer found the intended usage difficult to picture because the query's breadth and specificity, the source of the seed paper, and the target user were unclear. In particular, possessing an in-scope seed might appear to require enough prior knowledge that the system would be less useful to someone entering a new area. The review requested concrete examples and use cases.}
  \item \lp{\textbf{Response.} We clarify the distinction between the research request, the topic scope, and the seed. The request is converted into a structured scope specifying the problem, focus, inclusion and exclusion boundaries, and any temporal requirement. It guides admission decisions; citation expansion starts from the seed rather than using the request directly as a keyword query for the entire literature. A researcher can therefore have a clear question and one relevant starting paper while still lacking the literature map needed to organize a review. The seed may be user-provided or obtained through search or an LLM, and the pipeline resolves its bibliographic record and verifies its fit before expansion. For reproducibility, we now list the ten manually selected, representative, high-impact, in-scope main-study seeds alongside the evaluation topics and source surveys. The six-expert interviews add concrete usage evidence: experts envisioned entering a new or adjacent area, organizing an initial review, identifying gaps, and positioning their own work. Their explanations emphasize topic-specific organization and understanding how different methods address the target problem. These observations clarify the practical role of the output as a starting scaffold for researcher inspection and revision.}
  \item \lp{\textbf{Where changed.} \clsecref{sec:method_workflow}{Workflow and Problem Formulation} defines the scope--seed input, and \clsecref{sec:method_scope_reconstruction}{Research-Scope Reconstruction} explains structuring and seed verification. \clsecref{app:evaluation-dataset}{Evaluation Dataset and Baseline Configurations} gives the ten topic titles, source surveys, and scope-construction procedure in and around \cltabref{app-dataset}. \clsecref{app:algorithm_retrieval}{Insight-Guided Retrieval} adds the main-study seed-selection rule and \cltabref{tab:initial_seeds} with the ten seed identities. \clsecref{app:expert_comparison}{Method-Blinded Expert Comparison} reports the interview-based use cases.}
\end{itemize}

\paragraph{\lp{TDxV-3: Coverage of recent work.}}
\begin{itemize}
  \item \lp{\textbf{Concern.} The reviewer identified delayed coverage of novel work as a critical limitation of relying on citation edges and citation counts. This delay could be especially consequential in rapidly developing areas, or longer in fields where citation accumulation and indexing proceed more slowly.}
  \item \lp{\textbf{Response.} We distinguish access through a citation relation from ranking by a candidate's citation count. Forward expansion retrieves later papers that cite an accepted paper; a new paper can thus be reached before it has accumulated citations of its own. Admission then depends on the target scope and supporting context, rather than a minimum citation count. This mechanism does not eliminate the coverage limitation. Papers that are not yet indexed or have no connection to the explored graph can remain unreachable. When a forward candidate pool exceeds the processing budget, prioritization by citation impact and recency may also omit valuable low-citation papers. We add this limitation explicitly and retain the candidate-prioritization description in Method so readers can see where the risk arises. The academic-profile analysis shows that the retrieved scope includes recent work in the tested topics, but the revised discussion does not equate that observation with exhaustive coverage of newly released literature.}
  \item \lp{\textbf{Where changed.} The ``Forward operator'' paragraph in \clsecref{sec:method_scope_reconstruction}{Research-Scope Reconstruction} explains the retrieval direction and prioritization when candidate pools are too large. The corresponding pseudocode appears in \clsecref{app:algorithm_retrieval}{Insight-Guided Retrieval}. The added ``Coverage of recent work'' paragraph in the Limitations identifies unindexed, disconnected, and low-citation papers as potential omissions. The Year Gap comparison remains in \clsecref{sec:exp_rq12}{Academic Profile of the Scope}.}
\end{itemize}

\paragraph{\lp{TDxV-4: Contribution-oriented Related Work perspectives.}}
\begin{itemize}
  \item \lp{\textbf{Concern.} The reviewer noted that Related Work sections are written to support individual contributions. Authors may characterize earlier work through the specific limitations their own paper addresses, rather than giving a balanced account of its strengths. The review asked how these local, contribution-oriented perspectives affect the resulting organization.}
  \item \lp{\textbf{Response.} We agree that author-written passages are attributed scholarly perspectives rather than objective descriptions of the entire field. The method preserves the source passages and collects accounts from multiple citing papers, allowing overlapping and conflicting concept assignments to remain visible in the candidate evidence. Aggregating recurring support reduces dependence on a single paper's framing, but it can still preserve shared omissions, selective emphasis, or systematic bias across authors. We therefore qualify the interpretation of the resulting taxonomy throughout the framing: it reflects recurring author-written perspectives in the retrieved literature. The Limitations describe how shared omissions or biases may carry over into the resulting taxonomy and survey, and how citation traversal may leave some parts of the field underrepresented. This qualification retains the value of existing human synthesis while making its provenance and coverage boundary clear. We also retain the recommendation that researchers inspect and revise the generated scope, taxonomy, and survey rather than treating them as authoritative substitutes for literature analysis.}
  \item \lp{\textbf{Where changed.} The Abstract and \clsecref{sec:introduction}{Introduction}, including the contribution statement, use the qualified description of recurring author-written perspectives. The ``Candidate concept space'' paragraph in \clsecref{sec:method_taxonomy}{Minimum-Conflict Taxonomy Construction and Survey Generation} explains how conflicting assignments and their support weights are retained. The ``Scope of author perspectives'' and ``Risks of over-reliance'' paragraphs in the Limitations state the remaining biases and the need for researcher verification.}
\end{itemize}

\section*{\lp{Response to Reviewer 69jN}}

\paragraph{\lp{69jN-1: Passage-based citation support and human audit.}}
\begin{itemize}
  \item \lp{\textbf{Concern.} The reviewer found the central citation-faithfulness evidence insufficient to establish that the cited papers actually support the generated claims. The review requested evaluation against passages in the cited full texts and human auditing of the automatic judgments, going beyond citation-entry checks and reference existence.}
  \item \lp{\textbf{Response.} We add a passage-based evaluation covering 84 claim--citation pairs: two topics, six methods, and seven pairs per topic--method cell. The Qwen-Plus judge receives no method identity. For the 74 pairs with a retrievable cited source, it reads the extracted full text and returns page-marked supporting evidence or the closest passage; the ten unverifiable references are counted as Unsupported. The reported binary comparison combines fully and partially supported automatic labels as Supported, so the rate indicates at least partial passage support. We additionally audit 24 pairs manually, two from each topic--method cell, with method identities withheld. \tool has the highest automatic support rate, 13/14 (92.9\%), compared with 11/14 for Vanilla LLM and SurveyGen, 10/14 for Deep Research, 4/14 for SurveyForge, and 2/14 for AutoSurvey. In the human subset, \tool and Vanilla LLM each have 4/4 supported pairs. Automatic and human judgments agree on 20/24 pairs (83.3\%, $\kappa=0.655$), with all four disagreements reflecting more lenient automatic judgments. This adds source-passage evidence and a direct human check while keeping the limited sample size visible.}
  \item \lp{\textbf{Where changed.} We add \clsecref{sec:exp_passage_faithfulness}{Passage-Based Citation Faithfulness} before the ablation study, with separate Setting and Results paragraphs. \clsecref{app:passage_faithfulness}{Passage-Based Citation Faithfulness Audit} provides source availability, label aggregation, audit size, agreement, and the complete six-method \cltabref{tab:passage_faithfulness}. \clsecref{app:prompt_passage_faithfulness}{Passage-Based Citation Support Judgment} gives the full-text judgment template. \clsecref{sec:exp_rq32}{Content Correspondence and Citation Support of Generated Surveys} retains the ten-topic citation-entry results as complementary evidence.}
\end{itemize}

\paragraph{\lp{69jN-2: Method-blinded evaluation and revised comparisons.}}
\begin{itemize}
  \item \lp{\textbf{Concern.} The reviewer requested removal of method names from evaluation prompts to reduce potential judge bias. Because the taxonomy, content, and final-survey assessments rely substantially on LLM judgments, identifying the generating system could influence the scores independently of the quality of the artifact.}
  \item \lp{\textbf{Response.} We remove method names and source-path information from the relevant judge-facing inputs and rerun the taxonomy-outline, key-point-recall, and survey-quality evaluations. The revised taxonomy average is highest for \tool at 3.6000, with leading Structural Balance (3.7500), Hierarchical Clarity (3.5000), and Logical Organization (3.6750); Deep Research remains slightly higher on Topic Uniqueness. \tool ranks second on Key-Point Recall at 60, behind AutoSurvey at 63. Final surveys are scored by GPT-5.2, Gemini 3.1 Pro, and DeepSeek V4 Pro for Coverage, Relevance, and Structure, giving 30 topic--judge averages per method. \tool is close to Vanilla LLM under GPT-5.2 (4.42 versus 4.48) and tied under Gemini (5.00) and DeepSeek (4.67). We revise the interpretation accordingly: the results support strong taxonomy organization and comparable final-survey quality, together with higher citation support and reference reliability. For example, \cltabref{tab:survey_faithfulness} reports Citation Recall/Precision of 0.856393/0.629516 and VRR of 0.995528 for \tool, versus 0.441418/0.406859 and 0.550306 for Vanilla LLM. These distinct measures explain the contribution more precisely than a claim of uniform superiority in content scores.}
  \item \lp{\textbf{Where changed.} The method-blinded outline scores and interpretation appear in \cltabref{tab:taxonomy_quality} and \clsecref{sec:exp_rq2}{Taxonomy Construction}. Updated Key-Point Recall appears in \cltabref{tab:survey_faithfulness}, interpreted in \clsecref{sec:exp_rq32}{Content Correspondence and Citation Support of Generated Surveys}. \clsecref{sec:exp_rq31}{LLM-Judged Survey Quality} describes the three-judge protocol and revised conclusion; \clsecref{app:judge_results}{Detailed LLM-Judge Results} gives all dimension-level scores in \cltabref{tab:survey_llm_judges}. \clsecref{app:evaluation_prompts}{Evaluation Prompt Templates} supplies the revised taxonomy, survey-quality, and key-point judgment inputs.}
\end{itemize}

\paragraph{\lp{69jN-3: Baseline settings, seeds, and resource reporting.}}
\begin{itemize}
  \item \lp{\textbf{Concern.} The reviewer found the backbone models, actual retrieval budgets, initial seeds, and baseline settings insufficiently specified. Without those conditions and runtime/token figures, the large performance gaps were difficult to attribute fairly: differences could reflect candidate access, retrieval volume, model choice, or computation as well as the proposed evidence use.}
  \item \lp{\textbf{Response.} We report the comparison conditions explicitly. Offline BM25 and Dense use the same frozen corpus of 1,064 papers and retrieve the top 50 per topic, totaling 500 each. Online BM25 and Dense use Semantic Scholar candidate pools, with top-200 and top-100 cutoffs and totals of 1,995 and 993; Dense uses BGE-base-en-v1.5 for ranking. Both follow the same RQ1 protocol within each access setting. SurveyGen, SurveyForge, and AutoSurvey are evaluated from their native system outputs, with 246, 1,063, and 2,063 papers respectively, so these end-to-end comparisons are distinguished from controlled retrieval comparisons. For \tool, we identify Qwen-Max for insight-guided retrieval and global refinement, deterministic greedy taxonomy construction, and Qwen-Plus for survey generation. The reported limits are retrieval depth four, paper budget 400, and maximum taxonomy depth four. We list the ten main-study seeds and their manual selection rule. Resource reporting gives retrieval input estimates of 1.4M--43.7M tokens per topic (median 9.1M), recorded refinement inputs of 4.7k--31.1k, and a generation final-output proxy of 14k--84k. Across ten topics, retrieval produces 1,879 papers, averaging 187.9 per topic; survey generation takes 5.0--28.2 minutes (median 6.4). We identify these accounting bases separately because they cover different stages and do not provide a complete end-to-end cost total.}
  \item \lp{\textbf{Where changed.} \clsecref{app:baseline_settings}{Baseline Configurations} introduces the comparison conditions, lists candidate sources, cutoffs, and totals in \cltabref{tab:baseline_settings}, and states the scope of the native-system comparisons. \clsecref{app:implementation_budget}{Backbone Models, Retrieval Budgets, Runtime, and Token Usage} gives the stage-level model, limit, and token table (\cltabref{tab:implementation_budget}), followed by the retrieved-paper total and generation-time statistics. \clsecref{app:algorithm_retrieval}{Insight-Guided Retrieval} adds the selection rule and ten seeds in \cltabref{tab:initial_seeds}. \clsecref{sec:exp_rq11}{Scope Coverage and Correctness} points readers to the seed appendix, and the Method overview now points to the implementation details and method prompts.}
\end{itemize}

\paragraph{\lp{69jN-4: RW Tree evidence and component attribution.}}
\begin{itemize}
  \item \lp{\textbf{Concern.} The reviewer questioned whether the distinctive contribution had sufficiently strong component controls. In the original submission, removing RW Tree evidence changed outline quality only from approximately 4.17 to 4.03 and barely changed assignment accuracy. Those observations did not support attributing the overall gains specifically to RW Tree evidence in refinement.}
  \item \lp{\textbf{Response.} Our follow-up control did not show a consistent independent benefit from RW-tree refinement, and we revise the component claim accordingly. RW Trees remain part of the implemented pipeline: they guide backward retrieval, and recurring category and parent--child patterns provide auxiliary structural evidence during global refinement. The primary taxonomy evidence consists of concept phrases and support weights extracted from author-written citation contexts; refinement also receives the scope definition and compact paper and concept profiles. We state these roles consistently rather than treating RW Tree evidence as the key source of performance gains. The revised main ablation evaluates taxonomy refinement as a whole. With the method-blinded evaluation, removing refinement reduces OutlineAvg from 3.6000 to 0.4813 and paper-assignment accuracy from 0.747758 to 0.257588, while HSR is slightly higher without refinement. These results support the role of the complete refinement stage in organization and assignment; they do not isolate the contribution of its auxiliary RW Tree input. The contribution statement therefore centers on author-written citation relations across scope reconstruction, taxonomy construction, and evidence-bounded generation.}
  \item \lp{\textbf{Where changed.} \clsecref{sec:method_workflow}{Workflow and Problem Formulation} identifies RW Tree patterns as auxiliary structural evidence. The ``Global refinement and generation'' paragraph in \clsecref{sec:method_taxonomy}{Minimum-Conflict Taxonomy Construction and Survey Generation} specifies all refinement inputs. \clsecref{app:algorithm_taxonomy}{Minimum-Conflict Taxonomy Construction} and \clsecref{app:prompt_global_taxonomy_refinement}{Global Taxonomy Structural Refinement} give the corresponding algorithm and actual prompt. \clsecref{sec:exp_ablation}{Ablation Study} and \cltabref{tab:taxonomy_ablation} report the revised full-refinement comparison without claiming a separate RW Tree gain.}
\end{itemize}

\paragraph{\lp{69jN-5: Theoretical claim and refinement boundary.}}
\begin{itemize}
  \item \lp{\textbf{Concern.} The reviewer found the theoretical claim incomplete because the approximation ratio, assumptions, and proof were missing. The connection between the formal objective, greedy construction, and subsequent LLM refinement was also unclear, making it difficult to determine which part of the implemented pipeline was covered by the claimed guarantee.}
  \item \lp{\textbf{Response.} We remove the approximation-guarantee statement and separate the optimization formulation from the practical solution procedure. The revised formulation makes the hierarchy, concept-selection, and leaf-assignment variables explicit, together with depth and branching bounds, one parent per selected concept, assignment of each paper to exactly one leaf, and sibling-dimension constraints. The objective rewards aggregate concept support and supported paper assignments while penalizing taxonomy complexity. We provide an NP-hardness proof by reduction from Exact Cover by 3-Sets; hardness already holds for a depth-one taxonomy, one semantic dimension, binary evidence weights, and a unit complexity penalty. This establishes why exact optimization is difficult, rather than a guarantee for the heuristic. In Method, greedy construction yields the initializer, after which the LLM resolves semantic and structural issues, paper assignments are updated, and deterministic validation repairs missing or duplicate assignments. The resulting greedy-and-refine pipeline is evaluated empirically, with no approximation ratio claimed for either the combined procedure or the subsequent LLM adjustments.}
  \item \lp{\textbf{Where changed.} \clsecref{sec:method_taxonomy}{Minimum-Conflict Taxonomy Construction and Survey Generation} revises the NP-hardness statement and distinguishes ``Optimization objective,'' ``Greedy taxonomy construction,'' and ``Global refinement and generation.'' \clsecref{app:complexity_statement}{Problem Restatement} specifies the complete variables, constraints, and objective; \clsecref{app:complexity_theorem}{NP-Hardness} gives the reduction and proof. \clsecref{app:algorithm_taxonomy}{Minimum-Conflict Taxonomy Construction} supplies the operational greedy and refinement steps corresponding to the Method description.}
\end{itemize}

\paragraph{\lp{69jN-6: Seed sensitivity.}}
\begin{itemize}
  \item \lp{\textbf{Concern.} The reviewer requested seed sensitivity and uncertainty across topics. Because expansion begins from a starting paper, a favorable seed could influence both the reachable literature and the measured scope-reconstruction gains; a single run per topic leaves that dependence unclear.}
  \item \lp{\textbf{Response.} We rerun retrieval on Multi-Agent Workflow Automation and AI-Powered Video Accessibility for Blind and Low Vision Viewers using the original seed and two alternative in-scope seeds per topic. The alternatives are suggested by an LLM from the topic, while the topic definition, human reference scope, and RQ1 evaluation protocol remain fixed. Correctness remains high across the six runs (0.8361--0.9783), but Coverage and F1 vary. For Multi-Agent Workflow Automation, the original seed retrieves 69 papers with Coverage/Correctness of 0.6400/0.8841; FlowReasoner retrieves 105 with 0.7200/0.9238, improving both measures; and EvoAgentX retrieves 122 with 0.8400/0.8361, attaining higher Coverage at lower Correctness. For Video Accessibility, Coverage ranges from 0.4615 to 0.6154 and F1 from 0.6272 to 0.7532, without increasing monotonically with paper count. The evidence therefore suggests that different seeds expose different reachable portions of the citation graph, rather than merely trading precision for retrieval volume. We report that dependence explicitly. This experiment provides per-seed variation on two topics; it does not supply repeated-run uncertainty estimates for all ten main-study topics.}
  \item \lp{\textbf{Where changed.} \clsecref{sec:exp_rq11}{Scope Coverage and Correctness} adds a concise seed-sensitivity finding and appendix pointer. \clsecref{app:seed_sensitivity}{Seed Sensitivity} uses Setting and Results paragraphs, explains how the alternatives were obtained and what was held fixed, and reports all six seed identities, paper counts, Coverage, Correctness, and F1 in \cltabref{tab:seed_sensitivity}. Its Results paragraph discusses the non-monotonic count--quality relationship and reachable-graph interpretation.}
\end{itemize}

\section*{\lp{Response to Reviewer NqDY}}

\paragraph{\lp{NqDY-1: Independence of taxonomy evaluation.}}
\begin{itemize}
  \item \lp{\textbf{Concern.} The reviewer noted that some taxonomy metrics use concepts extracted from the same Related Work evidence that informs construction. Such alignment can measure internal consistency but gives weaker independent evidence of expert or field-level organization. The review also cautioned that one human-written survey or survey section per topic provides an incomplete scope reference, as the retrieved-paper audit itself indicates.}
  \item \lp{\textbf{Response.} We distinguish the evaluation sources and the conclusions each can support. HSR and CEDS compare generated hierarchies with the human-survey reference: HSR measures heading correspondence with a depth penalty, and CEDS measures normalized tree-edit similarity using semantic node-matching costs. RWTA complements these measures by checking how well the taxonomy reflects organizational concepts expressed across the explored author-written literature. Because those concepts also inform construction, RWTA indicates evidence alignment and internal consistency rather than independent agreement with human organization. We preserve its positive role while making that boundary explicit. The results are interpreted by dimension: SurveyGen most closely matches the human-survey structure and granularity, whereas \tool has the highest average outline-quality score and RWTA; we do not claim that \tool leads every human-reference metric. We further add an independent six-expert comparison on two topics, with 5/6 taxonomy preferences for \tool. For scope coverage, the existing audit finds 90/100 sampled papers outside the human reference relevant, showing why absence from one survey is not sufficient to treat a retrieved paper as incorrect. We therefore distinguish reference correspondence, evidence alignment, and expert preferences rather than treating any one of them as proof of field-wide consensus.}
  \item \lp{\textbf{Where changed.} \clsecref{sec:exp_rq2}{Taxonomy Construction} identifies the human-survey hierarchy as the reference, presents the complementary evaluation dimensions, and interprets \cltabref{tab:taxonomy_quality} by those dimensions. \clsecref{app:rq2_metrics}{Taxonomy Evaluation Metrics} states the evidence-source distinction directly after the RWTA definition, with HSR and CEDS explained first. \clsecref{app:retrieval_expert_audit}{Expert Audit of Retrieved Papers} documents the incomplete-reference issue and \cltabref{tab:manual_false_positive}. \clsecref{app:expert_comparison}{Method-Blinded Expert Comparison} provides the independent taxonomy comparison.}
\end{itemize}

\paragraph{\lp{NqDY-2: Independent expert and claim-level checks.}}
\begin{itemize}
  \item \lp{\textbf{Concern.} Given the substantial use of LLM-based evaluation, the reviewer encouraged a small independent expert study of the resulting taxonomy and a sample of final survey claims, as well as extracted Related Work relations where feasible. Such evidence would help distinguish consistency with the system's own inputs from useful scholarly organization and supported writing.}
  \item \lp{\textbf{Response.} We add a comparison of \tool and SurveyGen on Multi-Agent Workflow Automation and Video Accessibility, with three domain experts per topic. Experts receive the topic scope and the two taxonomies and surveys as System A/B, with method names withheld and presentation order balanced. The guidance covers conceptual coverage, non-redundancy, and hierarchy for taxonomies, and content coverage, topical relevance, and organization for surveys. Experts choose A, B, or similar for each artifact and then complete a semi-structured interview about organization, strengths, weaknesses, and realistic use. \toole{}'s taxonomy is preferred in 5/6 judgments (2/3 and 3/3 by topic), and its survey in all 6/6 (3/3 on each topic). Interviews attribute the advantage chiefly to categories tied to the target problem and to problem-solving or technical progression, rather than simply broader coverage. All six perceive a connection between taxonomy and survey organization, with one noting that this applies to both systems. Experts also identify missing directions, limited technical specificity, occasional loss of focus, parallel rather than progressive subtopics, and weak integration of future-oriented sections. Separately, the 24-pair human citation audit checks source support for final claims. These additions directly assess the artifacts and sampled claims; comprehensive human validation of extracted intermediate relations remains outside their scope.}
  \item \lp{\textbf{Where changed.} \clsecref{sec:exp_rq2}{Taxonomy Construction} adds the 5/6 taxonomy-preference finding, and \clsecref{sec:exp_rq31}{LLM-Judged Survey Quality} adds the 6/6 survey-preference finding. \clsecref{app:expert_comparison}{Method-Blinded Expert Comparison}, within the expanded Human Evaluation appendix, gives the A/B protocol, evaluation guidance, topic-level counts, and positive and negative interview findings. \clsecref{sec:exp_passage_faithfulness}{Passage-Based Citation Faithfulness} summarizes the source-level check; \clsecref{app:passage_faithfulness}{Passage-Based Citation Faithfulness Audit} reports the full human-audit results.}
\end{itemize}

\paragraph{\lp{NqDY-3: Sensitivity to the starting paper.}}
\begin{itemize}
  \item \lp{\textbf{Concern.} The reviewer requested sensitivity to the initial seed paper. Since citation traversal begins from one anchor, different valid starting papers may emphasize different neighborhoods and produce different reconstructed scopes, even when the target topic is unchanged.}
  \item \lp{\textbf{Response.} We conduct a three-seed comparison on each of the same two topics used for the additional expert analyses, using one original and two LLM-suggested in-scope alternatives. The target definition, gold scope, and evaluation protocol are held fixed. On Multi-Agent Workflow Automation, Coverage ranges from 0.6400 to 0.8400, Correctness from 0.8361 to 0.9238, and F1 from 0.7425 to 0.8380. On Video Accessibility, the corresponding ranges are 0.4615--0.6154, 0.9516--0.9783, and 0.6272--0.7532. These results show relatively high topical Correctness alongside more substantial changes in coverage. FlowReasoner improves both Coverage and Correctness over the original workflow seed, and the video results are not monotonic in retrieved-paper count. We interpret this as sensitivity to which portions of the citation graph become reachable under the traversal schedule, rather than only a change in volume. The revised account therefore states seed dependence explicitly instead of assuming that all valid seeds reconstruct the same scope.}
  \item \lp{\textbf{Where changed.} \clsecref{sec:exp_rq11}{Scope Coverage and Correctness} adds the headline observation that Correctness remains high while Coverage and F1 vary. \clsecref{app:seed_sensitivity}{Seed Sensitivity} explains seed selection, fixed conditions, and the reachable-graph interpretation, with all per-seed scores and paper identities in \cltabref{tab:seed_sensitivity}. The ten main-study seeds and their selection rule are separately listed in \clsecref{app:algorithm_retrieval}{Insight-Guided Retrieval} so the original runs can also be identified.}
\end{itemize}

\paragraph{\lp{NqDY-4: Strength of the consensus claim.}}
\begin{itemize}
  \item \lp{\textbf{Concern.} The reviewer recommended qualifying ``consensus-aligned.'' The current evidence supports aggregation of recurring author-written perspectives, but does not by itself establish that those perspectives represent field-wide consensus. This distinction is particularly relevant when the explored literature and the human reference survey cover only part of a research area.}
  \item \lp{\textbf{Response.} We replace the consensus characterization with the claim that \tool aggregates recurring author-written perspectives within the retrieved literature. This wording describes the source and extent of the evidence: citing authors provide local judgments, recurring support across papers informs the taxonomy, and the resulting organization remains bounded by the explored scope. The Limitations describe shared omissions or biases that may carry over from author-written Related Work, together with incomplete coverage of the field due to the literature reachable through citation traversal. The evaluation discussion is consistent with that qualification. Human-reference metrics assess correspondence to a particular survey hierarchy, RWTA assesses alignment with author-written organizational evidence, and the expert comparison supplies preferences on two topics. Together they support the usefulness of the organization under the tested conditions without changing those observations into a claim of universal scholarly consensus.}
  \item \lp{\textbf{Where changed.} The Abstract and \clsecref{sec:introduction}{Introduction}, including the contribution list, describe recurring author-written perspectives in the retrieved literature. The ``Scope of author perspectives'' paragraph in the Limitations explicitly states the field-coverage and bias boundary. \clsecref{app:rq2_metrics}{Taxonomy Evaluation Metrics} distinguishes human-reference correspondence from RWTA evidence alignment, and \clsecref{sec:exp_rq2}{Taxonomy Construction} reports the complementary results without claiming field-wide consensus.}
\end{itemize}

\lp{We appreciate the reviewers' suggestions, which helped us strengthen the prior-work positioning, make the comparison conditions more transparent, and add direct evidence for source support and expert-perceived organization. The revised paper presents author-written citation relations as a useful scaffold for scope reconstruction, topic-specific taxonomy construction, and evidence-bounded survey generation, with the new analyses and limitations specifying the scope of that evidence.}

  \clearpage
  \pagenumbering{arabic}
  \pdfbookmark[0]{Revised Manuscript -- Submission 2120}{revision-manuscript}
  \linenumbers
\fi
\maketitle
\begin{abstract}
The rapid growth of scientific literature makes survey writing increasingly challenging, motivating automatic survey generation. 
Yet, existing systems must search an open space of papers and an open space of concepts with no trace of how the community itself has navigated them, leading to incomplete scope, weak taxonomies, and unreliable citations.
Our core insight is that author-written Related Work sections contain the human judgments needed to address both limitations: 
authors curate the citations relevant to their contributions and explain how the cited works relate, yielding paths for scope reconstruction and relational evidence for organization. 
We propose \toole. Its insight-guided retrieval follows author-supported citation relations to reconstruct a verified research scope and structure the resulting relational evidence.
\yuquan{It aggregates these judgments under a minimum-conflict objective into a taxonomy that reflects recurring author-written perspectives in the retrieved literature, guides generation, and bounds each section's evidence.}
\yuquan{\textcolor[rgb]{0, 0, 0}{Results on ten topics show gains in scope reconstruction, taxonomy quality, citation support, and reference reliability.}}

\end{abstract}
\section{Introduction}
\label{sec:introduction}
\begin{figure}[t]
    \centering
    \includegraphics[width=\columnwidth]{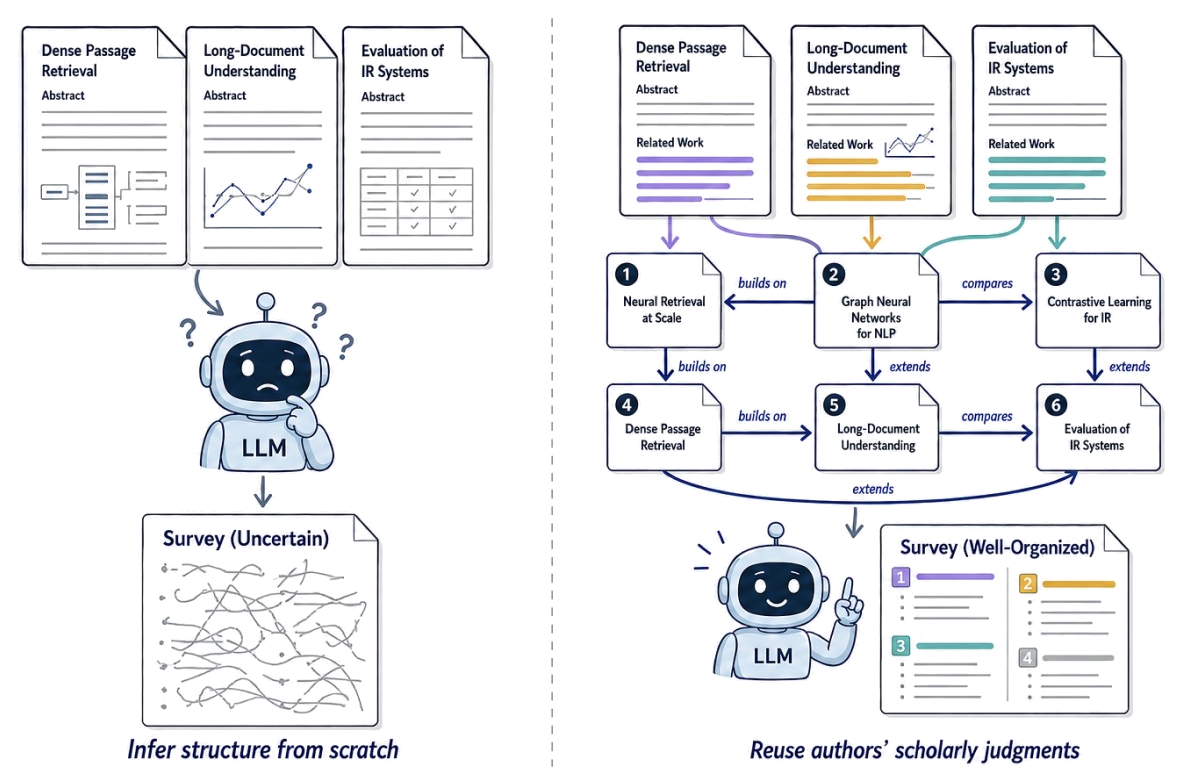}
    \caption{\yuquan{Motivation of \toole. Author-written Related Work sections record how different authors characterize prior work from their own perspectives; aggregating these local views grounds the survey in recurring perspectives within the explored literature.}}
    \label{fig:motivation}
\end{figure}
\yuquan{\textcolor[rgb]{0, 0, 0}{Writing a scientific survey is labor-intensive: authors must find, assess, and organize relevant work into a coherent taxonomy and narrative.}}
As publication rates grow, this cost rises faster than most groups can absorb. 
This motivated automatic survey generation (ASG), which agentizes this workflow into retrieval, reading, planning, and writing \citep{wang2024autosurvey,zhou2024llmmapreduce}. 
Subsequent systems refine its components with memory-driven retrieval and outline planning \citep{yan2025surveyforge}, structured preprocessing \citep{liang2025surveyx}, and quality-aware feedback \citep{bao2025surveygen}.

What a survey requires, however, is not a ranked list but a research scope: which works are foundational, which extend or challenge them, and which directions belong to the same topic. Reconstructing it confronts a language model with two open-ended search problems: retrieval must find relevant work in an open space of millions of papers ~\cite{coha2020specter, ajith2024litsearch, he2025pasa}, and writing must organize that evidence in an equally open space of concepts, inventing a taxonomy from scratch. Both spaces are too large for a model to navigate reliably—on our ten topics, no existing system recovers more than 11\% of a human survey's reference scope, and every baseline falls below 0.50 correctness. Recent work constrains this search with structure extracted from paper content ~\cite{pedinotti2026structsurvey,yang2026deepsurvey} or citation graphs \cite{zhang2024litfm,hu2025cgrag}, but neither provides the traces the search needs: a paper's content reflects only its own vantage point, and citation edges record that two papers are linked, not how ~\cite{zhang2026internatlas}. The model is thus left to search these open spaces without any trace of how the community itself has navigated them.

These traces already exist, written by authors themselves. When authors write a Related Work section, they do more than cite: they search the same two open spaces and record how they navigated them, selecting relevant prior work and articulating relations such as similarity, difference, limitation, and progression. Across a field, these local judgments accumulate into a crowdsourced account of its intellectual structure, supplying exactly the traces current pipelines discard: curated citation paths for finding literature, and explicit relational statements for organizing it. \yuquan{\textcolor[rgb]{0, 0, 0}{This motivates our research question: can author-written Related Work judgments serve as a reliable scaffold for reconstructing a field's scope, taxonomy, and coherent survey narrative?}}

We answer this question with \toole, which builds one stage of the workflow on each of these two signals. 
\emph{Insight-guided retrieval} follows the citation paths authors have already curated: from a seed paper, it traverses the citation graph forward and backward, admitting a paper only when an author-written passage supports the relation and the paper lies within the user-defined scope. 
It returns a verified paper set and two records of author judgment: a research-scope graph whose edges carry the relation an author asserted between two papers---improvement, extension, comparison, or critique---and, per paper, a Related Work Tree recording how its authors grouped prior work and placed their own contribution within it. 
\emph{Taxonomy construction} then aggregates the relational statements themselves: the concept phrases stated on the graph's edges are merged into a shared candidate concept space, over which a minimum-conflict objective selects a taxonomy and paper assignment that maximize concept support and assignment support while penalizing taxonomy complexity, subject to structural and semantic constraints. 
Since exact optimization is NP-hard, \tool approximates it with a coverage-oriented greedy initializer and a global LLM refinement pass, then assigns each paper to one leaf. 
The taxonomy and its assigned papers then serve as the writing plan: leaf sections synthesize the assigned papers, internal sections explain how their children relate, and each section may cite only its own papers. 

Experiments on ten computer-science survey topics, each annotated with a gold reference scope and a gold taxonomy from a human-written survey, evaluate the intermediate artifacts as well as the final text. 
\tool reconstructs research scope far more completely and accurately than retrieval and survey-generation baselines, builds taxonomies that are better organized and better grounded in author-written evidence, and produces surveys with substantially more faithful citations. 

Overall, Our contributions are threefold:
\begin{itemize}[leftmargin=*, labelindent=0pt, itemindent=0pt]
\item We identify author-written Related Work judgments as the scholarly-judgment signal missing from autonomous survey generation, and reorganize the end-to-end workflow around it.
\item \yuquan{\textcolor[rgb]{0, 0, 0}{We propose \toole, combining insight-guided citation traversal with minimum-conflict aggregation of distributed author judgments to produce a taxonomy that reflects recurring author-written perspectives in the retrieved literature, bounds the evidence available to each survey section, and guides generation of the final survey.}}
\item \yuquan{\textcolor[rgb]{0, 0, 0}{We introduce an evaluation setting over ten survey topics that scores research-scope reconstruction, taxonomy quality, and final survey quality separately at successive pipeline stages.}}
\end{itemize}

\section{Related Work}
\label{sec:related_work}

\yuquan{\textcolor[rgb]{0, 0, 0}{\textbf{Citation-aware scholarly retrieval and exploration.} SciCite classifies citation intent as background, method, or result comparison \citep{cohan2019scicite}, while Valenzuela et al. distinguish important from incidental citations \citep{valenzuela2015meaningful}. Scholarly retrieval also uses lexical or embedding similarity \citep{coha2020specter} and citation-graph propagation \citep{zhang2024litfm,hu2025cgrag}; these signals alone do not preserve how citing authors describe prior work. Relatedly helps readers explore Related Work paragraphs across papers and form topic outlines \citep{palani2023relatedly}, but leaves the synthesis of these perspectives to readers. \tool uses passages surrounding citations as author-attributed relational evidence to discover and verify a topic-specific literature scope.}}

\yuquan{\textcolor[rgb]{0, 0, 0}{\textbf{Scholarly taxonomy construction.} CHIME organizes a given paper set with expert corrections to category relations and paper assignments \citep{hsu2024chime}. HiGTL combines paper text and citation-graph structure for hierarchical clustering and concept naming \citep{hu2025higtl}, while Zhu et al. use LLM-generated aspect representations and clustering and introduce TaxoBench-CS \citep{zhu2025context}. These methods do not explicitly use citing authors' descriptions of prior work as evidence for taxonomy construction. \tool extracts concept phrases from author-written Related Work passages surrounding citations and aggregates their support across papers.}}

\yuquan{\textcolor[rgb]{0, 0, 0}{\textbf{Automatic survey generation.} Mohammad et al. combined citation texts with multi-document summarization to generate technical surveys \citep{mohammad2009using}. LLM-based systems commonly follow a retrieve--plan--write pipeline \citep{wang2024autosurvey,zhou2024llmmapreduce}, with later work improving retrieval, preprocessing, feedback, collaboration, and evaluation \citep{wu2025autosurvey2,yan2025surveyforge,liang2025surveyx,bao2025surveygen,go2025lira,surveylens2026}. Some systems also extract relations or taxonomies from paper content \citep{pedinotti2026structsurvey,yang2026deepsurvey}, but these structures need not capture how subsequent authors position earlier work. \tool carries its reconstructed scope and taxonomy into an evidence-bounded writing plan. Appendix~\ref{app:full_related_work} provides a more detailed comparison.}}

\section{Methodology}
\label{sec:methodology}

\setlength{\abovedisplayskip}{0pt}
\setlength{\belowdisplayskip}{4pt plus 1pt minus 1pt}
\setlength{\abovedisplayshortskip}{0pt}
\setlength{\belowdisplayshortskip}{3pt plus 1pt minus 1pt}

\subsection{Workflow and Problem Formulation}
\label{sec:method_workflow}

\tool operationalizes the two signals recorded in author-written Related Work sections: curated citation paths for reconstructing a research scope, and explicit relational statements for organizing it. \yuquan{\textcolor[rgb]{0, 0, 0}{Rather than use keywords to retrieve references and infer the field's structure from scratch, \tool recovers and aggregates these distributed human judgments across the retrieved literature.}}

The input is \(Q=(d,p_0)\): 
a scope definition \(d\) specifying the survey topic, focus, and literature boundary, 
and a seed paper \(p_0\) given by the user or retrieved by an LLM.
\textbf{Insight-guided retrieval} (\S\ref{sec:method_retrieval}) follows from \(p_0\) the curated citation paths supported by author-written contexts, returning a topic-specific paper set \(P\), a research-scope graph \(G\) whose edges carry those contexts, and for each \(p\in P\) a Related Work (RW) Tree \(R_p\) recording how its authors grouped prior work. 
\yuquan{\textcolor[rgb]{0, 0, 0}{\textbf{Taxonomy construction} (\S\ref{sec:method_taxonomy}) takes \((P,G,\{R_p\}_{p\in P})\), builds a draft hierarchy from concept phrases and support weights in the author-written citation contexts of \(G\), refines it into a taxonomy \(T\) using the scope definition, compact concept and paper profiles, and RW Tree patterns as auxiliary structural evidence, and assigns papers to its leaves through a map \(A\).}}
Together \(T\) and \(A\) bound the evidence available to every survey section: a leaf is written from its assigned papers alone, while an internal node explains how its children relate. An LLM then generates the final survey.
\yuquan{\textcolor[rgb]{0, 0, 0}{Detailed algorithms, backbone models, principal budgets, and resource-use figures are provided in Appendix~\ref{app:algorithmic_details}; method prompts for all three stages appear in Appendix~\ref{app:method_prompts}.}}

\begin{figure*}[t]
    \centering
    \includegraphics[width=\textwidth]{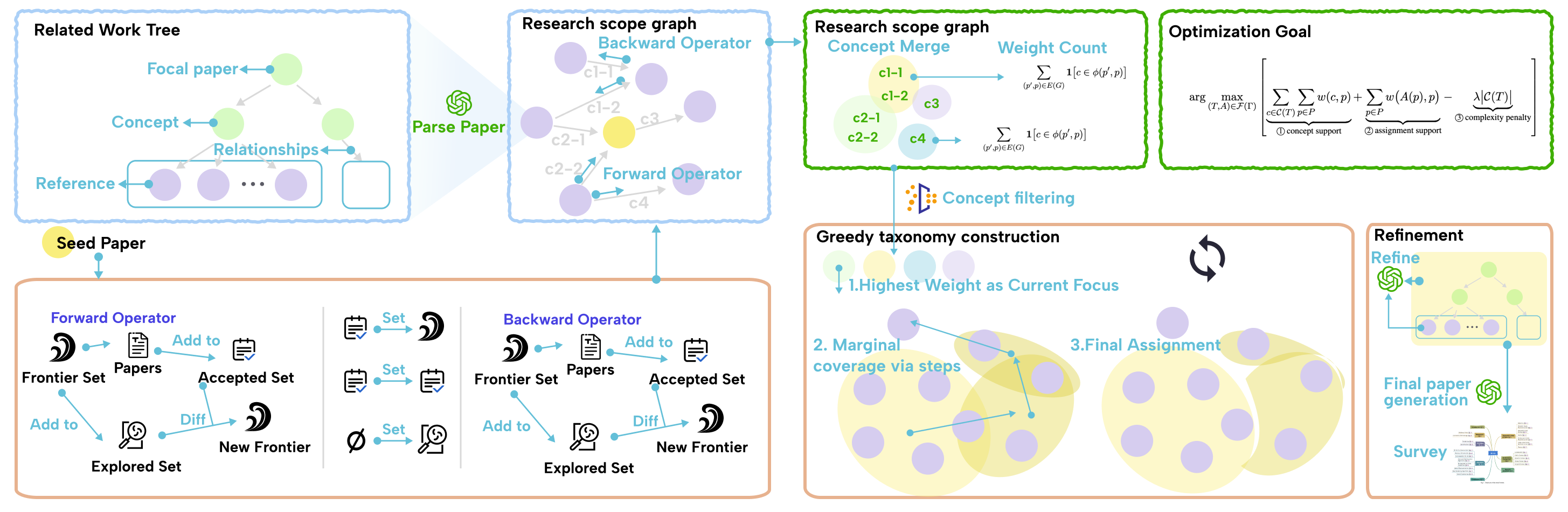}
    \caption{Overview of \toole. Insight-guided retrieval performs scope-verified forward and backward expansion to construct a verified paper set, a relation-bearing research-scope graph, and per-source Related Work trees. Minimum-conflict taxonomy construction produces and refines the taxonomy, assigns papers to leaves, and validates the resulting structure. Evidence-bounded survey generation then converts taxonomy nodes into writing units with local evidence packages and constrained citation identifiers.}
    \label{fig:method_overview}
\end{figure*}

\subsection{Insight-Guided Retrieval}
\label{sec:method_retrieval}

This stage must recover a research scope rather than a ranked list: which works are foundational, which extend or challenge them, and which directions belong to the same topic. 
\tool recovers it from author-written Related Work sections, which already carry local relevance judgments and explicit statements of how a citing paper relates to the work it cites: 
\yuquan{\textcolor[rgb]{0, 0, 0}{a citation relation is followed only when such a passage supports it, and the paper it leads to is admitted only after verification against the user-defined research scope.}}

\subsubsection{Related Work Tree Extraction}
\label{sec:method_rwt}
For each accepted paper \(p\), \tool extracts an RW Tree \(R_p\) from the paper's author-written Related Work discussion and the citations resolved from that text. 
Its internal nodes are the categories the authors used to organize prior work, so that papers under one internal node form an author-defined conceptual group; 
its leaves are the cited papers, with their meta information and the role the authors assigned them, such as method or limitation; 
and its root is the focal paper with its research context. 
The edges inside the tree \(E_p\) encode the authors' conceptual organization, such as the improvement from a leaf paper to the root, rather than the raw citation links used for traversal. 
If the Related Work section does not provide a complete hierarchy, \tool retains only the structure supported by the text instead of asking the model to fill in the missing parts.

\subsubsection{Research-Scope Reconstruction}
\label{sec:method_scope_reconstruction}

Reconstruction turns input \(Q\) into the paper set \(P\) and the research-scope graph \(G\). 
\tool first converts \(d\) into a structured research context holding the scope constraints, the intended perspective, and any temporal requirement, and resolves \(p_0\) against a scholarly index for its meta information and citation relations; retrieval starts only if the seed's research problem, methodology, and contribution satisfy the scope.

The search then proceeds in rounds. A candidate is accepted only if it falls within the target scope and then stored as RW Tree \(R_p\), then becomes nodes in \(G\), and each directed edge in \(G\) stores the supporting passage from the citing paper to the cited paper. 
To support this iterative process, the algorithm maintains three sets: the accepted set \(S_a\), the frontier set \(S_f\) of papers awaiting expansion, and the explored set \(S_e\) of papers already expanded. The accepted set and the frontier are initialized with the seed, \(S_a=S_f=\{p_0\}\), while the explored set is initially empty, \(S_e=\emptyset\).

\noindent\textbf{Forward operator.}
Given a paper \(p\), the forward operator retrieves the papers that cite \(p\), each paired with the citation sentence or Related Work passage in which the citing paper discusses \(p\).
This direction targets later developments, extensions, applications, comparisons, and critiques.
When the citing papers are too numerous to process exhaustively, highly cited and recent papers are prioritized to keep the operator technically feasible.
A permissive pre-filter removes clearly uninformative candidates, and each remaining candidate must pass an admission test against the target scope.
Admitted candidates are deduplicated against the accepted set \(S_a\) and, if new, added to it; their supporting passages are incorporated into \(G\) as relation evidence.

\noindent\textbf{RW-guided backward operator.}
Given a paper \(p\), the backward operator retrieves the references of \(p\) by extracting its RW Tree \(R_p\) (\S\ref{sec:method_rwt}) and checking whether each leaf of \(R_p\), paired with the author-written passage that describes it, passes the same admission test.
Admitted predecessors are deduplicated against \(S_a\) and, if new, added to it; their relation evidence is incorporated into \(G\).
\yuquan{Each admitted predecessor is expanded through its own RW Tree, so backward exploration recursively exposes earlier portions of the verified scope.}

\paragraph{\textbf{Round structure and state update.}}
Starting from \(S_a=S_f=\{p_0\}\) and \(S_e=\emptyset\), a run executes a bounded number of forward steps followed by a bounded number of backward steps.
In each forward step, the forward operator is applied to every paper in the frontier \(S_f\); the expanded papers are added to the explored set \(S_e\), newly admitted papers are added to \(S_a\), and the frontier is updated to \(S_a\setminus S_e\).
The forward phase comes first so that major recent research directions are identified before their earlier foundations are traced.
Once the forward steps are complete, the frontier is reset to the full accepted set and the explored set is cleared, i.e., \(S_f\leftarrow S_a\) and \(S_e\leftarrow\emptyset\), while \(S_a\) is kept unchanged.
Each backward step then applies the backward operator to every paper in \(S_f\) and parses its RW Tree; the expanded papers are added to \(S_e\), newly admitted predecessors are added to \(S_a\), and the frontier is again updated to \(S_a\setminus S_e\).
After the backward steps terminate, a loop-closure step densifies \(G\) by recovering missing citation links among accepted papers, merging duplicate nodes, and attaching new author-written evidence to existing relations.
This completes the reconstruction run.

\paragraph{\textbf{Post-termination reseeding and graph merging.}}
After a run terminates, \tool selects new seeds with high in-degree in \(G\), reruns reconstruction independently from each seed, and merges the resulting graphs with deduplication and loop closure.
\yuquan{\textcolor[rgb]{0, 0, 0}{The merged graph and its verified papers define \(G\) and \(P\), respectively, for taxonomy construction. The RW Trees are retained for global refinement, while the author-written citation contexts on the edges of \(G\) provide concept phrases and support weights for the draft hierarchy.}}

\subsection{Minimum-Conflict Taxonomy Construction and Survey Generation}
\label{sec:method_taxonomy}

The retrieval stage produces a verified literature space but not the global organization a survey requires.
\yuquan{The author-written citation contexts retained on the edges of \(G\) encode local views that may conflict: different citing papers may associate the same paper with different concepts, use different names for similar concepts, or organize the field along different dimensions.}
Semantic clustering provides no hierarchy and can mix abstraction levels; a one-shot LLM offers little control over evidence support;
\yuquan{\textcolor[rgb]{0, 0, 0}{and directly following a single RW Tree can privilege one author's local organization over perspectives from other papers.}}

\yuquan{\tool therefore treats taxonomy construction as aggregation: given \(P\) and \(G\), it seeks an admissible hierarchy that reconciles the author-written judgments encoded in \(G\)'s citation contexts under the minimum-conflict objective, assigns papers to leaves through \(A\), and generates the survey from the result.}
\yuquan{\textcolor[rgb]{0, 0, 0}{The problem is NP-hard, so \tool follows a greedy-and-refine strategy: a greedy initializer on the combinatorial core, followed by global LLM refinement of category meanings and hierarchical structure.}}
\paragraph{\textbf{Candidate concept space.}}
\label{sec:method_taxonomy_candidates}

\tool first constructs a shared candidate concept space from the relation statements carried by the edges of \(G\), so that the greedy initializer begins from relational evidence shared across the retrieved literature. Each directed edge \((p',p)\in E(G)\) stores the author-written passage in which the citing paper \(p'\) states how it categorizes the cited paper \(p\); let \(\phi(p',p)\) denote the set of concept phrases stated on edge \((p',p)\).
The candidate concept set \(\mathcal{C}\) collects the concept phrases on all edges, \(\bigcup_{(p',p)\in E(G)}\phi(p',p)\), and merges semantically equivalent ones using their textual meaning and supporting edges.
Each concept \(c\) is assigned a semantic dimension \(\delta(c)\in\mathcal{D}\), where \(\mathcal{D}\) is the set of organizational dimensions. 
The candidate space retains weighted paper assignments, defined below.
\yuquan{\textcolor[rgb]{0, 0, 0}{For a paper \(p\) and a concept \(c\), the assignment weight \(w(c,p)\) counts incoming citation edges whose passages associate \(p\) with \(c\):}}
\begin{equation*}
w(c,p)=\sum_{(p',p)\in E(G)}\mathbf{1}\bigl[c\in\phi(p',p)\bigr],
\end{equation*}
Conflicts remain explicit: if different edges associate a paper with different concepts, all candidate assignments and their support weights are retained.
A zero-weight fallback concept \(c_{\bot}\) remains available for papers with sparse concept evidence.

\paragraph{\textbf{Optimization objective.}}
\label{sec:method_taxonomy_formulation}

A taxonomy is a rooted tree \(T=(\mathcal{C}(T),E(T))\) over selected concepts \(\mathcal{C}(T)\subseteq\mathcal{C}\), with leaf-category set \(L(T)\subseteq\mathcal{C}(T)\). A map \(A:P\rightarrow L(T)\) assigns every paper to exactly one leaf category. Let \(\Gamma\) denote a configurable set of structural and semantic constraints, and let \(\mathcal{F}(\Gamma)\) be the taxonomy--assignment pairs satisfying them. We solve $(T^*,A^*)$ as follows
\begin{equation}
\label{eq:mctc_objective}
\resizebox{0.95\columnwidth}{!}{$\displaystyle \arg\max_{(T,A)\in\mathcal{F}(\Gamma)}\left[\underbrace{\sum_{c\in\mathcal{C}(T)}\sum_{p\in P}w(c,p)}_{\text{\textcircled{\scriptsize 1}\,concept support}}+\underbrace{\sum_{p\in P}w\bigl(A(p),p\bigr)}_{\text{\textcircled{\scriptsize 2}\,assignment support}}-\underbrace{\lambda\bigl|\mathcal{C}(T)\bigr|}_{\text{\textcircled{\scriptsize 3}\,complexity penalty}}\right].$}
\end{equation}
Term \textcircled{\scriptsize 1} favors concepts carrying strong paper support, term \textcircled{\scriptsize 2} rewards agreement with author judgments at the paper-assignment level, and \textcircled{\scriptsize 3} controls taxonomy complexity. The constraint set \(\Gamma\) can, for example, limit the maximum depth \(B\) and branching factor \(k\).
\yuquan{\textcolor[rgb]{0, 0, 0}{The constrained optimization is NP-hard; Appendix~\ref{app:complexity} specifies the full constraint set and gives the NP-hardness proof.}}

\paragraph{\textbf{Greedy taxonomy construction.}}
\label{sec:method_taxonomy_draft}

Since exact optimization is NP-hard, \tool constructs a draft solution with a greedy top-down procedure over the shared candidate concept space derived from the citation contexts retained in \(G\).

The root initially contains the full paper set \(P\), and the procedure applies the same three-step split recursively at each category. 
\textbf{Candidate preparation:} for the papers currently under a category, it retains the supported categories from the shared candidate space, removes those with insufficient support or that violate the constraints, and selects the organizational dimension with the strongest total support so that the resulting siblings remain coherent. 
\textbf{Greedy selection:} within that dimension, it selects child categories one at a time by marginal coverage, choosing the candidate that adds the most support beyond the children already selected, and stops when the branching budget is reached. 
\textbf{Assignment and recursion:} each paper is assigned to the child with the highest normalized support; low-confidence papers are placed in a fallback category or deferred to the final assignment stage. The procedure then recurses on each child until the depth bound is reached, too few papers remain, or no supported split is available, yielding the draft taxonomy \(T_0\) and assignment \(A_0\). Appendix~\ref{app:complexity} provides the formal analysis.

\paragraph{\textbf{Global refinement and generation.}}
\label{sec:method_taxonomy_refinement}

\yuquan{\textcolor[rgb]{0, 0, 0}{\textbf{Structural refinement:} The greedy draft may contain vague, redundant, or unevenly granular categories. An LLM refines \(T_0\) using the scope definition \(d\), compact profiles of its concepts, profiles of the retrieved papers, and recurring category and parent--child patterns aggregated from the RW Trees as auxiliary structural evidence. The draft remains the evidence-grounded starting point; the refinement resolves semantic and structural issues before final paper assignment and deterministic validation.}}

\textbf{Paper assignment refinement:} Once the refined taxonomy \(T\) is fixed, an LLM updates the draft assignment \(A_0\) only where hierarchy changes create conflicts, ensuring that every paper belongs to exactly one leaf and yielding the final assignment \(A\).
A deterministic validation pass then repairs any remaining missing or duplicate assignments and prunes unsupported empty leaves. Together, hierarchy refinement, assignment refinement, and validation transform the draft \((T_0,A_0)\) into the final \((T,A)\) used in our experiments.

Finally, \tool uses \(T\) and \(A\) as the writing plan. 
Each category becomes a writing unit with a local evidence package and a fixed set of allowed citations. 
Leaf units synthesize the papers assigned by \(A\), while internal units explain the relations among their child categories. 
The units are generated independently, assembled in taxonomy order, and completed with survey-level sections. 
Thus, \(T\) determines the survey organization and \(A\) bounds the evidence available to each section.

\section{Experiments}
\label{sec:experiments}

We evaluate \tool from research-scope reconstruction to final survey generation. We ask three research questions: 
\textbf{RQ1}: Does Related Work-guided retrieval reconstruct a more complete and accurate research scope? 
\textbf{RQ2}: Does Related Work evidence improve taxonomy construction? 
\textbf{RQ3}: Does the resulting taxonomy improve the quality and credibility of the final survey? 
Formal definitions and implementation details for all evaluation metrics are provided in Appendix~\ref{app:metric_definitions}.

\subsection{Research-Scope Reconstruction}
\label{sec:exp_rq1}

\yuquan{\textcolor[rgb]{0, 0, 0}{Across ten computer-science topics, RQ1 evaluates retrieved paper sets and academic profiles against gold scopes derived from human-survey references and the corresponding topic definitions.}}

\subsubsection{Scope Coverage and Correctness}
\label{sec:exp_rq11}


\textbf{Setting.}
We compare \tool with BM25 and dense retrieval under online and offline access, and with SurveyGen, SurveyForge, and AutoSurvey. We report \#Papers, Coverage, Correctness, and F1; Correctness is determined by an LLM judge given the topic definition and each retrieved paper. More details are provided in Appendix~\ref{app:rq1_metrics}.
\yuquan{\textcolor[rgb]{0, 0, 0}{The initial seeds are listed in Appendix~\ref{app:algorithm_retrieval}.}}
\begin{table}[!htbp]
\centering
\small
\renewcommand{\arraystretch}{0.94}
\setlength{\tabcolsep}{3.5pt}
\resizebox{\columnwidth}{!}{%
\begin{tabular}{llrccc}
\toprule
\textbf{Setting} & \textbf{Method} & \textbf{\#Papers} & \textbf{Coverage} & \textbf{Correctness} & \textbf{F1} \\
\midrule
Online & BM25            & 1,995 & \underline{0.1058} & 0.1014 & 0.1036 \\
Online & Dense Retrieval & 993 & 0.1027 & 0.1717 & 0.1285 \\
\midrule
Offline & BM25            & 500 & 0.0675 & 0.1578 & 0.0946 \\
Offline & Dense Retrieval & 500 & 0.0765 & 0.1900 & 0.1091 \\
\midrule
ASG & SurveyGen   & 246 & 0.0951 & \underline{0.4911} & \underline{0.1593} \\
ASG & SurveyForge & 1,063 & 0.1050 & 0.1401 & 0.1200 \\
ASG & AutoSurvey  & 2,063 & 0.0592 & 0.0586 & 0.0589 \\
\midrule
Ours & \toole & 1,879 & \textbf{0.4097} & \textbf{0.9521} & \textbf{0.5729} \\
\bottomrule
\end{tabular}
}
\caption{\yuquan{\textcolor[rgb]{0, 0, 0}{Research-scope reconstruction across retrieval settings and ASG baselines. \#Papers totals retrieved papers over ten topics; bold and underlined values mark the best and second-best results.}}}
\label{tab:scope_results}
\end{table}

\textbf{Results.}
\yuquan{\textcolor[rgb]{0, 0, 0}{Table~\ref{tab:scope_results} shows that \tool leads in Coverage, Correctness, and F1, expanding the gold scope while maintaining high Correctness over a large retrieved set. This supports author-written Related Work relations as a signal for finding foundational and closely related work.}}
\yuquan{We further tested alternative in-scope seeds on two topics. Correctness remained high, while Coverage and F1 varied with the seed (Appendix~\ref{app:seed_sensitivity}).}

\subsubsection{Academic Profile of the Scope}
\label{sec:exp_rq12}


\textbf{Setting.}
\yuquan{\textcolor[rgb]{0, 0, 0}{We compare retrieved papers with the gold scope using two-sample Kolmogorov--Smirnov tests on Venue H-Index and Year Gap; higher H-Index reflects venue impact, while smaller Year Gap indicates newer work.}}

\begin{table*}[!t]
\centering
\scriptsize
\renewcommand{\arraystretch}{0.94}
\setlength{\tabcolsep}{3pt}
\resizebox{\textwidth}{!}{%
\begin{tabular}{lcccccccccc}
\toprule
& \multicolumn{3}{c}{\textbf{Structure}} & \multicolumn{5}{c}{\textbf{Outline Quality}} & \multicolumn{2}{c}{\textbf{Taxonomy Alignment}} \\
\cmidrule(lr){2-4}\cmidrule(lr){5-9}\cmidrule(lr){10-11}
\textbf{Method} & \textbf{HSR}$\uparrow$ & \textbf{CEDS}$\uparrow$ & \textbf{Node Ratio}$\rightarrow 1$ & \textbf{TU}$\uparrow$ & \textbf{SB}$\uparrow$ & \textbf{HC}$\uparrow$ & \textbf{LO}$\uparrow$ & \textbf{Avg.}$\uparrow$ & \textbf{RWTA}$\uparrow$ & \textbf{PTA}$\uparrow$ \\
\midrule
Vanilla LLM & 0.730126 & 0.448902 & 1.346384 & \yuquan{3.4350} & \yuquan{3.2550} & \yuquan{3.0500} & \yuquan{3.2150} & \yuquan{3.2388} & \underline{0.781515} & \textbf{0.753478} \\
SurveyGen & \textbf{0.754168} & \textbf{0.526661} & \textbf{1.176255} & \yuquan{3.3000} & \yuquan{\underline{3.5500}} & \yuquan{\underline{3.4000}} & \yuquan{\underline{3.4250}} & \yuquan{\underline{3.4188}} & 0.743593 & 0.446429 \\
SurveyForge & 0.712712 & \underline{0.478629} & \underline{1.309190} & \yuquan{2.7750} & \yuquan{2.6250} & \yuquan{2.4750} & \yuquan{2.5250} & \yuquan{2.6000} & 0.705402 & 0.129679 \\
AutoSurvey & 0.738040 & 0.374903 & 2.290563 & \yuquan{2.6850} & \yuquan{2.5200} & \yuquan{2.4050} & \yuquan{2.3750} & \yuquan{2.4962} & 0.731652 & 0.160142 \\
Deep Research & 0.688150 & 0.382977 & 0.348203 & \yuquan{\textbf{3.5000}} & \yuquan{2.8750} & \yuquan{3.3750} & \yuquan{3.1250} & \yuquan{3.2188} & 0.751199 & 0.403720 \\
\toole & \underline{0.742160} & 0.408899 & 2.289222 & \yuquan{\underline{3.4750}} & \yuquan{\textbf{3.7500}} & \yuquan{\textbf{3.5000}} & \yuquan{\textbf{3.6750}} & \yuquan{\textbf{3.6000}} & \textbf{0.797152} & \underline{0.747758} \\
\bottomrule
\end{tabular}
}
\caption{Taxonomy evaluation. TU, SB, HC, and LO denote Topic Uniqueness, Structural Balance, Hierarchical Clarity, and Logical Organization. RWTA denotes RW--Taxonomy Alignment, and PTA denotes Paper--Taxonomy Assignment Accuracy. Bold values indicate the best result, and underlined values indicate the second-best result.}
\label{tab:taxonomy_quality}
\vspace{-8pt}
\end{table*}

\begin{figure}[!t]
\centering

\includegraphics[width=\columnwidth]{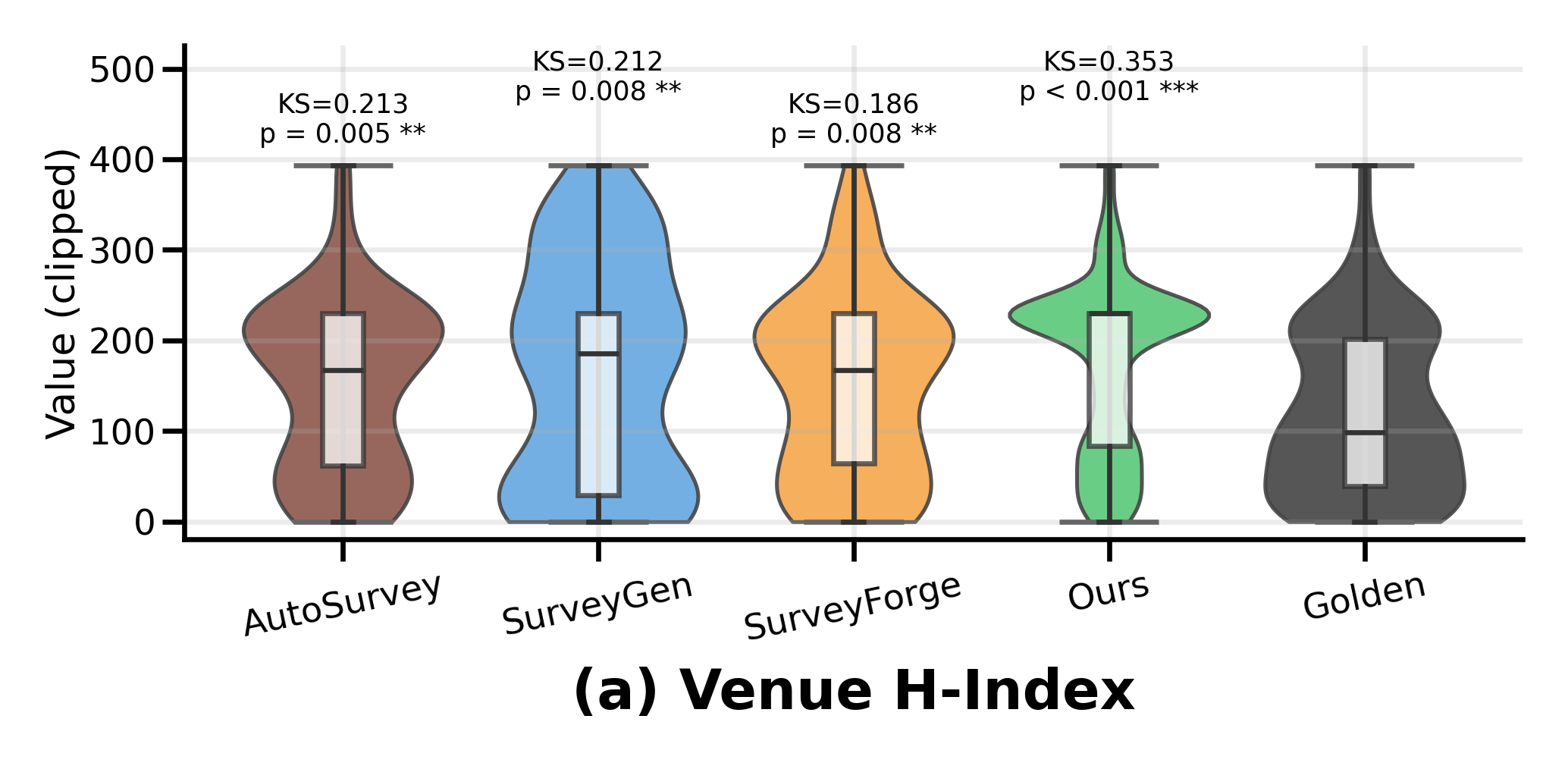}
\includegraphics[width=\columnwidth]{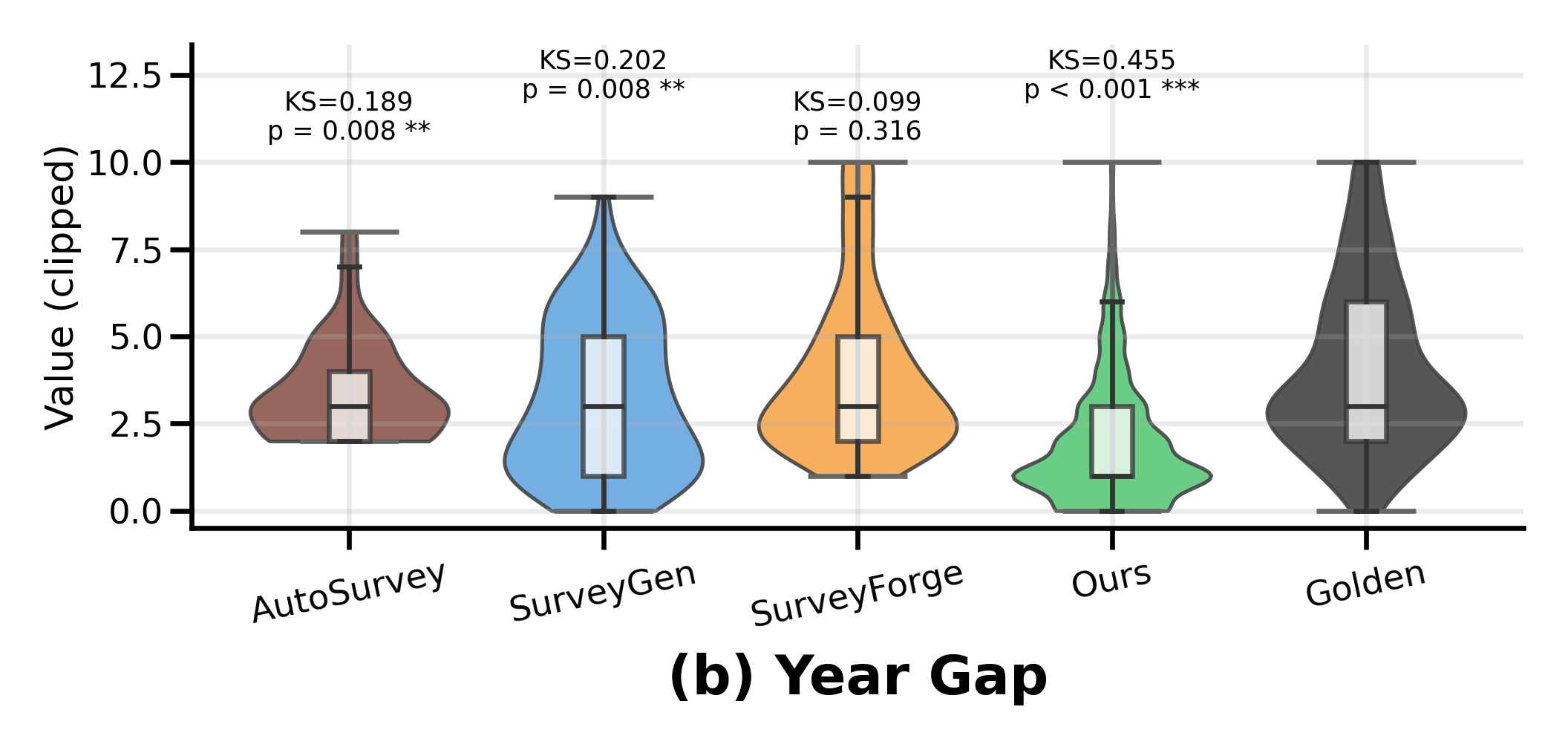}
\caption{Research-scope profile distributions for papers retrieved by each method and papers in the golden scope: (a) Venue H-Index and (b) Year Gap. Each violin aggregates papers across all topics; the inner box indicates the median and interquartile range.}
\label{fig:scope_profile}
\vspace{-4pt}
\end{figure}

\textbf{Results.}
\yuquan{\textcolor[rgb]{0, 0, 0}{Figure~\ref{fig:scope_profile} shows that \tool covers more recent work without shifting the retrieved scope toward lower-impact venues, suggesting that the added papers retain a reasonable academic profile in the pooled ten-topic comparison.}}

\textbf{Expert validation.}
\yuquan{\textcolor[rgb]{0, 0, 0}{Experts judged 90 of 100 sampled papers uniquely retrieved by \tool but absent from the human-curated scope to be valid related work, supporting the LLM-based Correctness measure (Appendix~\ref{app:scope_audit}).}}

\subsection{Taxonomy Construction}
\label{sec:exp_rq2}

RQ2 evaluates whether Related Work evidence produces a taxonomy that is complete, coherent, and grounded in the retrieved papers.

\textbf{Setting.}
\yuquan{\textcolor[rgb]{0, 0, 0}{Using the same ten topics, we take each human-written survey's section hierarchy as the reference taxonomy and compare six methods. We evaluate similarity to this taxonomy in structure and granularity, outline quality, alignment with author-written Related Work evidence, and paper-assignment accuracy. Metric definitions and score aggregation are provided in Appendix~\ref{app:rq2_metrics}.}}

\textbf{Results.}
\yuquan{\textcolor[rgb]{0, 0, 0}{Table~\ref{tab:taxonomy_quality} shows that SurveyGen most closely matches the human-survey structure and granularity. \tool achieves the highest overall outline-quality score, leading in Structural Balance, Hierarchical Clarity, and Logical Organization; Deep Research leads slightly in Topic Uniqueness. \tool also has the strongest RWTA and remains competitive in paper-assignment accuracy. Further taxonomy diagnostics appear in Appendix~\ref{app:taxonomy_diagnostics}.}}
\yuquan{\textcolor[rgb]{0, 0, 0}{A method-blinded expert comparison on two topics favors \toole{}'s taxonomy in 5/6 judgments (83.3\%); the protocol and interview findings appear in Appendix~\ref{app:expert_comparison}.}}

\begin{table*}[!t]
\centering
\small
\renewcommand{\arraystretch}{0.94}
\setlength{\tabcolsep}{3pt}
\resizebox{0.85\textwidth}{!}{%
\begin{tabular}{lcccccc}
\toprule
& \multicolumn{2}{c}{\textbf{Content Correspondence}} & \multicolumn{2}{c}{\textbf{Citation Support}} & \multicolumn{2}{c}{\textbf{Reference Reliability}} \\
\cmidrule(lr){2-3}\cmidrule(lr){4-5}\cmidrule(lr){6-7}
\textbf{Method} & \textbf{SS}$\uparrow$ & \textbf{KPR}$\uparrow$ & \textbf{Cit. Recall}$\uparrow$ & \textbf{Cit. Precision}$\uparrow$ & \textbf{\#References} & \textbf{VRR}$\uparrow$ \\
\midrule
Vanilla LLM & \textbf{0.705198} & \yuquan{53} & 0.441418 & \underline{0.406859} & 254 & 0.550306 \\
SurveyGen & 0.675570 & \yuquan{52} & \underline{0.475770} & 0.355161 & 249 & 0.972690 \\
SurveyForge & 0.626670 & \yuquan{55} & 0.178637 & 0.122507 & 1,063 & \underline{0.979134} \\
AutoSurvey & 0.619481 & \yuquan{\textbf{63}} & 0.275886 & 0.255565 & 2,102 & 0.950845 \\
Deep Research & \underline{0.704412} & \yuquan{54} & 0.301514 & 0.264325 & 134 & 0.898381 \\
\toole & 0.670218 & \underline{60} & \textbf{0.856393} & \textbf{0.629516} & 1,717 & \textbf{0.995528} \\
\bottomrule
\end{tabular}
}
\vspace{-4pt}
\caption{\yuquan{\textcolor[rgb]{0, 0, 0}{Content and citation evaluation of generated surveys. SS, KPR, and VRR denote Semantic Similarity, Key-Point Recall, and Verifiable Reference Rate. Bold marks the best result; underlining marks the second-best.}}}
\label{tab:survey_faithfulness}
\vspace{-6pt}
\end{table*}

\begin{table*}[!t]
\centering
\small
\renewcommand{\arraystretch}{0.94}
\setlength{\tabcolsep}{3pt}
\resizebox{\textwidth}{!}{%
\begin{tabular}{lcccccccccc}
\toprule
& \multicolumn{3}{c}{\textbf{Structure}} & \multicolumn{5}{c}{\textbf{Outline Quality}} & \multicolumn{2}{c}{\textbf{Taxonomy Alignment}} \\
\cmidrule(lr){2-4}\cmidrule(lr){5-9}\cmidrule(lr){10-11}
\textbf{Variant} & \textbf{HSR}$\uparrow$ & \textbf{CEDS}$\uparrow$ & \textbf{Node Ratio}$\rightarrow 1$ & \textbf{TU}$\uparrow$ & \textbf{SB}$\uparrow$ & \textbf{HC}$\uparrow$ & \textbf{LO}$\uparrow$ & \textbf{Avg.}$\uparrow$ & \textbf{RWTA}$\uparrow$ & \textbf{PTA}$\uparrow$ \\
\midrule
w/o taxonomy refinement       & \textbf{0.750085} & 0.299391 & 3.081808 & \yuquan{0.6200} & \yuquan{0.5650} & \yuquan{0.3500} & \yuquan{0.3900} & \yuquan{0.4813} & 0.733383 & 0.257588 \\
\toole{} (full)    & 0.742160 & \yuquan{\textbf{0.408899}} & \yuquan{\textbf{2.289222}} & \yuquan{\textbf{3.4750}} & \yuquan{\textbf{3.7500}} & \yuquan{\textbf{3.5000}} & \yuquan{\textbf{3.6750}} & \yuquan{\textbf{3.6000}} & \textbf{0.797152} & \textbf{0.747758} \\
\bottomrule
\end{tabular}%
}
\vspace{-4pt}
\caption{\yuquan{\textcolor[rgb]{0, 0, 0}{Ablation of taxonomy refinement in \toole. The full pipeline is compared with a version without refinement. Metrics follow Table~\ref{tab:taxonomy_quality}. Bold marks the best result; Node Ratio favors values closer to 1.}}}
\label{tab:taxonomy_ablation}
\vspace{-16pt}
\end{table*}

\subsection{Taxonomy-Guided Survey Generation}
\label{sec:exp_rq3}

RQ3 evaluates the content, correspondence, and credibility of the final survey. All methods generate a survey for each of the same ten topics.

\subsubsection{LLM-Judged Survey Quality}
\label{sec:exp_rq31}


\textbf{Setting.}
\yuquan{\textcolor[rgb]{0, 0, 0}{GPT-5.2, Gemini 3.1 Pro, and DeepSeek V4 Pro score each method's surveys for Coverage, Relevance, and Structure on a five-point scale. Averaging the three dimensions yields 30 topic--judge scores per method; aggregation and full results appear in Appendices~\ref{app:rq3_metrics} and~\ref{app:judge_results}.}}

\begin{figure}[!t]
\centering
\includegraphics[width=1\columnwidth,trim=0 60 0 170,clip]{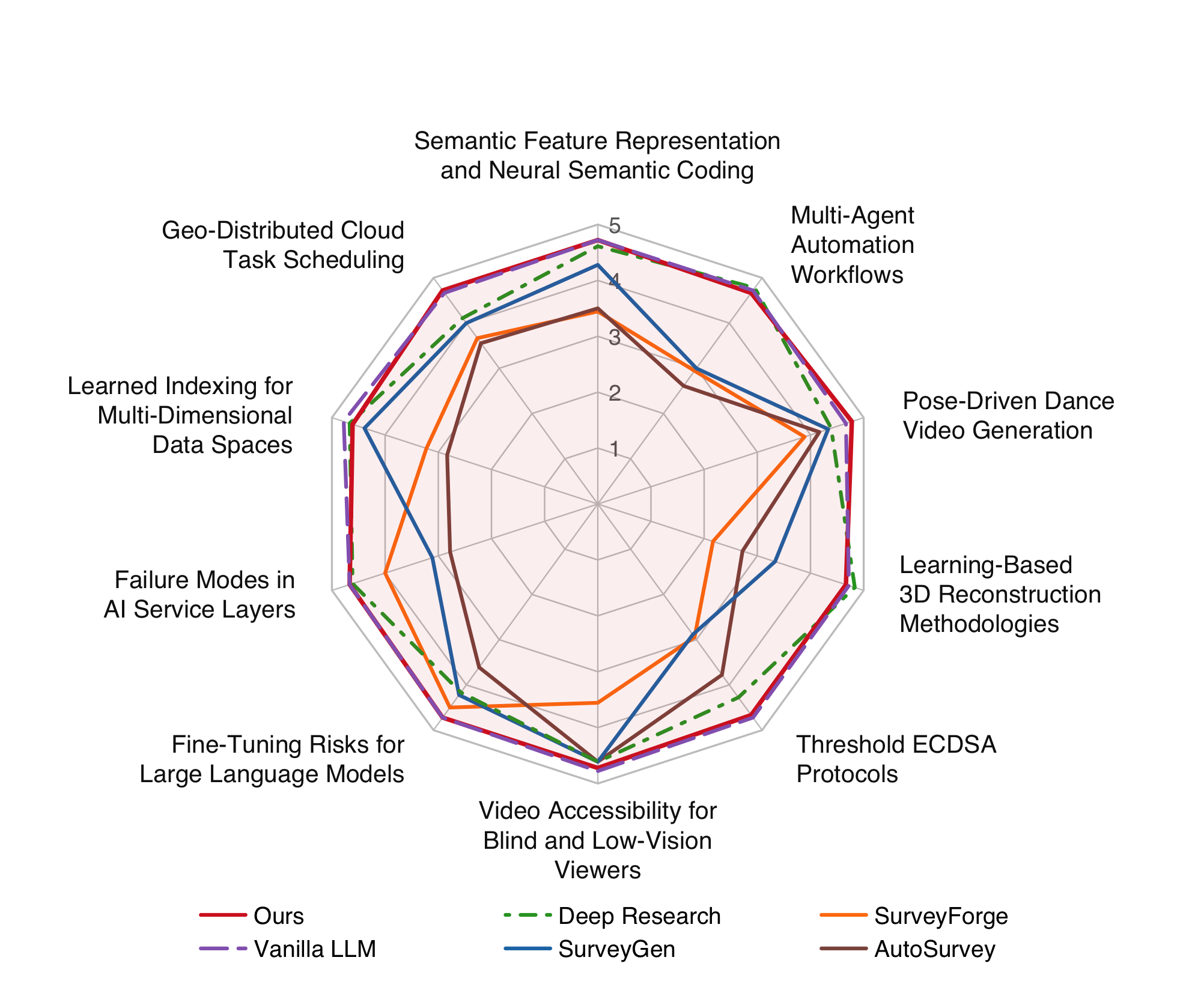}
\caption{\yuquan{\textcolor[rgb]{0, 0, 0}{Survey quality by topic. Axes average Coverage, Relevance, and Structure across three LLM judges.}}}
\vspace{-6pt}
\label{fig:q3_topic_quality}
\end{figure}

\textbf{Results.}
\yuquan{Figure~\ref{fig:q3_topic_quality} shows that \tool maintains high survey-quality scores across topics, outperforming SurveyGen, SurveyForge, AutoSurvey, and Deep Research on average while remaining comparable to Vanilla LLM (Appendix~\ref{app:judge_results}).}
\yuquan{In the same expert study, InsightRAG's survey was preferred in all 6/6 comparisons; semi-structured interviews further examined the reasons for these preferences and remaining weaknesses (Appendix~\ref{app:expert_comparison}).}

\vspace{-4pt}
\subsubsection{\yuquan{\textcolor[rgb]{0, 0, 0}{Content Correspondence and Citation Support of Generated Surveys}}}
\label{sec:exp_rq32}


\textbf{Setting.}
\yuquan{\textcolor[rgb]{0, 0, 0}{SS compares generated and human-survey passages; KPR measures coverage of the human survey's key points. Citation Recall counts claims supported by their full citation group, whereas Citation Precision counts citation instances that independently or necessarily support their claims. We also report reference count and VRR, the fraction resolved to real papers (Appendix~\ref{app:rq3_metrics}).}}

\textbf{Results.}
\yuquan{Table~\ref{tab:survey_faithfulness} shows that \tool ranks second in KPR and leads in Citation Recall, Citation Precision, and VRR. Vanilla LLM achieves higher SS, but substantially lower citation-support and reference-reliability scores.}


\vspace{-4pt}
\subsubsection{\yuquan{Passage-Based Citation Faithfulness}}
\label{sec:exp_passage_faithfulness}

\yuquan{\textcolor[rgb]{0, 0, 0}{\textbf{Setting.} On Multi-Agent Workflow Automation and Video Accessibility, we sample seven claim--citation pairs per survey for the six methods in Table~\ref{tab:survey_faithfulness} (84 total). An automatic judge reads cited full texts and identifies passages supporting each claim. Unverifiable references count as Unsupported; Supported includes full or partial support. We manually audit two pairs per survey (24 total).}}

\yuquan{\textcolor[rgb]{0, 0, 0}{\textbf{Results.} \tool has the highest automatic Supported rate (13/14, 92.9\%) and ties Vanilla LLM in the human audit (4/4). Automatic and human judgments agree on 20/24 pairs (83.3\%, $\kappa=0.655$); the automatic judge is more lenient in all four disagreements. Complete automatic and human results appear in Appendix~\ref{app:passage_faithfulness}.}}

\vspace{-2.4pt}
\subsection{Ablation Study}
\label{sec:exp_ablation}

\textbf{Setting.}
\yuquan{\textcolor[rgb]{0, 0, 0}{We independently ablate three scope-reconstruction components and taxonomy refinement, holding other settings fixed; full scope ablation results appear in Appendix~\ref{app:scope_ablation}.}}


\textbf{Scope reconstruction.}
\yuquan{\textcolor[rgb]{0, 0, 0}{Forward expansion contributes most, followed by backward expansion; loop closure and densification add smaller but consistent gains in Coverage and F1.}}

\textbf{Taxonomy construction.}
\yuquan{\textcolor[rgb]{0, 0, 0}{Table~\ref{tab:taxonomy_ablation} shows that refinement improves OutlineAvg (0.4813 to 3.6000) and PTA (0.2576 to 0.7478), despite slightly lower HSR. Heading correspondence alone does not ensure coherent organization or accurate assignment of retrieved papers to taxonomy leaves.}}

\vspace{-2.4pt}
\section{Conclusion}

We built \tool on a signal automatic survey generation had left unused: the judgments human authors record in Related Work sections about which prior work matters and how it relates.
\yuquan{Insight-guided retrieval follows only the relations an author's own words support, and minimum-conflict aggregation organizes the resulting citation-context evidence into a taxonomy that bounds each section's evidence, improving scope reconstruction, taxonomy quality, citation support, and reference reliability across ten topics.}
\yuquan{\textcolor[rgb]{0, 0, 0}{More broadly, this work provides a concrete case for reconsidering what human insight is already embedded in existing artifacts and how it can be leveraged to support literature discovery, organization, and synthesis.}}

\clearpage
\section*{Limitations}

\paragraph{Dependence on language models.} \tool relies on LLMs for RW Tree extraction, scope verification, taxonomy refinement, and section writing, while part of the evaluation uses LLM judges. Errors may propagate through the pipeline, so the generated survey and intermediate artifacts still require human verification.

\paragraph{\yuquan{Scope of author perspectives.}} \yuquan{\textcolor[rgb]{0, 0, 0}{\tool draws on author-written Related Work sections to organize and synthesize the retrieved literature. Shared omissions or biases in these sections may carry over into the resulting taxonomy and survey. Because retrieval is limited to the literature reached through citation traversal, some parts of the field may remain underrepresented.}}

\paragraph{\yuquan{\textcolor[rgb]{0, 0, 0}{Coverage of recent work.}}} \yuquan{\textcolor[rgb]{0, 0, 0}{Citation traversal can miss recent papers that are not yet indexed or connected to the explored graph; when forward candidate pools exceed the processing budget, prioritizing by citation impact and recency may also miss valuable low-citation work.}}

\paragraph{Risks of over-reliance.} Junior researchers may over-rely on automatically generated surveys instead of developing the skills to search, assess, and synthesize the literature independently. 
This may limit their ability to form their own judgments about a field's scope and structure. We therefore recommend human--AI co-creation: the generated scope, taxonomy, and text should serve as starting points that researchers inspect, revise, and validate.

\paragraph{API cost.} The cold-start stage requires many LLM calls to extract RW Trees and graph relations and to judge whether papers fall within the target scope. Once processed, the extracted structures and relation evidence can be stored and reused, while relevance decisions can be reused for the same scope. Subsequent runs therefore avoid most repeated extraction and verification costs.

\section*{Ethics Statement}

AI assistants were used in preparing this work, in two limited roles. 
For the paper itself, their use was restricted to language polishing and writing assistance;
AI assistants were also used to write part of the implementation code, which the authors reviewed, tested, and take full responsibility for.


\expandafter\def\csname yuquannewbib@Van_Daele_2024\endcsname{}
\expandafter\def\csname yuquannewbib@Xu_2025\endcsname{}
\expandafter\def\csname yuquannewbib@cohan2019scicite\endcsname{}
\expandafter\def\csname yuquannewbib@gao2025flowreasonerreinforcingquerylevelmetaagents\endcsname{}
\expandafter\def\csname yuquannewbib@han2023autoadmoviedescriptioncontext\endcsname{}
\expandafter\def\csname yuquannewbib@hsu2024chime\endcsname{}
\expandafter\def\csname yuquannewbib@hu2025higtl\endcsname{}
\expandafter\def\csname yuquannewbib@mohammad2009using\endcsname{}
\expandafter\def\csname yuquannewbib@palani2023relatedly\endcsname{}
\expandafter\def\csname yuquannewbib@valenzuela2015meaningful\endcsname{}
\expandafter\def\csname yuquannewbib@wang2025evoagentxautomatedframeworkevolving\endcsname{}
\expandafter\def\csname yuquannewbib@zhang2025multiagentarchitecturesearchagentic\endcsname{}
\expandafter\def\csname yuquannewbib@zhu2025context\endcsname{}
\begingroup
\let\yuquanoriginalbibitem\bibitem
\let\bibitem\yuquanbibliographyitem
\bibliography{custom}
\endgroup

\appendix

\section{\yuquan{\textcolor[rgb]{0, 0, 0}{Evaluation Dataset and Baseline Configurations}}}
\label{app:evaluation-dataset}

\begin{table*}[t]
\centering
\footnotesize
\setlength{\tabcolsep}{3.5pt}
\begin{tabular}{c p{0.42\textwidth} p{0.43\textwidth} r}
\hline
ID & CCF Research Area & Evaluation Topic and Source Survey & RW Trees \\
\hline
0 &
Interdisciplinary, Comprehensive, and Emerging Areas &
Multi-Agent Workflow Automation~\citep{survey_multi_agent_workflow} &
25 \\

1 &
Human-Computer Interaction and Ubiquitous Computing &
AI-Powered Video Accessibility for Blind and Low Vision Viewers~\citep{survey_video_accessibility} &
26 \\

2 &
Artificial Intelligence &
Pose-Driven Dance Video Generation~\citep{survey_human_motion_video} &
22 \\

3 &
Computer Graphics and Multimedia &
Learning-Based 3D Reconstruction Methodologies~\citep{survey_3d_reconstruction} &
28 \\

4 &
Theoretical Computer Science &
Threshold ECDSA Protocols~\citep{survey_threshold_ecdsa} &
20 \\

5 &
Network and Information Security &
Risks and Countermeasures of Fine-Tuning Large Language Models~\citep{survey_llm_finetuning_security} &
22 \\

6 &
Software Engineering, System Software, and Programming Languages &
Failure Modes in AI Service Layers~\citep{survey_ai_service_failures} &
20 \\

7 &
Databases, Data Mining, and Content Retrieval &
Learned Indexing for Multi-Dimensional Data Spaces~\citep{survey_multidimensional_learned_index} &
22 \\

8 &
Computer Networks &
Semantic Feature Representation and Neural Semantic Coding~\citep{survey_semantic_communications} &
25 \\

9 &
Computer Architecture, Parallel and Distributed Computing, and
Storage Systems &
Geo-Distributed Cloud Task Scheduling for Higher Compute Utilization~\citep{survey_geodistributed_scheduling} &
38 \\
\hline
\multicolumn{3}{r}{Total} & 248 \\
\hline
\end{tabular}
\caption{
Evaluation topics, their source surveys, and the realized numbers of Related Work (RW) Trees.
}
\label{app-dataset}
\end{table*}

We constructed ten evaluation instances covering the ten research areas used in the China Computer Federation (CCF) venue classification~\citep{ccf2022} (Table~\ref{app-dataset}).
For each research area, we selected a human-written source survey and used either the complete survey or a focused section with sufficient coverage of the target topic.

For instances based on a survey section, the LLM (GPT-5.5) received both the source-survey title and the selected section title; for instances based on the complete survey, it received only the survey title.
Based on the given titles, the LLM generated (1) an evaluation topic title and (2) a focused scope definition describing the research problem, included directions, and exclusion boundaries.
We manually reviewed the generated topic titles and scope definitions for consistency with the selected survey content.
The resulting scope definitions were used consistently for paper selection, citation filtering, and scope verification throughout the experiments.

We extracted the scientific papers cited in each selected survey unit, where the unit was either the complete survey or a selected section.
We then resolved their bibliographic records, removed duplicate versions and non-paper references, and retained only papers that matched the generated scope definition, yielding 304 in-scope candidate papers.

For each retained candidate paper, we identified its author-written Related Work discussion, resolved in-text citation markers against its bibliography, and filtered the resolved cited papers using the same topic-specific scope definition. The remaining citation contexts and cited papers were organized into a hierarchical RW Tree following the definition in Section~\ref{sec:method_rwt}.

After topic-scope filtering and extraction validation, the final dataset contains 248 valid RW Trees.
The resulting trees contain 3,937 citation-bearing leaf occurrences.
Because the same cited paper may occur in multiple RW Trees, these occurrences correspond to 2,611 unique cited papers.

\subsection{\yuquan{\textcolor[rgb]{0, 0, 0}{Baseline Configurations}}}
\label{app:baseline_settings}

\yuquan{\textbf{Setting.} We distinguish controlled retrieval baselines from automatic survey-generation (ASG) systems run in their native settings. BM25 and Dense use the same candidate source within each Offline or Online setting and follow the same RQ1 evaluation protocol; their retrieval cutoffs are reported in Table~\ref{tab:baseline_settings}.}

\begin{table*}[t]
\centering
\small
\setlength{\tabcolsep}{5pt}
\renewcommand{\arraystretch}{1.08}
\begin{tabular}{p{0.22\textwidth}p{0.33\textwidth}p{0.20\textwidth}r}
\toprule
\yuquan{\textbf{Method}} & \yuquan{\textbf{Candidate source}} & \yuquan{\textbf{Retrieval setting}} & \yuquan{\textbf{\#Papers}} \\
\midrule
\yuquan{BM25--Offline} & \yuquan{Frozen corpus (1,064 papers)} & \yuquan{Top-50/topic} & \yuquan{500} \\
\yuquan{Dense--Offline} & \yuquan{Same frozen corpus} & \yuquan{Top-50/topic} & \yuquan{500} \\
\yuquan{BM25--Online} & \yuquan{Semantic Scholar candidate pools} & \yuquan{Top-200/topic} & \yuquan{1,995} \\
\yuquan{Dense--Online} & \yuquan{Semantic Scholar candidate pools} & \yuquan{Top-100/topic} & \yuquan{993} \\
\yuquan{SurveyGen} & \yuquan{Native references} & \yuquan{Native} & \yuquan{246} \\
\yuquan{SurveyForge} & \yuquan{Native references} & \yuquan{Native} & \yuquan{1,063} \\
\yuquan{AutoSurvey} & \yuquan{Native references} & \yuquan{Native} & \yuquan{2,063} \\
\yuquan{\toole} & \yuquan{Citation traversal} & \yuquan{Paper budget 400} & \yuquan{1,879} \\
\bottomrule
\end{tabular}
\caption{\yuquan{\textcolor[rgb]{0, 0, 0}{Retrieval configurations for the RQ1 comparison. \#Papers totals retrieved papers across the ten topics.} BM25 and Dense share a candidate source within each Offline or Online setting, but use different Top-$k$ cutoffs Online. Dense retrieval uses BAAI/bge-base-en-v1.5 for ranking.}}
\label{tab:baseline_settings}
\end{table*}

\yuquan{\textbf{Comparison scope.} BM25 and Dense provide controlled retrieval comparisons. SurveyGen, SurveyForge, and AutoSurvey are compared using their native system outputs, not as budget-matched component baselines.}

\section{\yuquan{\textcolor[rgb]{0, 0, 0}{Additional Retrieval Analyses}}}
\subsection{\yuquan{\textcolor[rgb]{0, 0, 0}{Scope-Reconstruction Ablation}}}
\label{app:scope_ablation}

Table~\ref{tab:scope_ablation} reports the complete scope-reconstruction
ablation results. Each variant removes one component while retaining the rest
of the retrieval pipeline.

\begin{table}[t]
\centering
\footnotesize
\setlength{\tabcolsep}{2.5pt}
\resizebox{\columnwidth}{!}{%
\begin{tabular}{lccc}
\toprule
\textbf{Variant} & \textbf{Coverage}$\uparrow$ &
\textbf{Correctness}$\uparrow$ & \textbf{F1}$\uparrow$ \\
\midrule
w/o Forward Expansion             & 0.0871 & 0.9507 & 0.1595 \\
w/o Backward Expansion            & 0.2210 & \textbf{0.9755} & 0.3603 \\
w/o Loop Closure \& Densification & 0.3927 & 0.9577 & 0.5570 \\
\toole{} (Full)        & \textbf{0.4097} & 0.9521 & \textbf{0.5729} \\
\bottomrule
\end{tabular}%
}
\caption{\yuquan{\textcolor[rgb]{0, 0, 0}{Ablation of scope-reconstruction components in \toole.}} Bold indicates the best
result in each column.}
\label{tab:scope_ablation}
\end{table}

\subsection{\yuquan{\textcolor[rgb]{0, 0, 0}{Seed Sensitivity}}}
\label{app:seed_sensitivity}

\paragraph{\yuquan{\textbf{Setting.}}}
\yuquan{For each of two topics, we asked an LLM to suggest two alternative in-scope seed papers based on the topic. We reran retrieval from the original and alternative seeds, keeping the topic definition, gold scope, and Q1 evaluation protocol fixed. Table~\ref{tab:seed_sensitivity} reports the results for each seed.}

\begin{table*}[t]
\centering
\small
\setlength{\tabcolsep}{3pt}
\renewcommand{\arraystretch}{1.05}
\resizebox{\textwidth}{!}{%
\begin{tabular}{lp{9.0cm}rccc}
\toprule
\yuquan{\textbf{Role}} & \yuquan{\textbf{Seed paper}} & \yuquan{\textbf{\#Papers}} & \yuquan{\textbf{Coverage}} & \yuquan{\textbf{Correctness}} & \yuquan{\textbf{F1}} \\
\midrule
\multicolumn{6}{l}{\yuquan{Multi-Agent Workflow Automation~\citep{survey_multi_agent_workflow}}} \\
\yuquan{Original} & \yuquan{Multi-agent Architecture Search via Agentic Supernet~\citep{zhang2025multiagentarchitecturesearchagentic}} & \yuquan{69} & \yuquan{0.6400} & \yuquan{0.8841} & \yuquan{0.7425} \\
\yuquan{Alternative} & \yuquan{FlowReasoner: Reinforcing Query-Level Meta-Agents~\citep{gao2025flowreasonerreinforcingquerylevelmetaagents}} & \yuquan{105} & \yuquan{0.7200} & \yuquan{0.9238} & \yuquan{0.8093} \\
\yuquan{Alternative} & \yuquan{EvoAgentX: An Automated Framework for Evolving Agentic Workflows~\citep{wang2025evoagentxautomatedframeworkevolving}} & \yuquan{122} & \yuquan{0.8400} & \yuquan{0.8361} & \yuquan{0.8380} \\
\midrule
\multicolumn{6}{l}{\yuquan{AI-Powered Video Accessibility for Blind and Low Vision Viewers~\citep{survey_video_accessibility}}} \\
\yuquan{Original} & \yuquan{Making Short-Form Videos Accessible with Hierarchical Video Summaries~\citep{Van_Daele_2024}} & \yuquan{34} & \yuquan{0.6154} & \yuquan{0.9706} & \yuquan{0.7532} \\
\yuquan{Alternative} & \yuquan{AutoAD: Movie Description in Context~\citep{han2023autoadmoviedescriptioncontext}}
& \yuquan{46} & \yuquan{0.4615} & \yuquan{0.9783} & \yuquan{0.6272} \\
\yuquan{Alternative} & \yuquan{Branch Explorer: Leveraging Branching Narratives to Support Interactive 360$^\circ$ Video Viewing for Blind and Low Vision Users~\citep{Xu_2025}} & \yuquan{62} & \yuquan{0.5769} & \yuquan{0.9516} & \yuquan{0.7183} \\
\bottomrule
\end{tabular}
}
\caption{\yuquan{\textcolor[rgb]{0, 0, 0}{Seed sensitivity of \toole{}'s retrieval. Results cover two topics.} \#Papers counts evaluated retrieval candidates; Coverage, Correctness, and F1 follow the RQ1 definitions.}}
\label{tab:seed_sensitivity}
\end{table*}

\paragraph{\yuquan{\textbf{Results.}}}
\yuquan{Correctness remains high across all six runs (0.8361--0.9783), but Coverage and F1 depend on the seed. For Multi-Agent Workflow Automation, FlowReasoner increases both Coverage (0.6400 to 0.7200) and Correctness (0.8841 to 0.9238) relative to the original seed, while EvoAgentX attains the highest Coverage (0.8400) at lower Correctness (0.8361). For Video Accessibility, Coverage and F1 do not increase monotonically with the number of retrieved papers. These results suggest that different seeds expose different reachable portions of the citation graph, rather than merely changing retrieval volume.}

\section{\yuquan{Human Evaluation}}
\label{app:human_evaluation}

\subsection{\yuquan{Expert Audit of Retrieved Papers}}
\label{app:retrieval_expert_audit}
\label{app:scope_audit}
\label{sec:exp_rq13}

\yuquan{This experiment validates whether the LLM-based Correctness measure used in
RQ1 remains reliable under expert judgment.}

\paragraph{\yuquan{\textbf{Setting.}}}
\yuquan{The automatic measure could be overestimated if the LLM judge labels
irrelevant papers as relevant. We therefore analyze the most uncertain portion
of the reconstructed scope: papers retrieved by \tool but absent
from the human-curated scope. We sample ten papers for each of the ten topics,
giving 100 papers in total. Two PhD researchers independently judge every paper
against the corresponding topic definition. We report human-validated
precision, false-discovery rate, a confidence interval, and extrapolated scope
metrics in Table~\ref{tab:manual_false_positive}.}

\begin{table}[t]
\centering
\small
\setlength{\tabcolsep}{3.5pt}
\renewcommand{\arraystretch}{1.02}
\begin{tabular*}{\columnwidth}{@{\extracolsep{\fill}}lcc@{}}
\toprule
\yuquan{\textbf{Metric}} & \yuquan{\textbf{Evidence}} & \yuquan{\textbf{Value}} \\
\midrule
\yuquan{Validated extra precision} & \yuquan{$90/100$} & \yuquan{0.9000} \\
\yuquan{False-discovery rate} & \yuquan{$10/100$} & \yuquan{0.1000} \\
\yuquan{Precision 95\% CI} & \yuquan{100 samples} & \yuquan{$[0.8256, 0.9448]$} \\
\yuquan{Projected precision} & \yuquan{Extrapolated} & \yuquan{0.9005} \\
\yuquan{Projected recall} & \yuquan{Extrapolated} & \yuquan{0.8915} \\
\yuquan{Projected F1} & \yuquan{Extrapolated} & \yuquan{0.8959} \\
\bottomrule
\end{tabular*}
\caption{\yuquan{Expert validation of \tool papers outside the gold
scope.}}
\label{tab:manual_false_positive}
\end{table}

\paragraph{\yuquan{\textbf{Results.}}}
\yuquan{Most candidate false positives are judged relevant by the PhD evaluators. This
supports the reliability of the LLM-based Correctness measure and explains part
of the apparent mismatch with the gold scope: some papers counted as extra are
valid related work omitted from the human survey.}

\subsection{\yuquan{Method-Blinded Expert Comparison}}
\label{app:expert_comparison}

\yuquan{\textbf{Setting.} We compared InsightRAG and SurveyGen on two topics, Multi-Agent Workflow Automation and AI-Powered Video Accessibility for Blind and Low Vision Viewers. Three domain experts evaluated each topic (six in total). Given the topic scope, they saw both methods' taxonomies and surveys labeled System A and System B without method names; presentation order was balanced across experts. The evaluation guidance covered conceptual coverage, non-redundancy, and hierarchy for taxonomies, and content coverage, topical relevance, and organization for surveys. Experts selected A, B, or similar for each artifact, followed by a semi-structured interview about organization, strengths and weaknesses, and potential research use.}

\yuquan{\textbf{Results.} InsightRAG's taxonomy was preferred in 5/6 comparisons: 2/3 for Multi-Agent Workflow Automation and 3/3 for Video Accessibility. Its survey was preferred in all 6/6 comparisons, with 3/3 preferences on each topic.}

\yuquan{\textbf{Interview findings.} Experts often attributed InsightRAG's advantage to topic-specific organization: its categories were more directly tied to the target problem and more often followed a problem-solving or technical progression, whereas SurveyGen was perceived as broader and more list-like. All six experts perceived a connection between a taxonomy and its resulting survey organization; one noted that this held for both systems. Experts also identified weaknesses in topical focus, coverage of important directions, technical specificity, and progression across sections. In particular, some subtopics remained parallel rather than progressively connected, and cross-cutting or future-oriented sections were not always well integrated with the preceding structure. Experts envisioned using the system when entering or organizing a research area, identifying research gaps, or positioning their own work.}

\subsection{\yuquan{Human Evaluation Protocol}}
\label{app:human_evaluation_protocol}

\paragraph{\yuquan{Participants and compensation.}}
\yuquan{Participants in the retrieved-paper audit, expert comparison and interviews, and passage-based human audit were recruited online. All were PhD students with expertise related to the evaluation topics and were not authors of this paper. Each participant received USD~8 for approximately one hour of participation. Geographic information is not reported.}

\paragraph{\yuquan{Consent and reporting.}}
\yuquan{Before participation, participants were informed of the research purpose and the use of their evaluation responses and interview feedback, and gave their consent. They were explicitly informed that participation was voluntary and that they could withdraw. We report aggregate judgments and anonymized interview findings.}

\paragraph{\yuquan{Materials and instructions.}}
\yuquan{For the 100-paper retrieval audit, annotators received paper URLs and judged relevance against the corresponding topic definition. For the 24-pair citation audit, they were provided with the cited papers' full texts to assess support for the claims. The expert-comparison interface presented the following task instructions and evaluation criteria.}

\begin{quote}
\yuquan{Compare System A and System B for one assigned research topic. Their identities are hidden.}

\yuquan{\textbf{Taxonomy.} Judge the conceptual organization only. Paper identifiers and method names have been removed.}
\begin{itemize}[leftmargin=*,itemsep=0pt,topsep=2pt]
\item \yuquan{Coverage of the important conceptual areas in the stated topic scope.}
\item \yuquan{Distinct and non-redundant categories.}
\item \yuquan{Coherent parent-child organization and appropriate granularity.}
\end{itemize}
\yuquan{Which taxonomy better organizes the target research area overall?}

\yuquan{\textbf{Survey.} Judge the supplied survey text against the target scope. Citation faithfulness is evaluated separately.}
\begin{itemize}[leftmargin=*,itemsep=0pt,topsep=2pt]
\item \yuquan{Coverage of important methods, findings, limitations, and representative work.}
\item \yuquan{Relevance to the stated topic scope, without substantial off-topic material.}
\item \yuquan{Clear organization, coherent synthesis, and useful comparisons across work.}
\end{itemize}
\yuquan{Which survey is better overall for the target research scope?}

\yuquan{The response options for each preference question were ``A is better,'' ``Similar,'' and ``B is better.''}
\end{quote}

\section{\yuquan{\textcolor[rgb]{0, 0, 0}{Additional Taxonomy and Survey Results}}}
\subsection{\yuquan{\textcolor[rgb]{0, 0, 0}{Taxonomy Structural Diagnostics}}}
\label{app:taxonomy_diagnostics}

Node Ratio alone cannot show whether a granularity difference comes from more
categories, excessive depth, or overly detailed leaves. We therefore report
taxonomy size, depth, branching, and leaf statistics in
Table~\ref{tab:taxonomy_structure}.

\begin{table*}[t]
\centering
\small
\setlength{\tabcolsep}{4pt}
\resizebox{\textwidth}{!}{%
\begin{tabular}{lrrrrrrr}
\toprule
\textbf{Method} & \textbf{\#Nodes} & \textbf{Node Ratio} &
\textbf{Max Depth} & \textbf{Avg. Branch.} & \textbf{Leaf Ratio} &
\textbf{Small Leaf \%} & \textbf{Papers/Leaf} \\
\midrule
Vanilla LLM          & 30.20 & 1.3464 & 1.90 & 2.0528 & 0.7217 & 87.69 & 1.5533 \\
SurveyGen   & 27.30 & 1.1763 & 2.00 & 3.7833 & 0.7885 & 10.00 & 7.6436 \\
SurveyForge & 29.80 & 1.3092 & 2.00 & 4.9600 & 0.8321 & 1.60 & 9.6703 \\
AutoSurvey  & 52.10 & 2.2906 & 2.00 & 6.4429 & 0.8655 & 4.44 & 9.3235 \\
Deep Research        & 7.70 & 0.3482 & 1.80 & 2.1500 & 0.8242 & 36.17 & 3.5950 \\
\toole  & 60.20 & 2.2892 & 2.40 & 3.4767 & 0.7351 & 51.79 & 3.7362 \\
\bottomrule
\end{tabular}
}
\caption{Structural statistics of the generated taxonomies. These diagnostic
metrics describe granularity and do not share a single optimization direction.}
\label{tab:taxonomy_structure}
\end{table*}

Table~\ref{tab:taxonomy_structure} shows that the higher Node Ratio of
\tool mainly comes from a moderately deeper hierarchy and a larger
share of small, fine-grained leaves. Its branching factor and leaf ratio do not
indicate an uncontrolled expansion of broad categories. The main issue is leaf
fragmentation, which could be reduced by merging semantically similar
fine-grained nodes.

\subsection{\yuquan{\textcolor[rgb]{0, 0, 0}{Detailed LLM-Judge Results}}}
\label{app:judge_results}

Table~\ref{tab:survey_llm_judges} provides the complete Coverage, Structure,
Relevance, and average scores for each of the three judge models used in RQ3.1.

\begin{table*}[t]
\centering
\fontsize{8.5}{9.5}\selectfont
\renewcommand{\arraystretch}{1.0}
\setlength{\tabcolsep}{2.2pt}
\begin{tabular*}{\textwidth}{@{}l@{\extracolsep{\fill}}cccc@{\hspace{4pt}}cccc@{\hspace{4pt}}cccc@{}}
\toprule
& \multicolumn{4}{c}{\textbf{GPT-5.2}}
& \multicolumn{4}{c}{\textbf{Gemini 3.1 Pro}}
& \multicolumn{4}{c}{\textbf{DeepSeek V4 Pro}} \\
\cmidrule(lr){2-5}\cmidrule(lr){6-9}\cmidrule(lr){10-13}
\textbf{Method}
& \textbf{Cov.} & \textbf{Str.} & \textbf{Rel.} & \textbf{Avg.}
& \textbf{Cov.} & \textbf{Str.} & \textbf{Rel.} & \textbf{Avg.}
& \textbf{Cov.} & \textbf{Str.} & \textbf{Rel.} & \textbf{Avg.} \\
\midrule
Vanilla LLM
& \yuquan{\textbf{4.45}} & \yuquan{\textbf{4.50}} & \yuquan{\textbf{4.50}} & \yuquan{\textbf{4.48}}
& \yuquan{\textbf{5.00}} & \yuquan{\textbf{5.00}} & \textbf{5.00} & \yuquan{\textbf{5.00}}
& \yuquan{\textbf{4.50}} & \yuquan{\textbf{4.60}} & \yuquan{\textbf{4.90}} & \yuquan{\textbf{4.67}} \\
SurveyGen
& \yuquan{3.70} & \yuquan{4.05} & \yuquan{3.50} & \yuquan{3.75}
& \yuquan{3.75} & \yuquan{4.45} & \yuquan{3.70} & \yuquan{3.97}
& \yuquan{3.70} & \yuquan{3.75} & \yuquan{3.75} & \yuquan{3.73} \\
SurveyForge
& \yuquan{3.15} & \yuquan{3.65} & \yuquan{3.10} & \yuquan{3.30}
& \yuquan{4.55} & \yuquan{3.20} & \yuquan{4.10} & \yuquan{3.95}
& \yuquan{3.25} & \yuquan{2.90} & \yuquan{3.00} & \yuquan{3.05} \\
AutoSurvey
& \yuquan{3.00} & \yuquan{3.85} & \yuquan{2.65} & \yuquan{3.17}
& \yuquan{3.85} & \yuquan{4.65} & \yuquan{3.35} & \yuquan{3.95}
& \yuquan{3.15} & \yuquan{3.40} & \yuquan{2.85} & \yuquan{3.13} \\
Deep Research
& \yuquan{4.00} & \yuquan{4.05} & \yuquan{4.35} & \yuquan{4.13}
& \yuquan{4.90} & \yuquan{4.80} & \yuquan{\textbf{5.00}} & \yuquan{4.90}
& \yuquan{4.35} & \yuquan{4.25} & \yuquan{4.85} & \yuquan{4.48} \\
\toole
& \yuquan{4.35} & \yuquan{\textbf{4.50}} & \yuquan{4.40} & \yuquan{4.42}
& \yuquan{\textbf{5.00}} & \yuquan{\textbf{5.00}} & \textbf{5.00} & \yuquan{\textbf{5.00}}
& \yuquan{\textbf{4.50}} & \yuquan{\textbf{4.60}} & \textbf{4.90} & \yuquan{\textbf{4.67}} \\
\bottomrule
\end{tabular*}
\caption{LLM-judged survey quality across three judge models. Cov., Str., and
Rel. denote Coverage, Structure, and Relevance, respectively. Bold values
indicate the best result in each column.}
\label{tab:survey_llm_judges}
\end{table*}

\subsection{\yuquan{\textcolor[rgb]{0, 0, 0}{Passage-Based Citation Faithfulness Audit}}}
\label{app:passage_faithfulness}

\yuquan{\textcolor[rgb]{0, 0, 0}{We evaluated two topics: Multi-Agent
Workflow Automation and AI-Powered Video Accessibility for Blind and Low
Vision Viewers. The six methods were Vanilla LLM, SurveyGen, SurveyForge,
AutoSurvey, Deep Research, and \toole. We sampled seven claim--citation pairs
from each generated survey, yielding 84 pairs.} The automatic judge was not shown
method identities. For 74 pairs with a retrievable cited source, it read the
extracted full text and returned page-marked evidence or the closest passage.
The remaining ten pairs had unverifiable references and were labeled
Unsupported. The judge model was
\textcolor[rgb]{0, 0, 0}{Qwen-Plus}. It internally distinguished fully supported, partially
supported, and unsupported; for the reported binary comparison, Supported
combines the first two labels. Thus, Supported here means at least partial
passage support, not that every assertion in the claim is fully supported.}

\begin{table}[t]
\centering
\small
\setlength{\tabcolsep}{3pt}
\begin{tabular}{lcc}
\toprule
\yuquan{\textbf{Method}} & \yuquan{\textbf{Auto Supported}} & \yuquan{\textbf{Human Supported}} \\
\midrule
\yuquan{Vanilla LLM} & \yuquan{11/14 (78.6\%)} & \yuquan{\textbf{4/4 (100\%)}} \\
\yuquan{SurveyGen} & \yuquan{11/14 (78.6\%)} & \yuquan{1/4 (25\%)} \\
\yuquan{SurveyForge} & \yuquan{4/14 (28.6\%)} & \yuquan{1/4 (25\%)} \\
\yuquan{AutoSurvey} & \yuquan{2/14 (14.3\%)} & \yuquan{1/4 (25\%)} \\
\yuquan{Deep Research} & \yuquan{10/14 (71.4\%)} & \yuquan{2/4 (50\%)} \\
\yuquan{\toole} & \yuquan{\textbf{13/14 (92.9\%)}} & \yuquan{\textbf{4/4 (100\%)}} \\
\bottomrule
\end{tabular}
\caption{\yuquan{\textcolor[rgb]{0, 0, 0}{Passage-based citation support of generated surveys.} Auto scores use 14
claim--citation pairs per method; human scores use four method-blinded pairs
per method. Fully and partially supported automatic labels count as Supported;
unverifiable references count as Unsupported.}}
\label{tab:passage_faithfulness}
\end{table}

\yuquan{\textcolor[rgb]{0, 0, 0}{The human audit covers 24 pairs, two from each generated survey.}
\tool and Vanilla LLM each have four human-supported pairs. Automatic and
human binary judgments agree on 20/24 pairs (83.3\%, $\kappa=0.655$); in all
four disagreements the automatic judge is more lenient. Because this audit
covers only two topics and a small number of pairs per method, it supplements
rather than replaces the ten-topic citation-entry evaluation in the main text.
The actual full-text judge prompt is given in
\S\ref{app:prompt_passage_faithfulness}.}

\section{Metric Definitions and Evaluation Details}
\label{app:metric_definitions}
\label{sec:appendix}

This section collects the formal definitions and implementation details of all
metrics used in Section~\ref{sec:experiments}. Let $\mathcal{T}$ denote the set
of evaluation topics and $m$ a method.

\subsection{Research-Scope Reconstruction Metrics}
\label{app:rq1_metrics}

For topic $t$, let $G_t$ be the human-curated gold paper set, $R_{m,t}$ the
papers retrieved by method $m$, $J_{m,t}\subseteq R_{m,t}$ the retrieved papers
successfully evaluated by the relevance judge, and
$\mathrm{Rel}_{m,t}\subseteq J_{m,t}$ those judged relevant. We compute
topic-level Coverage and Correctness as
\begin{equation}
\begin{aligned}
\mathrm{Coverage}_{m,t}
  &=\frac{|R_{m,t}\cap G_t|}{|G_t|},\\
\mathrm{Correctness}_{m,t}
  &=\frac{|\mathrm{Rel}_{m,t}|}{|J_{m,t}|}.
\end{aligned}
\label{eq:scope_coverage_correctness}
\end{equation}
The reported scores average the corresponding topic-level values:
\begin{equation}
\begin{aligned}
\mathrm{Coverage}_{m}
  &=\frac{1}{|\mathcal{T}|}\sum_{t\in\mathcal{T}}
    \mathrm{Coverage}_{m,t},\\
\mathrm{Correctness}_{m}
  &=\frac{1}{|\mathcal{T}|}\sum_{t\in\mathcal{T}}
    \mathrm{Correctness}_{m,t}.
\end{aligned}
\label{eq:scope_aggregation}
\end{equation}
Their harmonic mean gives
\begin{equation}
\mathrm{F1}_{m}=\frac{2\,\mathrm{Coverage}_{m}\,
\mathrm{Correctness}_{m}}{\mathrm{Coverage}_{m}+
\mathrm{Correctness}_{m}}.
\label{eq:scope_f1}
\end{equation}
Coverage therefore measures overlap with the human scope, whereas Correctness
is relevance precision estimated by the LLM judge rather than gold-set
precision.

For the expert audit, validated extra-paper precision is the fraction of sampled
papers judged relevant, and false-discovery rate is one minus this precision.
Projected precision, recall, and F1 extrapolate the audited relevance rate to
the complete set of papers retrieved outside the gold scope.

\subsection{Taxonomy Evaluation Metrics}
\label{app:rq2_metrics}

Let $V_G$ and $V_C$ be the non-root nodes of the gold and generated taxonomies,
and let $d(\cdot)$ denote node depth. Heading Soft Recall (HSR) matches every
gold heading to its most similar generated heading and penalizes depth
disagreement:
\begin{equation}
\mathrm{HSR}=\frac{1}{|V_G|}\sum_{g\in V_G}\max_{c\in V_C}
\mathrm{sim}(g,c)\,0.9^{|d(g)-d(c)|}.
\label{eq:hsr}
\end{equation}
\yuquan{For the reported HSR and CEDS scores, $\mathrm{sim}(g,c)$ is the cosine
similarity between embeddings of the gold and generated taxonomy node paths,
clipped to $[0,1]$. The root heading is excluded from each path; a node with
no remaining path uses its own heading. The embedding model is
\textcolor[rgb]{0, 0, 0}{Qwen3-Embedding-0.6B}.}
Catalogue Edit Distance Similarity (CEDS) normalizes tree-edit distance by the
combined number of nodes:
\begin{equation}
\mathrm{CEDS}=1-\frac{\mathrm{TED}(T_G,T_C)}{|V_G|+|V_C|}.
\label{eq:ceds}
\end{equation}
\yuquan{The tree-edit cost uses $1-\mathrm{sim}(g,c)$ to relabel matched nodes
and counts nodes in inserted or deleted subtrees.}
Node Ratio is $|V_C|/|V_G|$; unlike the other metrics, it is descriptive, and a
value closer to one indicates a taxonomy size closer to the human reference.

For outline quality, the aggregate score averages Topic Uniqueness (TU),
Structural Balance (SB), Hierarchical Clarity (HC), and Logical Organization
(LO):
\begin{equation}
\mathrm{OutlineAvg}=\frac{\mathrm{TU}+\mathrm{SB}+\mathrm{HC}+\mathrm{LO}}{4}.
\label{eq:outline_average}
\end{equation}

\yuquan{Let $\mathcal{R}$ be the concepts extracted from Related Work trees.
For RW--Taxonomy Alignment, $\mathrm{sim}(r,v)$ is the cosine similarity
between embeddings of an RW concept's path (with nonredundant evidence text
when available) and a taxonomy node's path, using the same embedding model. We compute
the two directional best-match scores}
\begin{equation}
\begin{aligned}
R_{\mathrm{RW}}
  &=\frac{1}{|\mathcal{R}|}\sum_{r\in\mathcal{R}}
    \max_{v\in V_C}\mathrm{sim}(r,v),\\
P_{\mathrm{RW}}
  &=\frac{1}{|V_C|}\sum_{v\in V_C}
    \max_{r\in\mathcal{R}}\mathrm{sim}(v,r),
\end{aligned}
\label{eq:rw_recall_precision}
\end{equation}
and define RW--Taxonomy Alignment as their harmonic mean:
\begin{equation}
\mathrm{RWTA}=\frac{2R_{\mathrm{RW}}P_{\mathrm{RW}}}
{R_{\mathrm{RW}}+P_{\mathrm{RW}}}.
\label{eq:rwta}
\end{equation}
\yuquan{\textcolor[rgb]{0, 0, 0}{HSR and CEDS compare the generated taxonomy with the human-survey reference hierarchy, assessing its correspondence with human-authored organization. RWTA instead measures alignment with organizational concepts extracted from author-written Related Work in the explored literature. Because this material also informs taxonomy construction, RWTA assesses evidence alignment rather than providing an independent human-reference comparison.}}
Paper--Taxonomy Assignment Accuracy (PTA) is the number of paper assignments
judged correct divided by the number of evaluated assignments.

\subsection{Survey Evaluation Metrics}
\label{app:rq3_metrics}

Each LLM judge scores Coverage, Relevance, and Structure on a five-point scale.
The score for one judge--topic evaluation is
\begin{equation}
\begin{aligned}
\mathrm{SurveyAvg}=\frac{1}{3}(&\mathrm{Coverage}+\mathrm{Relevance}\\
&+\mathrm{Structure}).
\end{aligned}
\label{eq:survey_judge_average}
\end{equation}

For Semantic Similarity (SS), let $H$ and $G$ denote the chunks of the human and
generated surveys. We compute best-match similarities in both directions:
\begin{equation}
\begin{aligned}
s_{\mathit{human}\rightarrow\mathit{gen}}
&=\frac{1}{|H|}\sum_{h\in H}\max_{g\in G}\cos(h,g),\\
s_{\mathit{gen}\rightarrow\mathit{human}}
&=\frac{1}{|G|}\sum_{g\in G}\max_{h\in H}\cos(g,h),\\
\mathrm{SS}&=\frac{s_{\mathit{human}\rightarrow\mathit{gen}}+
s_{\mathit{gen}\rightarrow\mathit{human}}}{2}.
\end{aligned}
\label{eq:semantic_similarity}
\end{equation}
For topic $t$, let $K_t$ be the number of human key points and
$K^{\mathrm{cov}}_{m,t}$ the number covered by method $m$. Key-Point Recall
(KPR) is
\begin{equation}
\mathrm{KPR}_{m,t}=100\times\frac{K^{\mathrm{cov}}_{m,t}}{K_t}.
\label{eq:kpr}
\end{equation}

For citation evaluation, a claim is recalled when its complete citation group
supports it. An individual citation is precise when it independently supports
the claim, or when the complete group supports the claim and removing that
citation makes the remaining group insufficient. Let $C_{\mathrm{sup}}$ and
$C_{\mathrm{all}}$ be the numbers of supported and evaluated claims, and let
$I_{\mathrm{nec}}$ and $I_{\mathrm{all}}$ be the numbers of necessary and total
citation instances. Thus,
\begin{equation}
\begin{aligned}
\mathrm{CitRecall}&=\frac{C_{\mathrm{sup}}}{C_{\mathrm{all}}},\\
\mathrm{CitPrecision}&=\frac{I_{\mathrm{nec}}}{I_{\mathrm{all}}}.
\end{aligned}
\label{eq:citation_faithfulness}
\end{equation}
Finally, let $N_{\mathrm{match}}$ and $N_{\mathrm{ref}}$ be the numbers of
matched and extracted reference entries. Verifiable Reference Rate (VRR) is
\begin{equation}
\mathrm{VRR}=\frac{N_{\mathrm{match}}}{N_{\mathrm{ref}}}.
\label{eq:vrr}
\end{equation}
The reported KPR, citation, and reference scores average the corresponding
topic-level values. The \#References column reports the total number of
extracted reference entries rather than claim--citation pairs.

\section{Algorithmic and Implementation Details}
\label{app:algorithmic_details}

\subsection{\yuquan{Backbone Models, Retrieval Budgets, Runtime, and Token Usage}}
\label{app:implementation_budget}

\yuquan{Table~\ref{tab:implementation_budget} summarizes the backbone models, principal limits, and available token estimates for four main stages. The greedy taxonomy constructor is deterministic and makes no LLM calls in that step.}

\begin{table*}[t]
\centering
\small
\setlength{\tabcolsep}{5pt}
\renewcommand{\arraystretch}{1.12}
\begin{tabular}{p{0.23\textwidth}p{0.15\textwidth}p{0.23\textwidth}p{0.27\textwidth}}
\toprule
\yuquan{\textbf{Stage}} & \yuquan{\textbf{Model}} & \yuquan{\textbf{Main limits}} & \yuquan{\textbf{Estimated tokens per topic}} \\
\midrule
\yuquan{Insight-guided retrieval} & \yuquan{Qwen-Max} & \yuquan{Retrieval depth 4; paper budget 400} & \yuquan{1.4M--43.7M input; median 9.1M} \\
\yuquan{Greedy taxonomy construction} & \yuquan{Deterministic greedy} & \yuquan{Maximum taxonomy depth 4} & \yuquan{0 LLM tokens in this step} \\
\yuquan{Global taxonomy refinement} & \yuquan{Qwen-Max} & \yuquan{Maximum taxonomy depth 4} & \yuquan{4.7k--31.1k recorded input} \\
\yuquan{Evidence-bounded survey generation} & \yuquan{Qwen-Plus} & \yuquan{N/A} & \yuquan{14k--84k final-output proxy} \\
\bottomrule
\end{tabular}
\caption{\yuquan{\textcolor[rgb]{0, 0, 0}{Implementation settings and token estimates for \toole.} Input estimates and the final-output proxy have different accounting bases; these figures do not constitute a complete end-to-end token total.}}
\label{tab:implementation_budget}
\end{table*}

\yuquan{Across the ten evaluation topics, \tool retrieved 1,879 papers in total (187.9 per topic on average). Survey generation took 5.0--28.2 minutes per topic (median 6.4 minutes); this is survey-generation time, not the runtime of the full retrieval-to-survey pipeline.}

\subsection{Insight-Guided Retrieval}
\label{app:algorithm_retrieval}

The pseudocode below makes explicit both exploration operators and their
scheduling. The research request is converted into \(d\) using the
scope-structuring prompt (\S\ref{app:prompt_scope_structuring}).
\textsc{Scope-Admission} and \textsc{Extract-RWT} invoke the templates in
\S\ref{app:prompt_scope_admission} and \S\ref{app:prompt_rwt_extraction}.
The schedule uses a forward-round bound \(F_{\max}\), a per-paper Forward
candidate budget \(K_f\), a Backward depth bound \(D_{\max}\), and a global
paper budget \(B_p\).
\yuquan{\textcolor[rgb]{0, 0, 0}{For each of the ten main-study topics, we manually selected one representative, high-impact paper within the target scope as the initial seed. The pipeline resolves the supplied title against Semantic Scholar and verifies the paper's fit with the scope before expansion. Table~\ref{tab:initial_seeds} lists the seeds.}}

\begin{table*}[t]
\centering
\small
\setlength{\tabcolsep}{5pt}
\renewcommand{\arraystretch}{1.08}
\begin{tabular}{cp{0.84\textwidth}}
\toprule
\yuquan{\textcolor[rgb]{0, 0, 0}{\textbf{Topic ID}}} & \yuquan{\textcolor[rgb]{0, 0, 0}{\textbf{Initial seed paper}}} \\
\midrule
\yuquan{\textcolor[rgb]{0, 0, 0}{0}} & \yuquan{\textcolor[rgb]{0, 0, 0}{Multi-agent Architecture Search via Agentic Supernet~\citep{zhang2025multiagentarchitecturesearchagentic}}} \\
\yuquan{\textcolor[rgb]{0, 0, 0}{1}} & \yuquan{\textcolor[rgb]{0, 0, 0}{Making Short-Form Videos Accessible with Hierarchical Video Summaries~\citep{Van_Daele_2024}}} \\
\yuquan{\textcolor[rgb]{0, 0, 0}{2}} & \yuquan{\textcolor[rgb]{0, 0, 0}{Everybody Dance Now~\citep{TODO_seed_pose}}} \\
\yuquan{\textcolor[rgb]{0, 0, 0}{3}} & \yuquan{\textcolor[rgb]{0, 0, 0}{POCO: Point Convolution for Surface Reconstruction~\citep{TODO_seed_3d_reconstruction}}} \\
\yuquan{\textcolor[rgb]{0, 0, 0}{4}} & \yuquan{\textcolor[rgb]{0, 0, 0}{Fast Multiparty Threshold ECDSA with Fast Trustless Setup~\citep{TODO_seed_threshold_ecdsa}}} \\
\yuquan{\textcolor[rgb]{0, 0, 0}{5}} & \yuquan{\textcolor[rgb]{0, 0, 0}{Poisoning Language Models During Instruction Tuning~\citep{TODO_seed_llm_finetuning}}} \\
\yuquan{\textcolor[rgb]{0, 0, 0}{6}} & \yuquan{\textcolor[rgb]{0, 0, 0}{Not What You\textquotesingle ve Signed Up For: Compromising Real-World LLM-Integrated Applications with Indirect Prompt Injection~\citep{TODO_seed_ai_service_failures}}} \\
\yuquan{\textcolor[rgb]{0, 0, 0}{7}} & \yuquan{\textcolor[rgb]{0, 0, 0}{Learning Multi-dimensional Indexes~\citep{TODO_seed_learned_indexing}}} \\
\yuquan{\textcolor[rgb]{0, 0, 0}{8}} & \yuquan{\textcolor[rgb]{0, 0, 0}{Deep Learning-Based Image Semantic Coding for Semantic Communications~\citep{TODO_seed_semantic_coding}}} \\
\yuquan{\textcolor[rgb]{0, 0, 0}{9}} & \yuquan{\textcolor[rgb]{0, 0, 0}{Energy-Aware Cloud Workflow Applications Scheduling With Geo-Distributed Data~\citep{TODO_seed_geo_scheduling}}} \\
\bottomrule
\end{tabular}
\caption{\yuquan{\textcolor[rgb]{0, 0, 0}{Initial seed papers for the ten main-study topics. Topic IDs follow Table~\ref{app-dataset}.}}}
\label{tab:initial_seeds}
\end{table*}

\begin{AlgorithmBox}{Algorithm 1: Forward Operator}
procedure FORWARD(p, d, Sa, Sf, Se, G, R)
  C <- papers citing p, each paired with the
       passage in which it discusses p
  C <- remove clearly uninformative candidates from C
  if the number of candidates in C exceeds Kf then
    C <- prioritize C by citation impact and recency
    C <- retain the first Kf candidates in C
  end if

  for each (q, passage) in C do
    decision <- SCOPE-ADMISSION(q, d, passage)
    if decision = ACCEPT then
      if q not in Sa then
        Sa <- Sa union {q}
        R[q] <- EXTRACT-RWT(q, d)
      end if
      G <- ADD-EDGE(G, q, p, passage)
    end if
  end for

  Se <- Se union {p}
  Sf <- Sa minus Se
  return Sa, Sf, Se, G, R
end procedure
\end{AlgorithmBox}

\begin{AlgorithmBox}{Algorithm 2: RW-Guided Backward Operator}
procedure RW-BACKWARD(p, d, Sa, Sf, Se, G, R, depth)
  C <- leaf papers of R[p], each paired with
       its author-written source passage

  for each (q, passage) in C do
    if the paper budget Bp is reached then break
    decision <- SCOPE-ADMISSION(q, d, passage)
    if decision = ACCEPT then
      if q not in Sa then
        Sa <- Sa union {q}
        R[q] <- EXTRACT-RWT(q, d)
        depth[q] <- depth[p] + 1
      end if
      G <- ADD-EDGE(G, p, q, passage)
    end if
  end for

  Se <- Se union {p}
  Sf <- Sa minus Se
  return Sa, Sf, Se, G, R, depth
end procedure
\end{AlgorithmBox}

\begin{AlgorithmBox}{Algorithm 3: Retrieval Scheduling and Reseeding}
procedure SEED-RUN(d, seed)
  decision <- SCOPE-ADMISSION(seed, d, no passage)
  if decision = REJECT then
    return empty set, empty graph, empty tree set
  end if

  Sa <- {seed}; Sf <- {seed}; Se <- empty set
  G <- graph containing seed
  R[seed] <- EXTRACT-RWT(seed, d)

  for forward_step <- 1 to Fmax do
    if Sf is empty then break
    Fcurrent <- ORDER-BY-INFLUENCE(snapshot of Sf)
    for each p in Fcurrent do
      Sa, Sf, Se, G, R <-
        FORWARD(p, d, Sa, Sf, Se, G, R)
    end for
  end for

  Sf <- Sa
  Se <- empty set
  for each p in Sa do depth[p] <- 0

  while the paper budget Bp is not reached do
    Bcurrent <- {p in snapshot of Sf : depth[p] < Dmax}
    if Bcurrent is empty then break
    for each p in Bcurrent do
      if the paper budget Bp is reached then break
      Sa, Sf, Se, G, R, depth <-
        RW-BACKWARD(p, d, Sa, Sf, Se, G, R, depth)
    end for
  end while

  G <- LOOP-CLOSURE(G)
  return Sa, G, R
end procedure

procedure INSIGHT-GUIDED-RETRIEVAL(request, p0)
  d <- STRUCTURE-SCOPE(request)
  P0, G0, R0 <- SEED-RUN(d, p0)

  H <- papers in G0 that are cited at least once
       and were not prior Forward seeds
  H <- TOP-3-BY-IN-DEGREE(H)

  subgraphs <- {G0}; trees <- {R0}
  for each h in H do
    Ph, Gh, Rh <- SEED-RUN(d, h)
    subgraphs <- subgraphs union {Gh}
    trees <- trees union {Rh}
  end for

  G <- MERGE-AND-LOOP-CLOSE(subgraphs)
  P <- papers in G
  return P, G, trees
end procedure
\end{AlgorithmBox}

\noindent A seed initializes the search state only after passing
\textsc{Scope-Admission}. Each Forward round takes a snapshot of the current
frontier and applies the operator to every paper in that snapshot. Influence
determines processing order only; it does not exclude any frontier paper.
Each processed paper enters \(S_e\), admitted citing papers enter \(S_a\), and
\(S_f\) is updated to \(S_a\setminus S_e\), forming the next round's frontier.
Forward terminates when \(S_f\) is empty or \(F_{\max}\) rounds have completed.
Before Backward begins, \(S_f\) is reset to \(S_a\) and \(S_e\) is cleared.
Each Backward round similarly expands every eligible paper in a snapshot of
the frontier, so newly admitted predecessors form the next recursive round.
Backward terminates when no frontier paper remains below \(D_{\max}\) or the
paper budget \(B_p\) is reached. Each \textsc{Seed-Run} ends with loop closure
before the independently constructed subgraphs are merged.

\subsection{Minimum-Conflict Taxonomy Construction}
\label{app:algorithm_taxonomy}

The objective in Section~\ref{sec:method_taxonomy_formulation} defines the
ideal taxonomy--assignment pair. The implementation operationalizes this
objective in three stages: it constructs a shared candidate concept space from
citation contexts, recursively builds an evidence-supported greedy draft, and
then resolves the remaining semantic and structural conflicts through global
refinement. The LLM calls used in the last stage are reported in
\S\ref{app:method_prompts_taxonomy}.

\begin{AlgorithmBox}{Algorithm 4: Candidate Concept Space Construction}
procedure BUILD-CANDIDATE-SPACE(P, G)
  C <- empty set
  w(c, p) <- 0 for every c and p

  for each citation edge (p', p) in E(G) do
    passage <- author-written context on (p', p)
    phrases <- EXTRACT-CONCEPT-PHRASES(passage)
    for each c in phrases do
      C <- C union {c}
      w(c, p) <- w(c, p) + 1
    end for
  end for

  C, w <- MERGE-EQUIVALENT-CONCEPTS(C, w)
  C, w <- FILTER-BY-EVIDENCE-SUPPORT(C, w)
  delta <- ASSIGN-SEMANTIC-DIMENSIONS(C)

  for each p in P without supported concept evidence do
    C <- C union {c_bottom}
    w(c_bottom, p) <- 0
  end for

  return C, w, delta
end procedure
\end{AlgorithmBox}

\noindent \textsc{Extract-Concept-Phrases} removes citation markers and other
non-semantic material before retaining phrases that describe the role or
category of the cited paper. Semantically equivalent phrases are consolidated
before evidence filtering, while conflicting assignments remain separate in
the candidate space. Thus, a paper may retain positive support under multiple
candidate concepts.

\begin{AlgorithmBox}{Algorithm 5: Greedy Draft Taxonomy Construction}
procedure BUILD-GREEDY-DRAFT(P, C, w, delta, Gamma)
  create root r containing P
  SPLIT(r, P, depth = 0)
  T0 <- tree rooted at r
  A0 <- assignments induced by the leaves of T0
  return T0, A0
end procedure

procedure SPLIT(v, Pv, depth)
  if depth reaches the bound in Gamma
     or too few papers remain in Pv then return

  Cv <- concepts supported by papers in Pv
  Cv <- remove concepts with insufficient support
        or violating Gamma
  if Cv is empty then return

  d_star <- dimension in delta with greatest total support
  Cv <- {c in Cv : delta(c) = d_star}

  H <- empty list; covered <- empty set
  while the branching budget is not reached do
    c_star <- candidate in Cv minus H with the largest
              marginal coverage and evidence support
    if no supported c_star remains then break
    append c_star to H
    covered <- covered union papers supported by c_star
  end while
  if H is empty then return

  create one child of v for every concept in H
  for each paper p in Pv do
    assign p to the child with the highest
      normalized support w(c,p) / sum_{h in H} w(h,p)
    if p has insufficient support then
      place p in a fallback child or defer its assignment
    end if
  end for

  for each child u of v do
    SPLIT(u, papers assigned to u, depth + 1)
  end for
end procedure
\end{AlgorithmBox}

\noindent Algorithm~5 applies the same split recursively. It terminates a
branch when the depth bound is reached, too few papers remain, or no supported
and constraint-compatible split is available. The result is the draft
taxonomy--assignment pair \((T_0,A_0)\). Appendix~\ref{app:complexity}
formalizes the objective and establishes its NP-hardness.

\begin{AlgorithmBox}[yuquan orange algorithm]{Algorithm 6: Global Refinement and Finalization}
procedure REFINE-AND-FINALIZE(d, P, T0, A0, {R_p})
  Qc <- PROFILE-CONCEPTS(concepts in T0)
  Qp <- PROFILE-PAPERS(P)
  ER <- AGGREGATE-RW-TREE-EVIDENCE({R_p})
        as recurring categories and parent-child relations

  T <- GLOBAL-REFINE(d, T0, Qc, Qp, ER)
  A <- REFINE-PAPER-ASSIGNMENTS(P, T, A0, Qp)
  T, A <- VALIDATE-AND-REPAIR(T, A)

  return T, A
end procedure
\end{AlgorithmBox}

\noindent \yuquan{\textcolor[rgb]{0, 0, 0}{\textsc{Global-Refine} designs the final hierarchy from the evidence-grounded greedy draft \(T_0\), using aggregated RW Tree patterns as auxiliary structural evidence.}}
\textsc{Refine-Paper-Assignments}
takes \(A_0\) as the default, revises only assignments invalidated by changes
to the hierarchy, and maps every paper to exactly one leaf of the fixed
taxonomy. The final validation checks assignment completeness and uniqueness
and removes unsupported empty leaves.

\section{Complexity of Minimum-Conflict Taxonomy Construction}
\label{app:complexity}

This appendix restates the optimization problem of
Section~\ref{sec:method_taxonomy_formulation} in full, proves that it is
NP-hard, and isolates the structural feature that causes the hardness.

\subsection{Problem Restatement}
\label{app:complexity_statement}

An instance of \textsc{MCTC} consists of a paper set \(P\), a candidate concept
set \(\mathcal{C}\) whose non-fallback concepts have passed global evidence
filtering, and non-negative integer evidence weights \(w(c,p)\) for
\((c,p)\in\mathcal{C}\times P\). It also contains a dimension map
\(\delta:\mathcal{C}\rightarrow\mathcal{D}\), positive integer depth and
branching bounds \(B\) and \(k\), and a parsimony coefficient
\(\lambda\ge 0\). The root \(r\) is not a candidate concept. The candidate set
contains a fallback concept \(c_{\bot}\), with \(w(c_{\bot},p)=0\) for every
\(p\in P\). It can be selected as a child of the root and receive any paper,
guaranteeing that a feasible assignment remains available even when the paper
has no supported candidate concept.

A solution is represented by hierarchy-arc indicators
\(x_{u,c}\in\{0,1\}\) for distinct
\(u\in\{r\}\cup\mathcal{C}\) and \(c\in\mathcal{C}\), concept indicators
\(y_c\in\{0,1\}\), assignment indicators \(a_{c,p}\in\{0,1\}\), level values
\(\ell_c\), and dimension indicators \(z_{u,d}\in\{0,1\}\). For convenience,
let \(y_r=1\) and \(\ell_r=0\). Unless stated otherwise, the constraints below
range over \(u\in\{r\}\cup\mathcal{C}\), \(c,v\in\mathcal{C}\),
\(p\in P\), and \(d\in\mathcal{D}\), with hierarchy indicators defined only
for distinct endpoints. These variables satisfy
\begin{equation}
\label{eq:app_mctc_constraints}
\begin{aligned}
&\textstyle\sum_{u\in(\{r\}\cup\mathcal{C})\setminus\{c\}}x_{u,c}=y_c,\\
&x_{u,c}\le y_u,\\
&x_{u,c}=1\Rightarrow \ell_c=\ell_u+1\le B,\\
&\textstyle\sum_{\substack{c\in\mathcal{C}\\c\ne u}}x_{u,c}\le k\,y_u,\\
&\textstyle\sum_{c\in\mathcal{C}}a_{c,p}=1,\\
&a_{c,p}\le y_c,\\
&a_{c,p}+x_{c,v}\le 1,\\
&x_{u,c}\le z_{u,\delta(c)},\\
&\textstyle\sum_{d\in\mathcal{D}}z_{u,d}\le 1,\\
&\textstyle\sum_{v\in\mathcal{C}\setminus\{c_{\bot}\}}x_{c_{\bot},v}=0,
\end{aligned}
\end{equation}
The parent and level constraints give every selected concept exactly one
parent and enforce a rooted acyclic hierarchy of depth at most \(B\). The
branching constraint applies to every internal node, including the root. The
assignment constraints make the assignment total and single-valued, while
\(a_{c,p}+x_{c,v}\le 1\) ensures that papers are assigned only to leaf
concepts. Finally, the dimension constraints require the children of any
parent to share one organizational dimension. Let \(T\) denote the resulting
tree and \(\mathcal{C}(T)=\{c:y_c=1\}\) its selected concepts, and define
\(A(p)=c\) whenever \(a_{c,p}=1\). Under these constraints, \textsc{MCTC}
maximizes the following objective:
\begin{equation}
\label{eq:app_mctc_objective}
\begin{aligned}
\mathcal{J}(T,A)={}&
\sum_{c\in\mathcal{C}(T)}\sum_{p\in P}w(c,p)
+\sum_{p\in P}w\bigl(A(p),p\bigr)\\
&-\lambda\bigl|\mathcal{C}(T)\bigr|.
\end{aligned}
\end{equation}
The first term rewards the selection of concepts with strong aggregate
evidence, the second rewards evidence-supported leaf assignments, and the
third discourages unnecessarily complex taxonomies.

\subsection{NP-Hardness}
\label{app:complexity_theorem}

\begin{theorem}
\label{thm:app_mctc_hard}
The \textsc{MCTC} optimization problem is NP-hard. Hardness holds already for
instances with \(B=1\), a single semantic dimension \(|\mathcal{D}|=1\),
\(\lambda=1\), binary evidence weights, and
\(\sum_{p\in P}w(c,p)=3\) for every non-fallback concept.
\end{theorem}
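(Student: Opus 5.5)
We reduce from \textsc{Exact Cover by 3-Sets} (X3C). An X3C instance consists of a ground set \(U\) with \(|U|=3q\) and a family \(\mathcal{S}=\{S_1,\dots,S_m\}\) of 3-element subsets of \(U\). It asks whether some \(q\) members of \(\mathcal{S}\) partition \(U\). We map it to the decision version of \textsc{MCTC}: ``is there a feasible \((T,A)\) with \(\mathcal{J}(T,A)\ge 5q\)?''

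\textbf{Construction.} Set \(P=U\) and introduce one non-fallback concept \(c_j\) for each \(S_j\), plus the fallback \(c_{\bot}\). Let \(w(c_j,p)=\mathbf{1}[p\in S_j]\), which is binary and gives \(\sum_{p}w(c_j,p)=3\). Put every concept in a single dimension, so \(|\mathcal{D}|=1\). Take \(B=1\), \(k=q\) and \(\lambda=1\). The construction is clearly polynomial.

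\textbf{Shape of feasible solutions.} With \(B=1\), every selected concept is a child of the root, hence a leaf. The only binding structural constraints are therefore:
\begin{itemize}
\item at most \(q\) selected concepts (the branching bound at the root, with \(c_{\bot}\) counted if it is selected);
\item each paper is assigned to exactly one selected concept, whether or not that concept supports it.
\end{itemize}
The dimension constraints hold trivially because \(|\mathcal{D}|=1\). The constraint forbidding children of \(c_\bot\) is vacuous.

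\textbf{Value of a solution.} Suppose a solution selects \(m'\) non-fallback concepts, and let \(\sigma\in\{0,1\}\) indicate whether \(c_{\bot}\) is selected. The objective is
\[
\mathcal{J}=(3-1)m'-\sigma+\sum_{p\in P}w\bigl(A(p),p\bigr).
\]
The assignment term is maximized by sending each paper to a selected concept that contains it, whenever one exists. Its maximum is therefore the number of elements covered by the union of the selected sets, which is at most \(\min(3m',3q)\).

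\textbf{Completeness.} If \(S_{j_1},\dots,S_{j_q}\) is an exact cover, select exactly those \(q\) concepts and assign each element to the unique set containing it. This gives \(\mathcal{J}=2q+3q=5q\).

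\textbf{Soundness.} Since \(m'+\sigma\le q\), any solution satisfies
\[
\mathcal{J}\le 2m'-\sigma+3m'\le 5q,
\]
and equality forces \(\sigma=0\), \(m'=q\), and all \(3q\) elements covered. Then \(q\) triples cover \(3q\) elements, so they are pairwise disjoint and form an exact cover. Thus optimizing \textsc{MCTC}, even under the restrictions stated in the theorem, decides X3C, which is NP-complete.

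\textbf{Main obstacle.} The delicate step is checking that the constraint system in~\eqref{eq:app_mctc_constraints} really collapses to ``choose at most \(k\) leaves and assign freely.'' Three points need explicit verification:
\begin{itemize}
\item The level variables force depth one: \(\ell_c=\ell_r+1=1\le B\).
\item The leaf-only assignment constraint \(a_{c,p}+x_{c,v}\le 1\) is automatically satisfied, because no concept can have children when \(B=1\).
\item The fallback concept cannot be exploited to increase the objective. It contributes \(-\lambda\) and zero weight, and it also consumes one unit of the branching budget.
\end{itemize}
The balance between the per-concept gain of \(3-\lambda=2\) and the coverage term is what makes \(5q\) attainable exactly for exact covers. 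If a construction needed to allow \(k>q\), this balance would fail, since extra concepts would raise \(\mathcal{J}\) without any cover condition. The proof must therefore pin the branching bound to \(k=q\).
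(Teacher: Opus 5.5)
Your proposal is correct and is essentially the paper's proof. It uses the same reduction from X3C: one paper per element, one binary-weighted concept per triple, $B=1$, $k=q$, $\lambda=1$, and threshold $5q$. It also uses the same bound $\mathcal{J}\le 5m'-\sigma\le 5q$, whose equality case forces $q$ pairwise disjoint covering triples.

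Your check that the constraint system collapses to ``at most $q$ root children, free assignment'' spells out what the paper states in one line. To be fully precise, depth one follows because levels strictly decrease along parent pointers. Hence every ancestor chain ends at $r$, and $\ell_c$ equals the depth of $c$.
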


\begin{proof}
We reduce from \textsc{Exact Cover by 3-Sets} (\textsc{X3C})
\citep{garey1979computers}. An \textsc{X3C} instance consists of a universe
\(\mathcal{U}=\{e_1,\dots,e_{3q}\}\) and a family
\(\mathcal{S}=\{S_1,\dots,S_m\}\), where every \(S_j\) contains exactly three
elements. It asks whether there is a subfamily of exactly \(q\) sets that
partitions \(\mathcal{U}\).

Construct an \textsc{MCTC} instance as follows. Create one paper \(p_i\) for
each element \(e_i\), one concept \(c_j\) for each set \(S_j\), and the
fallback concept \(c_{\bot}\). Define
\begin{equation*}
w(c_j,p_i)=
\begin{cases}
1,& e_i\in S_j,\\
0,& e_i\notin S_j,
\end{cases}
\qquad
w(c_{\bot},p_i)=0.
\end{equation*}
Assign every concept the same dimension, and set
\(B=1\), \(k=q\), and \(\lambda=1\). We compare the optimum against the
threshold \(\theta=5q\). The construction is polynomial in the size of the
\textsc{X3C} instance.

Suppose first that the \textsc{X3C} instance has an exact cover indexed by
\(I\subseteq\{1,\dots,m\}\), with \(|I|=q\). Select the concepts
\(\{c_j:j\in I\}\), attach them directly to the root, and do not select
\(c_{\bot}\). Because the selected sets partition \(\mathcal{U}\), every paper
\(p_i\) can be assigned to the unique selected concept whose set contains
\(e_i\). The resulting taxonomy is feasible: it has depth one, exactly \(q\)
root children, and a single organizational dimension. Each selected concept
has support three, so the concept-support term is \(3q\); every paper receives
an assignment of weight one, so the assignment-support term is also \(3q\);
and the complexity penalty is \(q\). Therefore,
\begin{equation*}
\mathcal{J}(T,A)=3q+3q-q=5q=\theta.
\end{equation*}

Conversely, suppose a feasible \textsc{MCTC} solution satisfies
\(\mathcal{J}(T,A)\ge 5q\). Let \(I=\{j:y_{c_j}=1\}\), let \(s=|I|\), and
let \(b=y_{c_{\bot}}\in\{0,1\}\). Since \(B=1\), every selected concept is a
leaf attached directly to the root. The root branching bound gives
\(s+b\le q\). Each ordinary concept has total support three, whereas the
fallback has support zero, so the concept-support term equals \(3s\). The
assignment-support term is at most \(3s\), because the \(s\) selected ordinary
concepts jointly provide positive evidence for at most \(3s\) distinct papers.
Consequently,
\begin{equation*}
\mathcal{J}(T,A)
\le 3s+3s-(s+b)
=5s-b
\le 5q.
\end{equation*}
The assumed lower bound forces equality throughout. Thus \(b=0\), \(s=q\),
and the assignment-support term is \(3q\). Hence every one of the \(3q\)
papers is assigned with weight one to a selected concept, so the corresponding
\(q\) sets cover all elements of \(\mathcal{U}\). Since each selected set has
size three, these \(q\) sets contain exactly \(3q\) element occurrences; a
cover of the \(3q\)-element universe is therefore possible only if they are
pairwise disjoint. They form an exact cover.

The \textsc{X3C} instance is therefore positive if and only if the constructed
\textsc{MCTC} instance admits a solution of value at least \(5q\). An exact
polynomial-time algorithm for \textsc{MCTC} would consequently decide
\textsc{X3C} in polynomial time. Hence \textsc{MCTC} is NP-hard.
\end{proof}

Because the reduction uses a flat taxonomy, the hardness already arises from
selecting a supported subset of concepts under the branching budget; it does
not require a deep hierarchy.

\section{Extended Related Work}
\label{app:full_related_work}

\subsection{Citation-Aware Scholarly Retrieval and Relation Modeling}
\label{app:full_rw_citation}

Scientific literature retrieval has traditionally relied on lexical matching or embedding-based representations \citep{coha2020specter}. Although these approaches effectively capture content-level similarity, they provide limited support for modeling complex relations among papers, such as identifying corroborating or conflicting findings and tracing the development of a technique. Consequently, the resulting knowledge structure can remain fragmented \citep{shao2026scinet}. Existing scientific retrieval benchmarks also focus primarily on topical discovery: LitSearch \citep{ajith2024litsearch} and PASA \citep{he2025pasa} primarily evaluate the identification of papers relevant to a specified research topic, while STARK incorporates network structure but remains centered on structured multi-hop queries over predefined entities \citep{wu2024stark}.

To move beyond surface-level similarity, another line of work exploits citation-graph structure. \yuquan{Traditional citation analysis, including co-citation and main-path analysis, operates over paper-level citation structures, but does not explicitly encode the textual semantics of why one paper cites another \citep{zhang2026internatlas}.}
\yuquan{Modern scholarly infrastructures expose citation links, while some also provide citation contexts or influential-citation annotations \citep{zhang2026internatlas}.}
Building on these resources, structure-aware retrieval and RAG methods such as LitFM and CG-RAG use citation propagation and multi-hop retrieval to improve scientific question answering and synthesis \citep{zhang2024litfm,hu2025cgrag}.

\yuquan{In these retrieval systems, citation links primarily serve as graph-connectivity or relevance signals. Unless citation context is explicitly modeled, the link itself does not specify whether the citing author describes an extension, improvement, contrast, or background relation. A complementary line of work therefore examines the semantics of individual citations. SciCite classifies citation intent as background, method, or result comparison \citep{cohan2019scicite}, while Valenzuela et al.\ distinguish important from incidental citations \citep{valenzuela2015meaningful}. These studies show that citations are not semantically interchangeable links. Their focus, however, is on characterizing citation instances rather than reconstructing a topic-specific literature scope from relations expressed across papers.}

\yuquan{Relatedly likewise recognizes Related Work sections as a resource for understanding a research area \citep{palani2023relatedly}. It helps readers explore Related Work paragraphs across papers and develop topic outlines, making distributed author perspectives easier to inspect. Its contribution is to support readers in navigating and reconciling these perspectives; it does not itself integrate them into an automatically reconstructed literature scope, taxonomy, and survey. \tool instead uses the passages surrounding citations as author-attributed relational evidence throughout these stages.}

\subsection{\yuquan{Scholarly Taxonomy Construction}}
\label{app:full_rw_taxonomy}

\yuquan{Organizing papers into a hierarchical taxonomy is also central to literature reviews. CHIME generates topical categories and assigns studies from a given paper set, with experts correcting category relations and paper assignments \citep{hsu2024chime}. Its primary focus is organizing a supplied collection rather than discovering the literature scope to be organized. HiGTL combines paper text and citation-graph structure for hierarchical clustering, then verbalizes concepts at each level \citep{hu2025higtl}. In this approach, citation links contribute graph connectivity, rather than the citing authors' descriptions of what those links mean. Zhu et al.\ construct taxonomies through LLM-generated, multi-aspect paper representations and dynamic clustering, and introduce the TaxoBench-CS benchmark \citep{zhu2025context}. Their categories are derived from representations of paper content. Together, these methods construct useful hierarchies from paper content or citation-network structure, but do not explicitly use judgments expressed in author-written Related Work as evidence for discovering candidate concepts, weighing their support, and organizing papers. \tool draws candidate concepts and support weights from citation contexts written by citing authors, linking taxonomy construction to the reconstructed literature scope.}

\subsection{Automatic Survey Generation}
\label{app:full_rw_asg}

\yuquan{With the rapid growth of scientific literature, automatic survey generation is becoming an increasingly important capability for both AI systems and human researchers. Mohammad et al.\ combined citation texts with multi-document summarization to generate technical surveys, showing that citation texts contain survey-worthy information not readily available from abstracts \citep{mohammad2009using}. Recent LLM-based approaches commonly formulate this task as a hierarchical retrieval--planning--writing pipeline: agents first retrieve papers relevant to a target topic, construct an outline, and then compose individual sections in parallel \citep{wang2024autosurvey}. AutoSurvey established the paradigm of combining semantic retrieval with parallel generation \citep{wang2024autosurvey}, which was subsequently extended by AutoSurvey2 \citep{wu2025autosurvey2} and LLM$\times$MapReduce \citep{zhou2024llmmapreduce}. Later studies improve different components of this pipeline. SurveyForge introduces memory-driven retrieval and heuristic planning learned from human-written outlines \citep{yan2025surveyforge}; SurveyX performs structured preprocessing with an AttributeTree \citep{liang2025surveyx}; SurveyGen incorporates quality-aware feedback \citep{bao2025surveygen}; and multi-agent systems explore collaborative generation through specialized agent roles \citep{go2025lira}. To support evaluation across scientific fields, recent work has also introduced discipline-aware benchmarks \citep{surveylens2026}.}

Despite these advances, most existing systems pass retrieved evidence to downstream generation modules as unstructured text. Reasoning about how concepts, methods, and research themes relate to one another is therefore deferred to the generation stage, requiring an LLM to organize long and heterogeneous evidence without an explicit conceptual or taxonomic structure \citep{pedinotti2026structsurvey}. Recent work has begun to move structural reasoning earlier in the pipeline. StructSurvey, for example, extracts entities, relations, and a lightweight taxonomy on demand from retrieved abstracts \citep{pedinotti2026structsurvey}, while DeepSurvey models cross-paper relations through full-text reading, clustering, and comparative analysis \citep{yang2026deepsurvey}. Recovering structure this way, however, carries two limits. First, the relations are inferred rather than attributed: the model proposes them from whatever text it has read, and no source sentence is required to stand behind any particular relation, so an error in the extracted structure leaves nothing for a reader to check it against. Second, and more fundamentally, reading a paper yields only that paper's own vantage point. A paper states its problem, its method, and its reading of the work it builds on, but how the field later positioned it---which subsequent work extended it, which contested it, which reframed the problem it addressed---is written by the authors who cited it. The evidence needed to place a paper within a global scope is therefore distributed across other papers, and no amount of careful reading of each retrieved paper in isolation will merge those vantage points into one. \tool collects them instead: it follows citations in both directions, takes the relation from what the citing authors wrote, and aggregates these scattered judgments into a unified scaffold that supports both retrieval and generation.

\yuquan{Taken together, the two limitations meet at the same point: a paper read on its own supplies only its own vantage point, and an untyped citation edge supplies only the fact of a link, whereas the passage an author wrote around that citation supplies the relation itself. \tool builds on this observation. It does not treat citations merely as graph structures or similarity signals for retrieval, nor does it stop at extracting isolated pairwise relations between papers; it regards Related Work sections as distributed scholarly judgments already articulated by authors, and aggregates their curated citation paths and explicit relational statements into a cross-paper and temporal scaffold that jointly supports literature retrieval, taxonomy construction, and survey generation.}

\section{Method Prompt Templates}
\label{app:method_prompts}

\subsection{Prompts for Insight-Guided Retrieval}
\label{app:method_prompts_retrieval}

This subsection reports the three LLM templates used by retrieval. Runtime
values are replaced by angle-bracketed fields. Forward and RW-guided Backward
share the same scope-admission template; reseeding, loop closure, and graph
merging are rule-based and therefore require no additional prompts.

\subsubsection{Research-Scope Structuring}
\label{app:prompt_scope_structuring}

This prompt converts the user's request into the structured research scope
\(d\) used by all subsequent admission decisions.

\begin{SystemPromptBox}
You are an expert research librarian and academic editor. You translate research requests into strict academic search configurations. Output only the structured fields requested by the user prompt.
\end{SystemPromptBox}

\begin{UserPromptBox}
# Task

Translate the following user research request into a
Strict Academic Search Configuration.

# Instructions

1. Refine Topic:
   Convert the request into standard academic terminology.

2. Define Precisely:
   Write a one-sentence positive definition of the core
   concept. If the user already provides a clear definition,
   preserve it faithfully.

3. Extract Constraints:
   Identify what the user explicitly or implicitly excludes.
   Preserve user-provided exclusion criteria.

4. Infer Defaults:
   If target audience or perspective is unspecified, use
   sensible academic defaults.

5. Year Range:
   Set year_range only when the user explicitly specifies
   a temporal range; otherwise return null.

# User Query

<USER_RESEARCH_REQUEST>

# Required Structured Output

{
  "topic": "<REFINED_ACADEMIC_TOPIC>",
  "definitions": "<POSITIVE_SCOPE_DEFINITION>",
  "constraints": "<EXCLUSIONS_AND_BOUNDARIES>",
  "target_audience": "<TARGET_AUDIENCE>",
  "perspective": "<SURVEY_PERSPECTIVE>",
  "year_range": "<EXPLICIT_RANGE_OR_NULL>"
}
\end{UserPromptBox}

\subsubsection{Scope Admission}
\label{app:prompt_scope_admission}

This prompt implements a binary scope-admission decision. For the initial seed,
the citation context field is empty; Forward and Backward candidates receive
the passage through which they were retrieved.

\begin{SystemPromptBox}
You are an extremely strict academic reviewer. Make a binary decision about whether a paper falls strictly within the exact scope of the given research topic. Return relevant=true only when the paper's primary research scope matches the target scope; otherwise return relevant=false. Output only a valid JSON object.
\end{SystemPromptBox}

\begin{UserPromptBox}
# Task

Judge whether the following paper's primary research scope
is identical to the research topic definition below.
The paper must directly study, advance, or evaluate the
exact problem defined, rather than being merely related,
applicable, upstream, or downstream.

# Research Topic

<RESEARCH_TOPIC>

# Exact Topic Definition

<STRUCTURED_RESEARCH_SCOPE>

# Strict Judgment Rules

Apply every rule. A violation of any exclusion rule yields NO.

Mandatory scope alignment:
1. The paper's primary research question or contribution
   must fall squarely within the exact scope.
2. The paper must explicitly position itself within the
   defined research area.

Explicit exclusions:
3. An upstream component technology that could be used by
   the topic, but is not itself about the topic, yields NO.
4. A downstream application that uses the topic to solve a
   different problem, without advancing the topic, yields NO.
5. An adjacent or overlapping field addressing a different
   research question yields NO.
6. A general-purpose method, broad framework, survey, or
   toolkit not designed for the exact topic yields NO.
7. A tangential mention in related work, motivation, or
   future work yields NO.

Use of citation context:
8. Citation context is supplementary evidence of the
   candidate's role and contribution. It cannot by itself
   justify YES; the paper's own content must confirm that
   its primary scope matches the topic.

When to answer YES:
9.  The core contribution directly advances the exact
    research problem.
10. The paper provides a dataset, benchmark, evaluation
    framework, or user study designed specifically for the
    exact topic.
11. The paper addresses a sub-problem explicitly included
    in the topic definition.

Default:
12. If uncertain, answer NO. Answer YES only with high
    confidence that the paper's primary scope matches the
    defined topic.

# Citation Context

<CITATION_OR_RELATED_WORK_PASSAGE>

# Paper to Judge

Title: <PAPER_TITLE>

Paper Content:
<PAPER_CONTENT>

# Required Output

{
  "relevant": true,
  "reason": "<ONE_SENTENCE_JUSTIFICATION>"
}
\end{UserPromptBox}

\subsubsection{RW Tree Extraction}
\label{app:prompt_rwt_extraction}

This prompt extracts the local hierarchy and resolves its leaf citations
against the paper's reference list.

\begin{SystemPromptBox}
You are an expert researcher specializing in the target topic. Given the title, structured research scope, and full text of a paper, locate its Related Work or literature-review section and its reference list. Extract only citations discussed in Related Work, resolve each citation marker to the full paper title, and organize the retained papers into a fine-grained hierarchical RW Tree. Keep direct predecessors, competitors, and topic-specific components; discard generic background, neighboring tasks, and general surveys used only for context. Do not use generic category names such as Methods or Related Work, remove empty categories, and preserve the exact source sentence for every leaf citation. Return only the requested JSON object.
\end{SystemPromptBox}

\begin{UserPromptBox}
{
  "paper_title": "<PAPER_TITLE>",
  "focus_topic": "<RESEARCH_TOPIC>",
  "research_scope": "<STRUCTURED_RESEARCH_SCOPE>",
  "full_paper_text": "<FULL_PAPER_TEXT>",
  "required_output": {
    "root_title": "<PAPER_TITLE>",
    "focus_topic": "<RESEARCH_TOPIC>",
    "should_expand": true,
    "relevant_categories": [
      {
        "category_name": "<FINE_GRAINED_CATEGORY>",
        "sub_categories": [
          {
            "category_name": "<FINE_GRAINED_SUBCATEGORY>",
            "leaves": [
              {
                "citation_id": "<IN_TEXT_CITATION_MARKER>",
                "title": "<RESOLVED_FULL_PAPER_TITLE>",
                "insight": "<TOPIC_SPECIFIC_CONTRIBUTION>",
                "source_context": "<EXACT_SOURCE_SENTENCE>"
              }
            ]
          }
        ],
        "leaves": [
          {
            "citation_id": "<IN_TEXT_CITATION_MARKER>",
            "title": "<RESOLVED_FULL_PAPER_TITLE>",
            "insight": "<TOPIC_SPECIFIC_CONTRIBUTION>",
            "source_context": "<EXACT_SOURCE_SENTENCE>"
          }
        ]
      }
    ]
  }
}
\end{UserPromptBox}

\subsection{Prompts for Minimum-Conflict Taxonomy Construction}
\label{app:method_prompts_taxonomy}

The greedy candidate-space and draft-construction stages are deterministic and
use no LLM prompt. The following templates are used only for semantic
profiling, late-stage global refinement, and conflict-aware paper assignment
refinement. Each is sent as a single user message; the implementation does not
add a separate system message. The experiments use the coarse profile setting
shown below.

\subsubsection{Concept Profile Generation}
\label{app:prompt_concept_profile}

This template compresses each draft concept into a short definition and three
keywords for global refinement.

\begin{UserPromptBox}
Generate a semantic profile for the academic taxonomy
concept "<CONCEPT_NAME>".

Return strict JSON:
{
  "definition": "a phrase of at most 15 words stating
                   the core definition of the concept",
  "keywords": ["<KEYWORD_1>", "<KEYWORD_2>",
               "<KEYWORD_3>"]
}
\end{UserPromptBox}

\subsubsection{Paper Profile Generation}
\label{app:prompt_paper_profile}

This template extracts the compact problem, method, and keywords used to
reason about the paper during hierarchy refinement and assignment.

\begin{UserPromptBox}
Paper: <PAPER_TITLE>
Abstract: <PAPER_ABSTRACT>

Extract the key information used to classify this paper
in a taxonomy. Return strict JSON:
{
  "problem": "a phrase of at most 10 words stating the
              core research problem",
  "method": "a phrase of at most 10 words stating the
             method or technical approach",
  "keywords": ["<KEYWORD_1>", "<KEYWORD_2>",
               "<KEYWORD_3>"]
}
\end{UserPromptBox}

\subsubsection{Global Taxonomy Structural Refinement}
\label{app:prompt_global_taxonomy_refinement}

\yuquan{\textcolor[rgb]{0, 0, 0}{This template refines structure only, using the research scope, concept and paper profiles, the greedy draft taxonomy, and auxiliary evidence aggregated from RW Trees. It contains no paper assignment.}}

\begin{UserPromptBox}
## Task: Design a Fine-Grained Academic Taxonomy

You are a senior researcher designing a comprehensive
taxonomy for a survey on "<TOPIC_NAME>". Your only task is
to design the taxonomy structure. Do not assign papers.

### Research Topic

<STRUCTURED_RESEARCH_SCOPE>

### Core Concepts

<CONCEPT_PROFILES>

### Papers in Scope

Each line has the following format:
Title | Keywords | Problem | Method

<PAPER_PROFILES>

### Baseline Taxonomy

The following draft is provided for reference. You should
significantly improve upon it:

<GREEDY_DRAFT_TAXONOMY>

### Related Work Tree Evidence

The following auxiliary evidence is aggregated from the
paper-level RW Trees. It captures how source papers organize
their related literature, but it is not ground truth.

Top recurring categories:
<RW_TREE_CATEGORY_SUPPORT>

Top recurring parent-child relations:
<RW_TREE_RELATION_SUPPORT>

Compact per-paper RW Trees:
<COMPACT_RW_TREES>

Use high-support RW Tree categories and parent-child
relations to shape the taxonomy when they are consistent
with the topic definition and paper profiles. A recurring
theme may be retained, merged into a semantically broader
branch, or excluded when it is clearly noisy or off-scope.
Do not copy isolated low-support terms verbatim.

### Design Requirements

1. Ground every branch in the research-scope definition.
2. Use 4-6 first-level categories aligned with core
   research dimensions.
3. Give each first-level category 4-7 subcategories;
   research-dense areas may use deeper subcategories up to
   depth <MAX_DEPTH>.
4. Target <TARGET_LEAF_COUNT> leaves, within the range
   <MIN_LEAF_COUNT>-<MAX_LEAF_COUNT>.
5. Make the categories mutually exclusive and collectively
   exhaustive.
6. Use specific, descriptive academic names of 3-6 words.
7. Avoid vague names such as Other, General,
   Miscellaneous, Advanced, or Emerging.
8. Make sibling categories distinct and non-synonymous.
9. Keep children as specializations of their parent.
10. Balance granularity so that papers can distribute
    naturally across the leaves.

### Required Output

{
  "rationale": "two or three sentences explaining the
                design and refinement of the baseline",
  "taxonomy": {
    "concept": "<TOPIC_NAME>",
    "children": [
      {
        "concept": "<FIRST_LEVEL_CATEGORY>",
        "definition": "<ONE_SENTENCE_SCOPE>",
        "children": [
          {
            "concept": "<SUBCATEGORY>",
            "definition": "<ONE_SENTENCE_SCOPE>"
          }
        ]
      }
    ]
  }
}

Return strict JSON containing structure and definitions
only. Do not include a papers field.
\end{UserPromptBox}

\subsubsection{Paper Assignment Refinement}
\label{app:prompt_paper_leaf_assignment}

Once the refined structure is fixed, this template takes the draft assignment
\(A_0\) as its default and revises only assignments invalidated by changes to
the hierarchy. It returns a complete assignment so that every paper remains
mapped to exactly one refined leaf.

\begin{UserPromptBox}
## Task: Refine Paper Assignments after Taxonomy Refinement

### Refined Leaf Categories

The leaf categories below use 0-based indices. Each line
contains the full category path and its definition.

<INDEXED_LEAF_CATEGORIES>

### Papers and Draft Assignments

Each line has the format:
[8-character ID] Title | Keywords | Method |
Draft leaf path in A0

<PAPER_BATCH_WITH_DRAFT_ASSIGNMENTS>

### Rules

1. Treat each paper's draft assignment in A0 as the default.
2. Preserve the draft assignment when its leaf remains in
   the refined taxonomy, or when it has a clear semantic
   counterpart and the paper remains compatible with that
   leaf.
3. Reassign a paper only when its draft leaf has been
   removed, merged, split, or semantically changed so that
   the original assignment is no longer valid.
4. When reassignment is required, choose the most
   semantically relevant refined leaf using the paper title,
   keywords, method, full category path, and category
   definition.
5. Assign every paper to exactly one refined leaf category.
6. Do not modify the taxonomy structure.

### Required Output

{
  "assignments": {
    "<PAPER_ID_1>": <LEAF_INDEX>,
    "<PAPER_ID_2>": <LEAF_INDEX>
  }
}

Return strict JSON. Every paper ID in the batch must appear
exactly once in the assignments object. Preserve its draft
assignment unless refinement makes that assignment invalid.
\end{UserPromptBox}

\subsection{Prompts for Taxonomy-Guided Survey Generation}
\label{app:method_prompts_survey_generation}

Survey generation uses six single-message templates: two for local relation
mining, one shared by leaf and internal writing units, and three for survey-level
planning and completion. Runtime data is replaced below by angle-bracketed
fields. RW Tree snippets may inform semantic relations but are explicitly
non-citable unless the corresponding paper also appears in the unit's allowed
citation set.

\subsubsection{Local Paper-Relation Mining}
\label{app:prompt_paper_relation_mining}

This template is applied only to leaf units containing at least two papers.

\begin{UserPromptBox}
You are helping write an academic survey.

## Task

Find useful local paper relations inside the current
taxonomy concept.

Use only the provided papers and evidence. You do not need
to connect every pair. Only papers listed in
allowed_citations are citable. RW Tree snippets are
semantic background only; papers mentioned inside them are
not citable unless they also appear in allowed_citations.

## Decision Labels

- direct_relation: explicit evidence exists, such as a
  direct citation, citation context, or shared RW Tree
  discussion.
- hidden_relation: no strong direct evidence exists, but
  the papers are meaningfully comparable, complementary,
  sequential, or methodologically different within the
  current concept.
- none: no useful writing relation exists. Do not output
  relations labeled none.

## Writing Constraints

- Be conservative when proposing hidden_relation.
- Do not claim superiority unless it is explicitly
  supported by the evidence.
- Produce a relation_sentence that can be used directly in
  a survey paragraph.
- Refer to papers using only numeric citation markers such
  as [12].
- Use only markers listed in allowed_citations.
- Do not cite paper titles inside square brackets.
- Do not mention external papers that appear only in RW
  Tree or source-context snippets.
- Return strict JSON only.

## Required Output

{
  "paper_relations": [
    {
      "source_paper": "<PAPER_ID>",
      "target_paper": "<PAPER_ID>",
      "decision": "direct_relation | hidden_relation",
      "relation_mode": "comparison | progression |
                        complementarity | contrast |
                        shared_problem | other",
      "confidence": 0.0,
      "relation_sentence": "<SURVEY_READY_SENTENCE>",
      "evidence_summary": "<EVIDENCE_SUMMARY>"
    }
  ]
}

## Local Input

{
  "concept": "<CONCEPT>",
  "taxonomy_path": "<TAXONOMY_PATH>",
  "concept_definition": "<CONCEPT_DEFINITION>",
  "allowed_citations": <ALLOWED_CITATIONS>,
  "papers": <LOCAL_PAPER_PROFILES>,
  "graph_evidence": <LOCAL_GRAPH_AND_RW_TREE_EVIDENCE>
}
\end{UserPromptBox}

\subsubsection{Child-Concept Relation Mining}
\label{app:prompt_child_relation_mining}

This template is applied to internal units containing at least two direct
children.

\begin{UserPromptBox}
You are helping write transitions inside an academic
survey section.

## Task

Find useful relations among the direct child concepts of
the current parent concept.

Use only the provided child concepts, paper summaries, and
evidence. You do not need to connect every pair. Only
papers listed in allowed_citations are citable. Evidence
snippets are semantic background only; papers mentioned
there are not citable unless they appear in
allowed_citations.

## Decision Labels

- direct_relation: the child paper sets have explicit
  citation or citation-context evidence connecting them.
- hidden_relation: the child concepts are semantically
  useful to compare, contrast, order, or connect in
  writing.
- none: no useful writing relation exists. Do not output
  relations labeled none.

## Writing Constraints

- The relation_sentence should explain how the two child
  concepts relate inside the parent section.
- Do not force a relation when the children are merely
  siblings without a meaningful transition.
- Refer to representative papers only with numeric citation
  markers such as [12].
- Use only markers listed in allowed_citations.
- Do not cite paper titles inside square brackets.
- Do not mention external papers that appear only in
  evidence snippets.
- Return strict JSON only.

## Required Output

{
  "child_concept_relations": [
    {
      "source_concept": "<CHILD_CONCEPT>",
      "target_concept": "<CHILD_CONCEPT>",
      "decision": "direct_relation | hidden_relation",
      "relation_mode": "comparison | progression |
                        complementarity | contrast |
                        parallel_dimensions | other",
      "direction": "source_to_target | target_to_source |
                    none",
      "confidence": 0.0,
      "relation_sentence": "<SURVEY_READY_SENTENCE>",
      "evidence_summary": "<EVIDENCE_SUMMARY>"
    }
  ]
}

## Local Input

{
  "parent_concept": "<PARENT_CONCEPT>",
  "taxonomy_path": "<TAXONOMY_PATH>",
  "allowed_citations": <ALLOWED_CITATIONS>,
  "children": <CHILD_CONCEPT_PROFILES>,
  "graph_evidence": <CROSS_CHILD_CITATION_EVIDENCE>
}
\end{UserPromptBox}

\subsubsection{Writing-Unit Generation}
\label{app:prompt_writing_unit_generation}

Leaf and internal units share this template and are distinguished by the
\texttt{unit\_type} field in the package.

\begin{UserPromptBox}
You are writing one subsection of an academic survey.

## Goal

Write a coherent survey subsection using the provided
taxonomy scope, papers, and relation sentences.

## Rules

- Use only papers listed in allowed_citations.
- Use only citation markers listed in allowed_citations.
- Do not invent citation numbers, paper titles,
  abbreviations, aliases, citations, or paper relations.
- If a paper appears only in an RW Tree, source context, or
  evidence snippet, treat it as non-citable background and
  do not mention it by name.
- Do not write a paper-by-paper annotated bibliography.
- For unit_type = section_overview, write a high-level
  overview of the child concepts using
  child_concept_relations; avoid detailed paper-by-paper
  synthesis.
- For unit_type = paper_subsection, synthesize the assigned
  papers using paper_relations.
- Organize the subsection around comparison, progression,
  complementarity, and contrast when supported by the
  supplied relation sentences.
- Phrase hidden relations conservatively.
- Cite papers only with numeric markers such as [12].
- Do not add a References section or the subsection title;
  the pipeline supplies both separately.
- If this is a section overview, write a concise bridge and
  do not summarize every paper.
- Return Markdown prose only, without JSON.

## Writing Package

{
  "section_id": "<SECTION_ID>",
  "subsection_title": "<TITLE>",
  "taxonomy_path": "<TAXONOMY_PATH>",
  "subsection_scope": "<SCOPE>",
  "unit_type": "section_overview | paper_subsection",
  "allowed_citations": <ALLOWED_CITATIONS>,
  "overall_outline": <SURVEY_OUTLINE>,
  "previous_section": <PREVIOUS_SECTION>,
  "next_section": <NEXT_SECTION>,
  "papers": <LOCAL_PAPERS>,
  "paper_relations": <PAPER_RELATIONS>,
  "children": <CHILD_CONCEPTS>,
  "child_concept_relations": <CHILD_RELATIONS>
}
\end{UserPromptBox}

\subsubsection{Survey-Level Planning}
\label{app:prompt_survey_level_planning}

This template plans the sections that surround the already-written taxonomy
body; it does not rewrite that body.

\begin{UserPromptBox}
You are planning the global structure of a taxonomy-guided
academic survey.

## Task

Create a concise survey-level plan that wraps an
already-written taxonomy body. The taxonomy body must not
be rewritten. The taxonomy may change across runs, so use
the provided taxonomy dynamically.

The final survey should contain:
- Title and Abstract;
- 1. Introduction;
- 2. Background and Scope;
- 3. Taxonomy Overview;
- the existing taxonomy body, using the dynamic section
  interval in survey_structure;
- Cross-Cutting Discussion;
- Open Challenges and Future Directions;
- Conclusion; and
- References.

## Planning Constraints

- Do not invent papers, citations, counts, or taxonomy
  dimensions.
- Use the existing taxonomy as the core body.
- Do not assume a fixed research area, fixed taxonomy
  section names, or a fixed number of top-level sections.
- Use only section_summaries to define taxonomy dimensions.
- When mentioning corpus size, use only corpus_stats.
- Do not claim first, largest, comprehensive, empirically
  grounded, or state-of-the-art unless explicitly supported
  by the input.
- Do not critique existing surveys unless they are
  explicitly provided as references.
- Focus on reader onboarding, cross-cutting synthesis, and
  conclusion.
- Return strict JSON only.

## Required Output

{
  "survey_title": "<TITLE>",
  "survey_topic": "<TOPIC>",
  "central_thesis": "<CENTRAL_THESIS>",
  "reader_problem": "<READER_PROBLEM>",
  "scope": ["<IN_SCOPE_ITEM>"],
  "out_of_scope": ["<OUT_OF_SCOPE_ITEM>"],
  "main_contributions": ["<CONTRIBUTION>"],
  "background_terms": [
    {
      "term": "<TERM>",
      "plain_definition": "<DEFINITION>",
      "why_needed": "<WHY_NEEDED>"
    }
  ],
  "taxonomy_dimensions": [
    {
      "section_number": "<SECTION_NUMBER>",
      "title": "<DIMENSION_TITLE>",
      "core_question": "<CORE_QUESTION>",
      "coverage": "<COVERAGE>",
      "role_in_survey": "<ROLE>",
      "connection_to_next": "<CONNECTION>",
      "representative_citations": ["[1]", "[2]"]
    }
  ],
  "cross_cutting_themes": [
    {
      "title": "<THEME>",
      "description": "<DESCRIPTION>",
      "applies_to_sections": ["4", "6"],
      "supporting_citations": ["[1]", "[2]"]
    }
  ],
  "future_directions": [
    {
      "title": "<DIRECTION>",
      "problem": "<UNRESOLVED_PROBLEM>",
      "why_it_matters": "<IMPORTANCE>",
      "possible_direction": "<POSSIBLE_DIRECTION>",
      "supporting_citations": ["[1]", "[2]"]
    }
  ],
  "conclusion_message": "<CONCLUSION_MESSAGE>"
}

## Input

{
  "corpus_stats": <CORPUS_STATISTICS>,
  "survey_structure": <DYNAMIC_SECTION_STRUCTURE>,
  "outline": <TAXONOMY_OUTLINE>,
  "section_summaries": <TOP_LEVEL_SECTION_SUMMARIES>,
  "references": <COMPACT_GLOBAL_REFERENCES>,
  "taxonomy_body": <ASSEMBLED_TAXONOMY_BODY>
}
\end{UserPromptBox}

\subsubsection{Front-Matter Generation}
\label{app:prompt_front_matter_generation}

\begin{UserPromptBox}
You are writing the front matter of a taxonomy-guided
academic survey.

## Task

Write only the title, abstract, introduction,
background/scope, and taxonomy overview. Do not write the
existing taxonomy body. Use the provided plan and taxonomy
dimensions dynamically.

## Required Markdown Structure

# <SURVEY_TITLE>

## Abstract

## 1. Introduction

## 2. Background and Scope

## 3. Taxonomy Overview

## Writing Goals

- Make the survey accessible to readers unfamiliar with
  the exact taxonomy.
- Explain the topic, reader problem, scope, and contribution
  using the supplied plan.
- Define the survey topic and four to six reusable terms in
  Background and Scope.
- Clarify included and excluded literature without
  enumerating every taxonomy dimension in that section.
- Explain the taxonomy organization and summarize each
  top-level dimension dynamically in Taxonomy Overview.
- A compact Markdown table may be used in the overview.

## Citation Constraints

- Use citations sparingly, with at most two citation
  clusters per paragraph.
- Use only numeric markers from the provided references.
- Do not invent citations or add a References section.
- Do not mention local files or paper identifiers.
- Use only corpus_stats when stating corpus size.
- Do not claim first, largest, comprehensive, empirically
  grounded, or state-of-the-art unless supported by the
  plan.
- Do not compare against existing surveys unless they are
  explicitly present in the references.
- Prefer conservative descriptions such as corpus-derived,
  taxonomy-guided, or based on the current corpus.
- Do not introduce dimensions absent from
  plan.taxonomy_dimensions.
- Return Markdown only, without meta commentary.

## Input

{
  "corpus_stats": <CORPUS_STATISTICS>,
  "survey_structure": <DYNAMIC_SECTION_STRUCTURE>,
  "plan": <SURVEY_LEVEL_PLAN>,
  "outline": <TAXONOMY_OUTLINE>,
  "references": <COMPACT_GLOBAL_REFERENCES>
}
\end{UserPromptBox}

\subsubsection{Back-Matter Generation}
\label{app:prompt_back_matter_generation}

\begin{UserPromptBox}
You are writing the back matter of a taxonomy-guided
academic survey.

## Task

Write only the discussion, open challenges, and conclusion
that follow an already-written taxonomy body. Do not
rewrite or repeat the taxonomy body. Use the supplied plan
and dynamic survey_structure.

## Required Markdown Structure

## <DISCUSSION_SECTION>. Cross-Cutting Discussion

## <FUTURE_SECTION>. Open Challenges and Future Directions
### <FUTURE_SECTION>.1 <CHALLENGE_TITLE>
### <FUTURE_SECTION>.2 <CHALLENGE_TITLE>

## <CONCLUSION_SECTION>. Conclusion

## Writing Goals

- Synthesize themes across taxonomy branches instead of
  summarizing each section again.
- Make the discussion reader-oriented and concrete.
- For future directions, state what remains unresolved, why
  current work is insufficient, and what could be done next.
- Write two to five future-direction subsections derived
  from plan.future_directions; do not invent unrelated
  challenges.
- Keep the conclusion short and introduce no new claims or
  papers.

## Citation Constraints

- Use citations sparingly, with at most two citation
  clusters per paragraph.
- Use only numeric markers from the provided references.
- Do not invent citations or add a References section.
- Do not mention local files or paper identifiers.
- Use only corpus_stats when stating corpus size.
- Do not claim first, largest, comprehensive, empirically
  grounded, or state-of-the-art unless supported by the
  plan.
- Prefer conservative descriptions such as corpus-derived,
  taxonomy-guided, or based on the current corpus.
- Do not introduce dimensions absent from
  plan.taxonomy_dimensions.
- Return Markdown only, without meta commentary.

## Input

{
  "corpus_stats": <CORPUS_STATISTICS>,
  "survey_structure": <DYNAMIC_SECTION_STRUCTURE>,
  "plan": <SURVEY_LEVEL_PLAN>,
  "outline": <TAXONOMY_OUTLINE>,
  "references": <COMPACT_GLOBAL_REFERENCES>,
  "taxonomy_body": <RENUMBERED_TAXONOMY_BODY>
}
\end{UserPromptBox}

\section{Evaluation Prompt Templates}
\label{app:evaluation_prompts}

This section reports the complete templates used for LLM-based evaluation.
Angle-bracketed fields are populated at runtime with the corresponding topic,
paper, taxonomy, or survey content. Each prompt is presented as a breakable
card within the standard two-column appendix layout.
\yuquan{Method identifiers are retained for result aggregation but are not included in
the judge-facing prompts.}

\subsection{Retrieved-Paper Relevance Judgment}
\label{app:prompt_retrieval_relevance}

This prompt determines whether each retrieved paper is substantively relevant
to the target research scope.

\begin{SystemPromptBox}
You are a strict evaluator for a survey-paper retrieval experiment. Your task is to decide whether a candidate research paper is substantively relevant to the target survey topic. Use only the topic prompt, paper title, and paper abstract. Do not reward superficial keyword overlap. Mark relevant=true only if the paper would reasonably belong in the scope of a survey on this topic. Do not think step by step. Return exactly one compact JSON object as the first and only output. Do not include markdown, explanations outside JSON, or hidden reasoning.
\end{SystemPromptBox}

\begin{UserPromptBox}
{
  "topic_name": "<TOPIC_NAME>",
  "topic_prompt": "<TOPIC_SCOPE>",
  "paper_title": "<PAPER_TITLE>",
  "paper_abstract": "<PAPER_ABSTRACT>",
  "decision_instructions": [
    "Do not reason step by step. Answer immediately.",
    "Output exactly one JSON object and nothing else.",
    "Return relevant=true if this paper is substantively in scope for the topic.",
    "Return relevant=false if it is off-topic, too broad, only shares keywords, or lacks a clear connection.",
    "Use score from 0.0 to 1.0, where 1.0 means clearly in scope.",
    "Use confidence from 0.0 to 1.0.",
    "Keep reason under 20 words."
  ],
  "required_json_schema": {
    "relevant": "boolean",
    "score": "number between 0 and 1",
    "confidence": "number between 0 and 1",
    "reason": "short string"
  }
}
\end{UserPromptBox}

\subsection{Taxonomy Evaluation}
\label{app:prompt_taxonomy_evaluation}

Taxonomy evaluation uses one prompt to assess the organization of the complete
taxonomy and another to assess individual paper-to-taxonomy assignments.

\subsubsection{Taxonomy Outline Quality Judgment}
\label{app:prompt_taxonomy_quality}

\begin{SystemPromptBox}
You are a strict evaluator for academic survey taxonomy quality. Return exactly one compact JSON object as the first and only output. Do not include chain-of-thought or markdown.
\end{SystemPromptBox}

\begin{UserPromptBox}[yuquan prompt]
{
  "task": "Evaluate the organization quality of a generated academic survey taxonomy for taxonomy construction. This is not an evaluation of the final survey writing outline.",
  "topic_id": "<TOPIC_ID>",
  "topic_name": "<TOPIC_NAME>",
  "target_scope": {
    "source": "<TOPIC_PROMPT_SOURCE>",
    "input_prompt": "<TOPIC_SCOPE>"
  },
  "outline_includes_paper_counts": false,
  "scale": "0 to 5, where 0 is very poor and 5 is excellent.",
  "rubric": {
    "topic_uniqueness": "Sibling nodes should be distinct and non-overlapping. Penalize duplicate, near-duplicate, or mixed-scope technical categories. Do not give extra credit merely because a node is a generic survey section such as Foundations, Background, Applications, Future Work, or Open Challenges.",
    "structural_balance": "The taxonomy should have reasonable breadth and depth for the technical concept space. Penalize over-fragmented, overly flat, or lopsided trees, but do not require conventional survey-outline sections such as Foundations or Open Challenges.",
    "hierarchical_clarity": "Parent-child relations should be clear abstraction-to-specific relations. Penalize children that do not belong under their parent. Generic survey-writing sections are not automatically clearer than technical taxonomy categories.",
    "logical_organization": "The taxonomy should organize the field's technical concepts coherently for taxonomy construction. Do not reward a generic survey narrative flow (e.g., foundations -> methods -> applications -> challenges) unless the supplied taxonomy nodes are specifically meaningful for the target scope."
  },
  "instructions": [
    "Use only the supplied target scope and taxonomy outline.",
    "Evaluate whether the taxonomy is well organized for this specific survey topic, not for a broader adjacent field.",
    "Judge taxonomy-construction quality, not final survey prose, writing quality, paper retrieval quality, or citation quality.",
    "A taxonomy may be high quality even if it does not include generic final-survey sections such as Foundations, Background, Applications, Future Work, or Open Challenges.",
    "Conversely, do not give high scores to a taxonomy only because it uses generic survey-outline sections; the categories must be technically specific and useful for the target scope.",
    "You must return all four rubric dimensions, even when a score is low.",
    "Give fine-grained two-decimal scores from 0.00 to 5.00 (e.g., 3.75, 4.20, 2.85); do not default to whole numbers or half points.",
    "Return JSON only.",
    "Return one top-level JSON object with exactly the required keys. Do not return only one dimension.",
    "Do not use a top-level 'reason' key. Put short reasons inside each dimension object."
  ],
  "output_contract": "The response must contain topic_uniqueness, structural_balance, hierarchical_clarity, and logical_organization. The overall average is computed by the harness, so do not worry about reporting it.",
  "required_json_schema": {
    "topic_uniqueness": {
      "score": 0.00,
      "reason": "short phrase"
    },
    "structural_balance": {
      "score": 0.00,
      "reason": "short phrase"
    },
    "hierarchical_clarity": {
      "score": 0.00,
      "reason": "short phrase"
    },
    "logical_organization": {
      "score": 0.00,
      "reason": "short phrase"
    }
  },
  "taxonomy_outline": "<TAXONOMY_OUTLINE>"
}
\end{UserPromptBox}

\subsubsection{Paper-to-Taxonomy Assignment Judgment}
\label{app:prompt_taxonomy_assignment}

\begin{SystemPromptBox}
You are a strict evaluator for paper-to-taxonomy assignment quality. Return exactly one compact JSON object as the first and only output. Do not include chain-of-thought or markdown.
\end{SystemPromptBox}

\begin{UserPromptBox}
{
  "task": "Judge whether a paper is assigned to an appropriate taxonomy node.",
  "topic_id": "<TOPIC_ID>",
  "topic_name": "<TOPIC_NAME>",
  "target_scope": "<TOPIC_SCOPE>",
  "paper": {
    "title": "<PAPER_TITLE>",
    "abstract": "<PAPER_ABSTRACT>",
    "related_work_tree_summary": "<RELATED_WORK_TREE_SUMMARY>"
  },
  "assigned_taxonomy_node_path": "<ASSIGNED_NODE_PATH>",
  "sibling_taxonomy_nodes": [
    "<SIBLING_NODE_PATH_1>",
    "<SIBLING_NODE_PATH_2>"
  ],
  "rubric": {
    "correct_1": "The assigned node path is a substantively reasonable category for the paper's main problem, method, application, or role in the related-work tree.",
    "correct_0": "The assigned node is off-topic, too broad to be meaningful when a better sibling exists, or describes a different method/application/threat/model than the paper."
  },
  "instructions": [
    "Use the topic scope, paper abstract, related-work tree summary, assigned node, and sibling nodes.",
    "Judge the assignment itself, not the writing quality of the taxonomy.",
    "Return correct=1 only when the node is a good taxonomy placement for this paper.",
    "Return correct=0 when the paper only has vague keyword overlap with the node.",
    "Give score from 0 to 5 where 5 is clearly correct and 0 is clearly incorrect.",
    "Return JSON only."
  ],
  "required_json_schema": {
    "correct": 0,
    "score": 0,
    "reason": "short phrase"
  }
}
\end{UserPromptBox}

\subsection{Survey Quality Evaluation}
\label{app:prompt_survey_quality}

\begin{SystemPromptBox}
\yuquan{You are a neutral academic evaluator assessing an anonymized generated survey. Evaluate the supplied survey against the target topic definition using only evidence visible in the provided text. Consider strengths and limitations symmetrically. Do not infer which system produced the survey, and do not reward or penalize document length by itself. When inline citations and a reference list are supplied, evaluate citation practice that is directly observable in the document, but do not claim that a cited source exists or supports a claim without access to that source; source-level citation faithfulness is evaluated separately. Do not think step by step. Return exactly one compact JSON object as the first and only output.}
\end{SystemPromptBox}

\begin{UserPromptBox}[yuquan prompt]
{
  "prompt_version": "q3_balanced_v5_blinded_citation_aware",
  "task": "Score this generated survey paper for the target topic.",
  "score_scale": {
    "minimum": 1,
    "maximum": 5,
    "meaning": {
      "1": "poor or mostly unusable",
      "2": "weak with major gaps",
      "3": "acceptable but incomplete or uneven",
      "4": "strong with identifiable but non-fundamental limitations",
      "5": "excellent for the supplied scope, with no important content-quality limitation visible in the text"
    },
    "half_points_allowed": true
  },
  "score_calibration": {
    "5.0": "Excellent for the supplied scope, with no important visible content-quality limitation.",
    "4.0_to_4.5": "Strong survey with minor limitations, incomplete comparisons, or modest unevenness.",
    "3.0_to_3.5": "Acceptable but clearly incomplete, uneven, or partially off-scope.",
    "1.0_to_2.5": "Weak, substantially off-topic, structurally flawed, or mostly unusable."
  },
  "metrics": {
    "coverage": "How well the survey covers the major concepts and research directions that can reasonably be inferred from the supplied topic scope, with appropriate depth. Do not require datasets or benchmarks unless relevant.",
    "relevance": "How consistently the content contributes to understanding the target topic. Necessary background and useful cross-cutting discussion are relevant.",
    "structure": "How clearly the survey groups research directions, explains relationships and differences among them, and develops a coherent narrative through its taxonomy, hierarchy, transitions, and cross-cutting synthesis.",
    "avg": "Arithmetic mean of coverage, relevance, and structure."
  },
  "important_instructions": [
    "Judge the survey as a research survey, not as a retrieval list.",
    "Evaluate strengths and limitations symmetrically and ground both in concrete section headings or passages from the supplied survey.",
    "Do not reward or penalize length, citation count, or number of sections by itself.",
    "A strong survey should organize and synthesize research directions rather than merely enumerate papers, but detailed coverage is not a weakness when it remains coherent and useful.",
    "Allow necessary background and cross-cutting discussion when it helps explain the target topic; do not treat material as off-topic merely because it extends beyond the exact query wording.",
    "Consider datasets, benchmarks, evaluation protocols, limitations, or future directions only when they are substantively relevant to the target topic; their absence must not trigger an automatic score cap.",
    "Judge coverage only from the supplied topic scope and survey. Do not claim field-wide completeness or identify missing literature that cannot reasonably be inferred from the provided scope.",
    "Use the full 1-5 scale according to the rubric. Do not apply any score cap mechanically; scores must reflect the overall balance of demonstrated strengths and limitations.",
    "Separate topical breadth from redundancy: broad but well-organized coverage may score highly, while repeated content without additional synthesis may be penalized.",
    "Return JSON only.",
    "The full inline citations and reference list are included. Evaluate only observable citation practice: whether important scholarly claims are attributed, citations are integrated into the discussion, and citation markers and the reference list are internally usable and consistent.",
    "Do not infer that a citation is fabricated, unverifiable, or semantically unsupported solely from its title, year, style, or unfamiliarity. Full source-level support is evaluated in a separate citation-faithfulness audit.",
    "Citation observations are qualitative context and must not override the three content scores unless citation presentation directly impairs coverage, relevance, organization, or synthesis."
  ],
  "required_json_schema": {
    "coverage": "number from 1 to 5",
    "relevance": "number from 1 to 5",
    "structure": "number from 1 to 5",
    "avg": "number from 1 to 5",
    "strengths": "list of 1-3 concise strengths, each naming a concrete section or passage",
    "limitations": "list of 0-3 concise limitations, each naming a concrete section or passage",
    "citation_observations": "one concise sentence about observable citation practice, or 'Not assessed' in content-only mode",
    "confidence": "one of: low, medium, high",
    "rationale": "a balanced explanation under 80 words"
  },
  "topic_id": "<TOPIC_ID>",
  "topic_name": "<TOPIC_NAME>",
  "topic_scope": "<TOPIC_SCOPE>",
  "survey_text": "<GENERATED_SURVEY_TEXT>"
}
\end{UserPromptBox}

\subsection{Key-Point Recall Evaluation}
\label{app:prompt_key_point_recall}

Key-Point Recall is evaluated in two stages: extracting a fixed set of key
points from the human-written survey and checking their presence in each
generated survey.

\subsubsection{Human-Survey Key-Point Extraction}
\label{app:prompt_key_point_extraction}

\begin{SystemPromptBox}
You are a careful research-survey evaluator. Return exactly one compact JSON object as the first and only output. Do not include chain-of-thought.
\end{SystemPromptBox}

\begin{UserPromptBox}
{
  "task": "Extract key points from a human-written academic survey.",
  "topic_id": "<TOPIC_ID>",
  "topic_name": "<TOPIC_NAME>",
  "definition": "A key point is a self-contained, survey-level scientific point: a major subtopic, method family, comparison axis, finding, dataset/evaluation theme, limitation, or future direction that a generated survey should cover.",
  "instructions": [
    "Extract key points from the human-written survey text only.",
    "Do not output paper titles as standalone key points.",
    "Prefer conceptual points that can be checked against another survey.",
    "Each key point must be one concise sentence.",
    "Avoid duplicates and overly narrow implementation details.",
    "Return exactly 10 key points, balanced across the survey."
  ],
  "required_json_schema": {
    "key_points": [
      {
        "id": "KP01",
        "key_point": "one self-contained sentence"
      }
    ]
  },
  "human_survey_chunks": [
    {
      "chunk_id": 1,
      "text": "<HUMAN_SURVEY_CHUNK_1>"
    },
    {
      "chunk_id": 2,
      "text": "<HUMAN_SURVEY_CHUNK_2>"
    }
  ]
}
\end{UserPromptBox}

\subsubsection{Generated-Survey Key-Point Presence Check}
\label{app:prompt_key_point_presence}

\begin{UserPromptBox}[yuquan prompt]
{
  "task": "QA-style key point presence check.",
  "topic_id": "<TOPIC_ID>",
  "topic_name": "<TOPIC_NAME>",
  "chunk_index": "<CHUNK_INDEX>",
  "chunk_count": "<CHUNK_COUNT>",
  "question": "For each human-survey key point, decide whether the generated survey chunk explicitly or clearly semantically covers it.",
  "scoring_rule": "present=1 only if the generated survey covers the substance of the key point. present=0 if absent, only vaguely related, contradicted, or only mentioned as an unexplained keyword.",
  "required_json_schema": {
    "results": [
      {
        "id": "KP01",
        "present": 0,
        "evidence": "short supporting phrase, or empty if absent"
      }
    ]
  },
  "key_points": [
    {
      "id": "KP01",
      "key_point": "<HUMAN_SURVEY_KEY_POINT>"
    }
  ],
  "generated_survey_chunk": "<GENERATED_SURVEY_CHUNK>"
}
\end{UserPromptBox}

\subsection{\yuquan{Citation Recall and Precision Evaluation}}
\label{app:prompt_citation_faithfulness}

This prompt jointly obtains the group-level judgments used for Citation Recall
and the citation-level judgments used for Citation Precision.

\begin{SystemPromptBox}
You are a strict citation-faithfulness evaluator for academic survey text. Return exactly one compact JSON object as the first and only output. Do not include chain-of-thought.
\end{SystemPromptBox}

\begin{UserPromptBox}[yuquan prompt]
{
  "task": "Q3-3 citation faithfulness evaluation.",
  "metric_definitions": {
    "citation_recall": "For each claim c_i, h(c_i, Ref_i)=1 iff the whole citation group Ref_i is sufficient to support the claim; otherwise 0.",
    "citation_precision": "For each citation r_ik in Ref_i, the citation is relevant/necessary iff it alone supports the claim, OR the full group supports the claim and removing r_ik makes the remaining citation group insufficient."
  },
  "judging_instructions": [
    "Judge conservatively from the claim text and the provided citation entries only.",
    "A citation entry is usually a title/URL/metadata string, not the full paper.",
    "Do not assume support that is not clearly implied by the citation entry.",
    "If a citation entry is missing or unresolved, treat it as not supporting unless another provided entry is sufficient.",
    "For one-citation groups, without_support must be 0 because no citation remains.",
    "Return binary 0/1 values only, with short evidence phrases."
  ],
  "required_json_schema": {
    "results": [
      {
        "claim_id": "C0001",
        "group_support": 0,
        "group_evidence": "short phrase",
        "citations": [
          {
            "ref_id": "1",
            "single_support": 0,
            "without_support": 0,
            "evidence": "short phrase"
          }
        ]
      }
    ]
  },
  "claims": [
    {
      "claim_id": "<CLAIM_ID>",
      "topic_name": "<TOPIC_NAME>",
      "claim": "<CLAIM_TEXT>",
      "citation_group": [
        {
          "ref_id": "<REFERENCE_ID>",
          "entry": "<REFERENCE_ENTRY>",
          "resolved": "<BOOLEAN>"
        }
      ]
    }
  ]
}
\end{UserPromptBox}

\subsection{\yuquan{Passage-Based Citation Support Judgment}}
\label{app:prompt_passage_faithfulness}

\yuquan{The following is the full-text prompt used for the 74 pairs with
retrievable cited sources in Appendix~\ref{app:passage_faithfulness}. It has no
method identifier. The judge returns three labels; the reported binary
Supported rate combines fully and partially supported, while unverifiable
references are marked Unsupported without a model call. The source text is
supplied with page markers of the form [PAGE N].}

\begin{SystemPromptBox}
\yuquan{You are a strict academic citation-faithfulness evaluator.
You receive one survey claim and the complete page-marked text of one cited paper.
Judge only whether that paper supports the claim. Return one JSON object only.
Do not use outside knowledge. Identify the strongest source passage and its page.}
\end{SystemPromptBox}

\begin{UserPromptBox}[yuquan prompt]
{
  "task": "Judge whether the cited paper supports this survey claim.",
  "definitions": {
    "fully_supported": "The paper supports every material assertion in the claim.",
    "partially_supported": "The paper supports only part of the claim or the claim overstates it.",
    "unsupported": "The paper does not support the claim or contradicts it."
  },
  "claim": "<CLAIM_TEXT>",
  "citation_label": "Citation A",
  "rules": [
    "Read the complete supplied paper text before deciding.",
    "Do not infer support from the title or outside knowledge.",
    "For fully_supported or partially_supported, give the strongest supporting passage and page.",
    "For unsupported, give the closest relevant passage and explain the gap.",
    "Copy passages as faithfully as possible from the supplied text."
  ],
  "required_json": {
    "label": "fully_supported | partially_supported | unsupported",
    "passage": "supporting or closest source passage",
    "page": 1,
    "unsupported_parts": ["unsupported portion, if any"],
    "reason": "brief justification",
    "confidence": "high | medium | low"
  },
  "cited_paper_full_text": "<CITED_PAPER_FULL_TEXT>"
}
\end{UserPromptBox}

\end{document}